\newtheorem{fact}{Fact}
\newtheorem{pathology}{Pathology}
\newtheorem{prop}{Proposition}
\newcommand{\real}{\mathbb{R}}
\newcommand{\cA}{\mathcal{A}}
\newcommand{\cS}{\mathcal{S}}
\newcommand{\cM}{\mathcal{M}}
\newcommand{\cP}{P}
\newcommand{\cX}{\mathcal{X}}
\newcommand*{\infn}[1]{\left\|{#1}\right\|_{\infty}}
\newcommand{\smallplus}{\raisebox{.45ex}{$\mkern1mu\scriptscriptstyle+$}}
\newcommand{\smallminus}{\raisebox{.45ex}{$\mkern1mu\scriptscriptstyle-$}}
\newcommand{\expec}{\mathbb{E}}
\DeclareMathOperator{\dom}{dom}
\DeclareMathOperator{\rint}{rint}
\DeclareMathOperator{\argmin}{argmin}
\theoremstyle{plain}
\newtheorem{theorem}{Theorem}[section]
\newtheorem{lemma}[theorem]{Lemma}
\newtheorem{corollary}[theorem]{Corollary}
\theoremstyle{definition}
\newtheorem{definition}[theorem]{Definition}
\newtheorem{assumption}[theorem]{Assumption}
\theoremstyle{remark}
\icmltitlerunning{Why Policy Gradient Algorithms Work for Undiscounted Total-Reward MDPs}
\begin{document}

\twocolumn[
  \icmltitle{Why Policy Gradient Algorithms Work for Undiscounted Total-Reward MDPs}



  \icmlsetsymbol{equal}{*}

  \begin{icmlauthorlist}
    \icmlauthor{Jongmin Lee}{j}
  \icmlauthor{Ernest Ryu}{e}
  \end{icmlauthorlist}

  \icmlaffiliation{j}{Seoul National University, Department of Mathematical Sciences}
  \icmlaffiliation{e}{UCLA, Department of Mathematics}


  \icmlkeywords{Machine Learning, ICML}

  \vskip 0.3in
]



\printAffiliationsAndNotice{}  

\begin{abstract}
The classical policy gradient method is the theoretical and conceptual foundation of modern policy-based reinforcement learning (RL) algorithms. Most rigorous analyses of such methods, particularly those establishing convergence guarantees, assume a discount factor $\gamma < 1$. In contrast, however, a recent line of work on policy-based RL for large language models uses the undiscounted total-reward setting with $\gamma = 1$, rendering much of the existing theory inapplicable. In this paper, we provide analyses of the policy gradient method for undiscounted expected total-reward infinite-horizon MDPs based on two key insights: (i) the classification of the MDP states into recurrent and transient states is invariant over the set of policies that assign strictly positive probability to every action (as is typical in deep RL models employing a softmax output layer) and (ii) the classical state visitation measure (which may be ill-defined when $\gamma = 1$) can be replaced with a new object that we call the transient visitation measure.
\end{abstract}

\section{Introduction}

Since the seminal Policy Gradient Theorem \citep{sutton1999policy}, policy gradient algorithms have been a cornerstone of modern reinforcement learning (RL). Unlike classical dynamic programming approaches, policy gradient methods directly optimize the policy using the gradient of the expected total-reward. These methods, along with their deep learning variants, have achieved remarkable practical success, and their convergence properties have been extensively studied for Markov decision processes (MDP) with discount factor $\gamma < 1$.

More recently, however, a large body of work has emerged on training large language models within the RL framework without discounting ($\gamma = 1$) and arbitrarily long horizons, as in reinforcement learning from human feedback (RLHF) \citep{christiano2017deep} and reinforcement learning with verifiable rewards (RLVR) \citep{guo2025deepseek}. Yet, the convergence of policy gradient methods in this undiscounted total-reward setup remains largely unexplored, and even the policy gradient theorem itself has not been rigorously established in this setup.




\paragraph{Contribution.}
In this work, we study the convergence of policy gradient methods for undiscounted expected total-reward infinite-horizon MDPs. Our analysis is based on two key insights: (i) the classification of the MDP states into recurrent and transient states is invariant over the set of policies that assign strictly positive probability to every action (as is typical in deep RL models employing a softmax output layer) and (ii) the classical state visitation measure (which may be ill-defined when $\gamma=1$) can be replaced with a new object that we call the \emph{transient visitation measure}. Leveraging these insights, we establish convergence guarantees for projected policy gradient and natural policy gradient algorithms in the tabular setting. We also extend the analysis to the generative model setting and derive sample complexity for stochastic natural policy gradient.


\subsection{Related works}

\paragraph{Undiscounted total-reward infinite-horizon MDP.}
The setup of undiscounted total-reward MDP was first introduced by \cite{savage1965gamble}. For the well-definedness of the value function, \cite{schal1983stationary} considered the finiteness of $V^\pi_+$ and $V^\pi_-$ (defined in the next section) and the existence of an optimal policy was first proved by \cite{van1981stochastic}. With different additional assumptions, three models of total-reward setup have been proposed and studied: stochastic shortest path model \citep{Eaton1962OptimalPS}, positive model \citep{blackwell1967positive}, and negative model \citep{strauch1966negative}. The stochastic shortest path model assumes single absorbing terminal state and the existence of a policy that reaches the terminal state with probability $1$ from any initial state \citep{bertsekas1991analysis}.
The positive model assumed $V_+^\pi$ is finite, and for each $s$, there exist $a$ with $r(s,a) \ge 0$ \citep[Section 7.2]{Puterman2014}. The negative model assumed $V_+^\pi=0$ and there exist $\pi $ for which $V_-^\pi(s)> \infty$ for all $s$ \citep[Section 7.3]{Puterman2014}. In this work, the undiscounted total-reward model with the assumption that $V^\pi$ is finite, motivated by modern reinforcement learning frameworks and needed to establish the transient policy gradient, does not fall into previous categories. 

\paragraph{Policy gradient method.}
Policy gradient methods \citep{williams1992simple,sutton1999policy,konda1999actor,kakade2001natural} are foundational reinforcement learning algorithms, commonly implemented with deep neural networks for policy parameterization \citep{schulman2015trust,schulman2017proximal}. In line with their practical success, convergence and sample complexity of policy gradient variants have been extensively studied across settings. In discounted total-reward infinite-horizon MDP,  \cite{agarwal2021theory, xiao2022convergence, bhandari2024global,mei2020global} analyzed convergence of projected policy gradient and naive policy gradient with softmax parametrization. The natural policy gradient, introduced by \cite{kakade2001natural} and viewable as a special case of mirror descent \citep{shani2020adaptive}, has been analyzed by \cite{agarwal2021theory, cen2022fast,xiao2022convergence, lan2023policy}. In the average reward MDP, where the goal is to maximize long-term, steady-state performance,
convergence and sample complexity results have been established by \cite{even2009online,murthy2023convergence,bai2024regret,kumar2024global, li2025stochastic}, and related analyses exist for the finite horizon setup as well \citep{hambly2021policy,guo2022theoretical,klein2023beyond}. 

In undiscounted total-reward infinite-horizon MDP, however, there are few results on policy gradient methods. Since \citet{sutton1999policy} established the policy gradient theorem only for the discounted total-reward and average reward MDPs, \citet{bojun2020steady,ribera2025reinforcement} analyze policy gradients for the undiscounted total-reward random time horizon MDP. 
Specifically, \citet{bojun2020steady} considers an episodic learning process that can be viewed as an ergodic Markov chain with finite episode length, and establish a policy gradient theorem via the steady state distribution. \citet{ribera2025reinforcement} consider trajectory dependent random termination times and prove a policy gradient theorem with an almost surely finite termination time. We note that neither work further analyzes the convergence of policy gradient methods, and their setups and assumptions differ from ours, as shown in the next section.

\section{Undiscounted expected total-reward infinite-horizon MDPs}\label{sec:2}
In this work, we consider undiscounted total-reward infinite-horizon Markov decision processes (MDPs). We review basic definitions and assumptions of undiscounted MDPs and reinforcement learning (RL). For further details, we refer the readers to references such as \citep[Section 7]{Puterman2014} or \citep{sutton2018reinforcement}.

\paragraph{Undiscounted Markov decision processes.}
Let $\cM(\cX)$ be the space of probability distributions over a set $\cX$. Write $(\cS, \cA, P, r, \mu)$ to denote the infinite-horizon undiscounted MDP with finite state space $\cS$, finite action space $\cA$, transition matrix $P\colon \cS \times \cA \rightarrow \cM(\cS)$, bounded reward $r\colon  \cS \times \cA \rightarrow [-R,R]$ with some $R<\infty$, and initial state distribution $\mu \in \cM(\cS)$. We say the reward is nonnegative if $r(s,a) \ge 0$ for all $ s \in \cS,a \in \cA$. Denote $\pi\colon \cS \rightarrow \cM(\cA)$ for a policy. 
Define
\begin{alignat*}{3}
&\Pi &&= \text{set of all policies}=\cM(\cA)^\cS,\\    
&\Pi_{+} &&= \{ \pi\in \Pi \,|\, \pi (a \,|\, s) >0 \text{ for all } s,a\}.
\end{alignat*}
So, $\Pi_+$ is the (relative) interior of $\Pi$. 

Let
\begin{align*}
V_{\smallplus}^\pi(s) &= \lim_{T \rightarrow \infty}\expec_\pi\bigg[\sum^{T-1}_{i=0} \max\{r(s_i,a_i),0\}  \,\Big|\, s_{0}=s\bigg]\\
V_{\smallminus}^\pi(s) &= \lim_{T \rightarrow \infty}\expec_\pi\bigg[\sum^{T-1}_{i=0}\max\{- r(s_i,a_i),0\}  \,\Big|\, s_{0}=s\bigg],
\end{align*}
where $\expec_{\pi}$ denotes expectation over trajectories 
$(s_0, a_0, s_1, a_1, \dots, s_{T-1}, a_{T-1}) $ 
induced by the policy $\pi$.  
Since both summands are nonnegative, the monotone convergence theorem guarantees that each limit exists (possibly infinite). To ensure the well-definedness of the value function $V^\pi$, we impose the following assumption.  
\begin{assumption}[Finiteness of value function]\label{assump_total}
$V_+^\pi(s)<\infty,\,V_-^\pi(s)<\infty$ for all $\pi \in \Pi$ and $s\in \cS$.
\end{assumption}


While bounded rewards ensure the well-definedness of the value function and policy gradient in discounted and average-reward settings, our undiscounted total-reward setup requires Assumption~\ref{assump_total}. As noted, in recent training of large language models under the frameworks of reinforcement learning with human feedback (RLHF) and reinforcement learning with verifiable reward (RLVR), finite rewards are assigned once at the end of the trajectory, and Assumption~\ref{assump_total} naturally holds.

Under Assumption~\ref{assump_total}, we define the state value function as 
\[V^\pi = V_{\smallplus}^\pi - V_{\smallminus}^\pi,\]
and $V^\pi_\mu = \expec_{s \sim \mu} [V^{\pi}(s)]$ where $\mu \in \cM(\cS)$ is the initial state distribution. Likewise, define
\[
Q_{\smallplus}^\pi(s,a) =\lim_{T \rightarrow \infty}\expec_\pi\bigg[\sum^{T-1}_{i=0} \max\{r(s_i,a_i),0\}  \,\Big|\, s_{0}, a_{0}\!=\!s, a\bigg]\!, 
\]
and $Q_{\smallminus}^\pi$ analogously. Likewise, define  $Q^\pi= Q_{\smallplus}^\pi - Q_{\smallminus}^\pi$. Then, $Q^\pi$ is well-defined under Assumption~\ref{assump_total}, and $Q^\pi = \cP V^\pi +r$ where $P \in \real^{|\cS|\!|\cA| \!\times \!|\cS|} $, and $V^\pi(s) = \mathbb{E}_{a \sim \pi(\cdot\,|\,s) }[Q^\pi(s,a)]$  by definition.

We say $V^{\star}$ is optimal value function if $V^{\star}(s)=\max_{\pi}V^{\pi}(s)$ for all $s \in \cS$ and $\pi$ is an $\epsilon$-optimal policy if $\infn{V^{\star}-V^{\pi}} \le \epsilon$. It is known that the optimal value function and an optimal policy always exist in the undiscounted total-reward setup with finite state and action spaces \citep[Theorem 7.1.9]{Puterman2014}.
(As a technical detail, Theorem~7.1.9 of \cite{{Puterman2014}} is stated in terms of the set of all history-dependent policies, but the proof also works for the set of all stationary policies, which is our focus in this work.)

 For notational conciseness, we write $r^{\pi}(s)=\mathbb{E}_{a \sim \pi(\cdot\,|\,s) }\left[r(s,a)\right]$ for the reward induced by policy $\pi$ and $\cP^\pi(s,s')$ defined as 
 \[\cP^{\pi}(s, s')=
\mathrm{Prob}(s\rightarrow s'\,|\,
a \sim \pi(\cdot\,|\,s), s'\sim P(\cdot\,|\,s,a))\]
is the transition probability induced by policy $\pi$.
Then, we can write
$V^\pi =\sum^\infty_{n=0} (P^\pi)^n r^\pi $. 


In Section~\ref{s:pathology}, we discuss the continuity of the mapping $\pi\mapsto V^\pi$. Since $|\cS|$ and $|\cA|$ are finite, we can identify $\pi$ and $V^\pi$ as finite-dimensional vectors, namely, as $\pi\in \mathbb{R}^{|\cS|\times |\cA|}$ and $V^\pi\in \mathbb{R}^{|\cS|}$.
Therefore, continuity of $\pi\mapsto V^\pi$ can be interpreted as continuity of the mapping from $\mathbb{R}^{|\cS|\times |\cA|}$ to $\mathbb{R}^{|\cS|}$ under the usual metric.

\section{Troubles with undiscounted total-reward MDPs}
\label{s:pathology}






\begin{figure}
    \centering
    \begin{tikzpicture}[>=stealth,shorten >=1pt,auto,node distance=1cm]
  \tikzset{state/.style={circle,draw,minimum size=18pt,inner sep=2pt}}
  \node[state]                 (s1) {$s_1$};
  \node[state]         [right=of s1] (s0) {$s_0$};
  \node[state]        [right=of s0] (s2) {$s_2$};
  \node[state] [left=of s1] (s3) {$s_3$};
  \node[state] [right=of s2] (s4) {$s_4$};

  \path[->]
    (s0) edge              node[above]  {$0$} (s1)
    (s0) edge              node[above]  {$0$} (s2)
    (s1) edge              node[above]  {$-1$} (s3)
    (s2) edge              node[above] {$1$} (s4)
    (s3) edge[loop above]  node        {0} ()
    (s4) edge[loop above]  node        {0} ()
    (s1) edge[loop above]  node        {0} ()
    (s2) edge[loop above]  node        {0} ();
\end{tikzpicture}
    \caption{Pathological MDP: The value function is discontinuous at the optimal policy, and the optimal action-value function does not specify the optimal policy
    }
    \label{fig:example}
\end{figure}
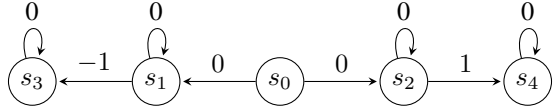





In this section, we point out two pathologies that can arise in total-reward MDPs that do not arise in the discounted setup.


\begin{pathology}\label{discontinuity}
The value function $V^\pi$ may be a discontinuous function of $\pi\mathrel \in \Pi$, even when $|\mathcal{S}|$ and $|\mathcal{A}|$ are finite and $V^\pi$ is finite.
\end{pathology}



In the example of Figure~\ref{fig:example}, the optimal action at state $s_1$ is to remain at $s_1$. Under the optimal policy, taking this optimal action, we have $V^\star(s_1)=0$, but any policy assigning a non-zero probability to the other action, transitioning to $s_3$, yields $V^{\pi}(s_1)=-1$. This example illustrates that the value function can be discontinuous in $\pi$, and a policy gradient method cannot be expected to succeed in the presence of such discontinuities, while the discounted-reward setup guarantees differentiability of the value function. 
We provide the framework to address this pathology in Section~\ref{sec:recurrent_transeint}.



\begin{pathology}\label{optimality}
The optimal action-value function $Q^\star$ does not, by itself, specify the optimal policy $\pi^\star$. In particular, a policy $\pi$ satisfying $\pi(s)\in \arg\max_{a\in \mathcal{A}} Q^\star(s,a)$ for all $s\in \mathcal{S}$ may not be optimal.
\end{pathology}

Again, in the example of Figure \ref{fig:example}, the optimal action at $s_2$ is to transition to $s_4$. However, the non-optimal policy $\pi$ that stays at $s_2$ with probability one also satisfies $Q^\star(s_2,\pi(s_2))=V^\star(s_2)=+1$. In other words, the policy $\pi(s)\in \arg\max_{a\in \mathcal{A}} Q^\star(s,a)$ can be non-optimal, and it is known that additional conditions are needed to specify an optimal policy in this setup \citep[Theorem 7.25]{Puterman2014}. This example illustrates that value-based methods such as value iteration or Q-learning may fail to provide an optimal policy even if they approximate the optimal $Q$-function well. In this work, we study the policy gradient method, a policy-based RL method, and show that it is not subject to this issue.





\section{Recurrent-transient theory of policy gradients}\label{sec:recurrent_transeint}

In this section, we apply the recurrent-transient theory of Markov chains to undiscounted total-reward MDPs, introduce a new object that we term the \emph{transient visitation measure}, and establish a policy gradient theorem.

\subsection{Recurrent-transient classification of states}

\begin{definition}
Given a policy $\pi\in \Pi$, 
a state $s\in \cS$ is recurrent if its return time starting from $s$ is finite with probability $1$. Otherwise, $s$ is transient.
\end{definition}
Equivalently, if $n_s$ is the random variable representing the number of visits to state $s$ starting from $s$, then $s$ is recurrent if and only if $\expec_\pi[n_s]=\sum_{k=0}^{\infty} (P^\pi)^{k}(s, s)=\infty,
$ and otherwise it is transient \citep[Theorem 3.1.3]{bremaud2013markov}. 

Let $\pi \in \Pi$. For a given $P^\pi$, the states can be classified into recurrent and transient states, and the Markov chain can be canonically represented as follows \citep[Section 3.1.3]{bremaud2013markov}:
$$
P^\pi =
\begin{bmatrix}
\bar{R}^\pi & 0 \\
\bar{S}^\pi & \bar{T}^\pi
\end{bmatrix}, \qquad (P^\pi)^n =
\begin{bmatrix}
(\bar{R}^\pi)^n & 0 \\
\bar{S}_n^\pi & (\bar{T}^\pi)^n
\end{bmatrix},
$$
where $\bar{R}^\pi$, $\bar{T}^\pi$, and $\bar{S}^\pi$ represent transition probabilities among the recurrent states, among the transient states, and from transient to recurrent states, respectively. This canonical representation exists for any $\pi \in \Pi$, but the recurrent-transient classification of states and the corresponding canonical decomposition of $P^\pi$ may vary.


However, as the following proposition shows, the recurrent-transient classification remains invariant for all $\pi \in \Pi_+$. Recall that $\Pi_+ \subset \Pi$ denotes the set of policies that assign strictly positive probability to every action.

\begin{prop}\label{prop:classification}
The recurrent-transient classification of the states does not depend on the choice of $\pi\in \Pi_+$.
\end{prop}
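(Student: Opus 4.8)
The plan is to show that every $\pi \in \Pi_+$ induces a Markov chain with the \emph{same reachability structure}, and that the recurrent--transient classification of a finite-state Markov chain depends only on this structure.

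First, I would observe that for any $\pi \in \Pi_+$ and any states $s,s'\in\cS$,
\[
\cP^\pi(s \to s') = \sum_{a \in \cA} \pi(a\,|\,s)\, P(s'\,|\,s,a),
\]
and since $\pi(a\,|\,s) > 0$ for every $a$, this quantity is strictly positive if and only if $P(s'\,|\,s,a) > 0$ for some $a \in \cA$ --- a condition not involving $\pi$. Hence the directed support graph $G$ on vertex set $\cS$, with an edge $s \to s'$ whenever $P(s'\,|\,s,a)>0$ for some $a$, is identical for every $\pi \in \Pi_+$. Consequently, for each $n \ge 1$ we have $(P^\pi)^n(s,s') > 0$ if and only if there is a walk of length $n$ from $s$ to $s'$ in $G$; in particular the accessibility relation ``$s'$ is reachable from $s$'' (i.e.\ $(P^\pi)^n(s,s')>0$ for some $n\ge 0$) coincides for all $\pi \in \Pi_+$, and I will simply call it reachability in $G$.

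Second, I would invoke the standard fact underlying the canonical decomposition \cite[Section 3.1.3]{bremaud2013markov}: in a finite-state Markov chain, $s$ is recurrent if and only if every state reachable from $s$ can in turn reach $s$ (equivalently, $s$ lies in a closed communicating class). If one wishes to derive this from the definition given, the short argument is: if $s$ is recurrent, the set $C = \{s' : s \to s' \text{ in } G\}$ is closed (no edge of $G$ leaves $C$) and every state of $C$ communicates with $s$, so $C$ is a finite closed communicating class, all of whose states are recurrent; conversely, if some $s'$ reachable from $s$ cannot reach $s$, then with positive probability the chain moves from $s$ toward $s'$ and never returns, giving $\expec_\pi[n_s] < \infty$ and hence $s$ transient. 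Either way, recurrence of $s$ is a property of the reachability relation in $G$ alone. Combining the two steps, since $G$ and its reachability relation are the same for all $\pi \in \Pi_+$, so is the induced partition of $\cS$ into recurrent and transient states.

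I expect the main obstacle to be the second step --- namely pinning down the purely graph-theoretic characterization of recurrence and reconciling it cleanly with the probabilistic definition stated in the paper. Once that characterization is available, the first step (invariance of the support graph over $\Pi_+$) is immediate, and the proposition follows at once.
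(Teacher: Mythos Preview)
Your proposal is correct and follows essentially the same route as the paper: first show that the one-step support (hence the communication relation and the closedness of classes) is identical for all $\pi\in\Pi_+$, then invoke the finite-state characterization that a communicating class is recurrent if and only if it is closed. The paper's version is terser---it simply cites \cite[Theorem~3.1.6 and Theorem~3.2.8]{bremaud2013markov} for the class-solidarity and closed-iff-recurrent facts---whereas you sketch the probabilistic justification yourself, but the logical structure is the same.
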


We provide further clarification. For any $\pi\in \Pi_+$, the recurrent-transient classification is determined by the transition kernel $P$, not on the particular choice of $\pi\in \Pi_+$. While a policy $\pi\in \Pi\setminus\Pi_+$ (a policy that assigns zero probability to some actions) may induce a different classification, the algorithms we consider (as well as deep RL algorithms employing a softmax output layer) should be viewed as searching over $\Pi_+$ rather than the full set $\Pi$.

The canonical recurrent-transient decomposition provides the foundation for our analysis of the undiscounted expected total-reward setting. One key consequence of this classification is that the reward at any recurrent state must be zero if the value functions are finite.

\begin{lemma}\label{lem:recur_reward}
Under Assumption~\ref{assump_total} (finiteness of value function), for any $\pi \in \Pi$, if $s$ is a recurrent state, then $r^\pi(s) = 0$.
\end{lemma}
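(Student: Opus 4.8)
The plan is to split the one-step reward into its positive and negative parts and use the fact that a recurrent state is, in expectation, visited infinitely often: if the expected reward collected at such a state were nonzero, then $V^\pi_{\smallplus}(s)$ or $V^\pi_{\smallminus}(s)$ would be infinite, contradicting Assumption~\ref{assump_total}.

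Concretely, I would first set $r^\pi_{\smallplus}(s)=\expec_{a\sim\pi(\cdot\,|\,s)}[\max\{r(s,a),0\}]$ and $r^\pi_{\smallminus}(s)=\expec_{a\sim\pi(\cdot\,|\,s)}[\max\{-r(s,a),0\}]$, which are nonnegative and satisfy $r^\pi=r^\pi_{\smallplus}-r^\pi_{\smallminus}$. Interchanging summation and expectation (legitimate by the monotone convergence theorem, since every summand is nonnegative), the same computation that gives $V^\pi=\sum_{n=0}^\infty (P^\pi)^n r^\pi$ yields the termwise representations $V^\pi_{\smallplus}=\sum_{n=0}^\infty (P^\pi)^n r^\pi_{\smallplus}$ and $V^\pi_{\smallminus}=\sum_{n=0}^\infty (P^\pi)^n r^\pi_{\smallminus}$.

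Next I would read off the $s$-th coordinate and retain only the diagonal contribution: since all entries of $(P^\pi)^n$ and all values of $r^\pi_{\smallplus}$ are nonnegative,
\[
V^\pi_{\smallplus}(s)\;=\;\sum_{n=0}^\infty \sum_{s'\in\cS}(P^\pi)^n(s,s')\,r^\pi_{\smallplus}(s')\;\ge\;r^\pi_{\smallplus}(s)\sum_{n=0}^\infty (P^\pi)^n(s,s).
\]
Because $s$ is recurrent, the characterization recalled after the definition of recurrence gives $\sum_{n=0}^\infty (P^\pi)^n(s,s)=\expec_\pi[n_s]=\infty$. Hence $r^\pi_{\smallplus}(s)>0$ would force $V^\pi_{\smallplus}(s)=\infty$, contradicting Assumption~\ref{assump_total}; so $r^\pi_{\smallplus}(s)=0$. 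The identical argument applied to $V^\pi_{\smallminus}$ gives $r^\pi_{\smallminus}(s)=0$, and therefore $r^\pi(s)=r^\pi_{\smallplus}(s)-r^\pi_{\smallminus}(s)=0$.

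There is essentially no hard step here; the only points that need care are the justification of the termwise series representation of $V^\pi_{\smallplus}$ and $V^\pi_{\smallminus}$ (a routine monotone-convergence argument, where working with the nonnegative split rewards is precisely what avoids cancellation or $\infty-\infty$ issues) and the invocation of the recurrence criterion $\sum_{n}(P^\pi)^n(s,s)=\infty$, which is already supplied in the excerpt. I would also remark that the argument in fact establishes the stronger conclusion $r(s,a)=0$ for every action $a$ with $\pi(a\,|\,s)>0$, not merely that the $\pi$-averaged reward vanishes.
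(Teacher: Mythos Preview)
Your proof is correct and follows the same approach as the paper: both use the recurrence criterion $\sum_{n\ge 0}(P^\pi)^n(s,s)=\infty$ together with the finiteness of $V^\pi_{\smallplus}$ and $V^\pi_{\smallminus}$ from Assumption~\ref{assump_total}. The paper compresses the argument into a single sentence, whereas you spell out the split $r^\pi=r^\pi_{\smallplus}-r^\pi_{\smallminus}$ and the diagonal lower bound explicitly; your remark that this in fact yields $r(s,a)=0$ for every $a$ with $\pi(a\,|\,s)>0$ is a correct strengthening (and is exactly what the paper later states separately as Lemma~\ref{lem:8} for $\pi\in\Pi_+$).
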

Now, define the \emph{transient matrix}:
\[
T^\pi=\begin{bmatrix}
    0&0\\0&\bar{T}^\pi
\end{bmatrix} \] 
i.e.,
\[
T^\pi(s_1,s_2)=
\left\{\begin{array}{ll}
P^\pi(s_1,s_2)&\text{if $s_1,s_2$ are both transient}\\
0&\text{otherwise.}
\end{array}\right.
\]
The transient matrix $T^\pi$ is known to have the following spectral property:
\begin{fact}\label{fact:spec}\citep[Lemma 8.3.20]{berman1994nonnegative}\label{fact:trans_spec}
Spectral radius of $T^\pi$ is strictly less than $1$.
\end{fact}

This is an important consequence of the recurrent-transient decomposition because the full probability matrix $P^\pi$ will necessarily have a unit spectral radius, and we will use this condition to argue certain convergence results.

By Lemma~\ref{lem:recur_reward}, we have $(\cP^\pi)^i r^\pi = (T^\pi)^i r^\pi$ for $i\in \mathbb{N}$, which implies 
\[
V^\pi =\sum_{i=0}^\infty (P^\pi)^i r^\pi= \sum_{i=0}^\infty (T^\pi)^i r^\pi.
\]
By Fact~\ref{fact:trans_spec} and the classical Neumann series argument, we have $(I-T^\pi)^{-1} = \sum_{i=0}^\infty (T^\pi)^i$.
These lead to the following reformulation of the value function.
\begin{lemma}\label{lem:value_func}
Under Assumption~\ref{assump_total}, $V^\pi  = (I-T^\pi)^{-1} r^\pi$ for any $\pi \in \Pi$.
\end{lemma}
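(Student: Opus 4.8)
The plan is to combine the three ingredients already assembled in the text above: the series representation $V^\pi=\sum_{i=0}^\infty (P^\pi)^i r^\pi$, Lemma~\ref{lem:recur_reward}, and Fact~\ref{fact:trans_spec}. The only substantive step is to justify replacing $P^\pi$ by the transient matrix $T^\pi$ term-by-term inside the series; once that is done, summing the Neumann series for $T^\pi$ yields the claim.

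Fix $\pi\in\Pi$ and order the states so that the recurrent states precede the transient ones, giving the canonical form
\[
P^\pi=\begin{bmatrix}\bar R^\pi & 0\\ \bar S^\pi & \bar T^\pi\end{bmatrix},
\qquad
(P^\pi)^i=\begin{bmatrix}(\bar R^\pi)^i & 0\\ \bar S_i^\pi & (\bar T^\pi)^i\end{bmatrix}.
\]
By Lemma~\ref{lem:recur_reward}, $r^\pi(s)=0$ at every recurrent state, so in this ordering $r^\pi=\begin{bmatrix}0\\ \bar r^\pi\end{bmatrix}$ with $\bar r^\pi$ indexed by the transient states. The vanishing recurrent block of $r^\pi$ together with the zero upper-right block of $(P^\pi)^i$ then gives $(P^\pi)^i r^\pi=\begin{bmatrix}0\\(\bar T^\pi)^i\bar r^\pi\end{bmatrix}$, and the same computation with $T^\pi=\begin{bmatrix}0&0\\0&\bar T^\pi\end{bmatrix}$ gives $(T^\pi)^i r^\pi=\begin{bmatrix}0\\(\bar T^\pi)^i\bar r^\pi\end{bmatrix}$. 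Hence $(P^\pi)^i r^\pi=(T^\pi)^i r^\pi$ for all $i\ge 0$ --- this is precisely the identity $(\cP^\pi)^i r^\pi=(T^\pi)^i r^\pi$ noted before the statement, and it may alternatively be proved by induction on $i$ using that recurrent states never transition into transient states.

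Summing over $i$, and using that $\sum_{i=0}^\infty (P^\pi)^i r^\pi$ converges absolutely (entrywise) by Assumption~\ref{assump_total}, we obtain $V^\pi=\sum_{i=0}^\infty (T^\pi)^i r^\pi$. By Fact~\ref{fact:trans_spec}, $\rho(T^\pi)<1$, so the Neumann series $\sum_{i=0}^\infty (T^\pi)^i$ converges to $(I-T^\pi)^{-1}$, and therefore $V^\pi=(I-T^\pi)^{-1} r^\pi$.

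I expect no real obstacle here: the argument is short given the preceding lemmas. The one point deserving care is the reduction $(P^\pi)^i r^\pi=(T^\pi)^i r^\pi$, which genuinely relies on both the block structure of the canonical decomposition and on $r^\pi$ being supported on the transient states (Lemma~\ref{lem:recur_reward}); without the latter the off-diagonal block $\bar S_i^\pi$ would contribute and the identity would fail. The convergence of the $P^\pi$-series and the validity of the Neumann expansion are routine once Assumption~\ref{assump_total} and Fact~\ref{fact:trans_spec} are in hand.
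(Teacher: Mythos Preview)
Your proposal is correct and follows exactly the approach the paper outlines in the paragraph preceding the lemma: use Lemma~\ref{lem:recur_reward} to replace $(P^\pi)^i r^\pi$ by $(T^\pi)^i r^\pi$, then sum the Neumann series via Fact~\ref{fact:trans_spec}. Your write-up merely makes the block computation explicit, which is a welcome addition but not a different argument.
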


\subsection{Continuity of $V^\pi$ on $\Pi_+$}
Returning to the pathological MDP of Figure~\ref{fig:example}, we observe that (1) at the discontinuous policy, the recurrent-transient classification of states changes, and (2) this transition occurs only on $\Pi\setminus\Pi_+$. Based on these observations and Proposition \ref{prop:classification}, we obtain the following continuity property of $V^\pi$.

\begin{lemma}\label{lem:continuity}
Under Assumption~\ref{assump_total}, the mappings $\pi \mapsto V^\pi$ and $\pi \mapsto V^\pi_\mu$ are continuous on $\Pi_+$ for a given $\mu$.  
\end{lemma}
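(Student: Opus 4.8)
The plan is to run everything through the closed-form expression $V^\pi = (I-T^\pi)^{-1} r^\pi$ from Lemma~\ref{lem:value_func}, using the invariance of the recurrent--transient classification on $\Pi_+$ (Proposition~\ref{prop:classification}) to guarantee that every ingredient of this formula depends continuously on $\pi$. The conceptual point is that, although $T^\pi$ is in general defined through a partition of $\cS$ that depends on $\pi$, on $\Pi_+$ this partition is frozen, which is exactly what rescues continuity — and what breaks in the pathological example of Figure~\ref{fig:example}.

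Concretely, I would first invoke Proposition~\ref{prop:classification} to fix a single partition $\cS = \cS_R \sqcup \cS_T$ into recurrent and transient states that is valid for every $\pi \in \Pi_+$. Next I would note that $\pi \mapsto P^\pi$ and $\pi \mapsto r^\pi$ are continuous — indeed affine — since $P^\pi(s\to s') = \sum_{a} \pi(a\mid s)\,P(s'\mid s,a)$ and $r^\pi(s) = \sum_a \pi(a\mid s)\,r(s,a)$ are linear in the entries of $\pi$. Because the index set $\cS_T$ does not vary over $\Pi_+$, the matrix $T^\pi$ is obtained from $P^\pi$ by a fixed zeroing-out of entries, so $\pi \mapsto T^\pi$ is continuous on $\Pi_+$.

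Then I would use Fact~\ref{fact:trans_spec}: the spectral radius of $T^\pi$ is strictly less than $1$, so $I-T^\pi$ is invertible for every $\pi\in\Pi_+$ (it restricts to the identity on $\cS_R$ and to the invertible matrix $I-\bar{T}^\pi$ on $\cS_T$). Since matrix inversion is continuous on the open set of invertible matrices — by Cramer's rule each entry of $A^{-1}$ is a rational function of the entries of $A$ with nonvanishing denominator $\det A$ — the map $\pi \mapsto (I-T^\pi)^{-1}$ is continuous on $\Pi_+$. Composing, $V^\pi = (I-T^\pi)^{-1} r^\pi$ is a product of continuous matrix- and vector-valued maps, hence continuous on $\Pi_+$; and $V^\pi_\mu = \sum_{s} \mu(s)\,V^\pi(s) = \langle \mu, V^\pi\rangle$ is continuous as the composition of $\pi\mapsto V^\pi$ with a fixed linear functional.

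I expect the only real obstacle to be conceptual rather than computational: the entire argument hinges on the partition $\cS_R\sqcup\cS_T$ being constant over $\Pi_+$, i.e., on Proposition~\ref{prop:classification}. On all of $\Pi$, the object $T^\pi$ is built from a $\pi$-dependent bookkeeping of which states are transient, and a change in that classification — which, as noted, can occur only on $\Pi\setminus\Pi_+$ — is precisely what produces the discontinuity of Figure~\ref{fig:example}. One should also record the harmless technicality that $\Pi_+$ is given the subspace topology inherited from $\mathbb{R}^{|\cS|\times|\cA|}$, so that the restrictions of the affine maps above are automatically continuous there and "continuous on $\Pi_+$" means exactly what is claimed.
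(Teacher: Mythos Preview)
Your proposal is correct and follows essentially the same route as the paper: invoke Proposition~\ref{prop:classification} to freeze the transient class on $\Pi_+$, deduce continuity of $\pi\mapsto T^\pi$, and then apply Lemma~\ref{lem:value_func} together with the continuity of matrix inversion to conclude. The paper's proof is a terse two-sentence version of exactly this argument; your write-up simply fills in the intermediate justifications (affinity of $P^\pi$ and $r^\pi$, Fact~\ref{fact:trans_spec} for invertibility, Cramer's rule for continuity of inversion) that the paper leaves implicit.
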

(Recall $V^\pi_\mu = \mathbb{E}_{s\sim\mu} [V^\pi(s)]$.) In other words, the discontinuity described in Pathology~\ref{discontinuity} can arise only on the boundary of $\Pi$. Consequently, in our policy gradient methods, we restrict the search to policies $\pi \in \Pi_+$.


Next, define
\[
V_{+,\mu}^\star=
\sup_{\pi \in \Pi_+} \mathbb{E}_{s\sim\mu} [V^\pi(s)],\qquad
 V_{\mu}^\star=
 \max_{\pi \in \Pi}\mathbb{E}_{s\sim\mu} [V^\pi(s)],
\]
where $\mu$ is an initial state distribution.
By definition, $V_{\mu}^\star \ge V_{+,\mu}^\star$. Since our policy gradient methods search over $\Pi_+$, they should be thought of as optimizing for $V_{+,\mu}^\star$. However, the distinction between $V_{+,\mu}^\star$ and $V_{\mu}^\star$ disappears when rewards are nonnegative.  


\begin{lemma}\label{lem:reward}
 Under Assumption~\ref{assump_total}, if rewards are nonnegative, then the map  $\pi\mapsto V^\pi_\mu$ are continuous at optimal policies, and $V^\star_{+,\mu}= V^\star_\mu$ for a given $\mu$. 
\end{lemma}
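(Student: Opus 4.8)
The plan is to handle the two claims in sequence: first the equality $V^\star_{+,\mu} = V^\star_\mu$, and then the continuity of $\pi \mapsto V^\pi_\mu$ at optimal policies. The key structural fact, available from Lemma~\ref{lem:continuity}, is that $\pi\mapsto V^\pi_\mu$ is already continuous on $\Pi_+$; since $\Pi_+$ is dense in $\Pi$, we have $V^\star_\mu \ge V^\star_{+,\mu} = \sup_{\pi\in\Pi_+}V^\pi_\mu \ge \limsup$ along any sequence in $\Pi_+$ approaching a given $\pi_0 \in \Pi$, but the reverse bound $V^\star_{+,\mu}\ge V^\star_\mu$ is exactly what fails in general (cf.\ Pathology~\ref{discontinuity}) and is where nonnegativity of rewards must enter.

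For the equality, I would argue that for nonnegative rewards, every $V^\pi$ is monotone under ``pushing mass toward positive actions,'' and more concretely that one can approximate any policy $\pi_0\in\Pi$ (in particular an optimal one) by a sequence $\pi_k \in \Pi_+$ with $V^{\pi_k}_\mu \to V^{\pi_0}_\mu$. The cleanest route: take $\pi_k = (1-\tfrac1k)\pi_0 + \tfrac1k \pi_{\mathrm{unif}}$, which lies in $\Pi_+$. With nonnegative rewards, $V^\pi = \sum_{i=0}^\infty (P^\pi)^i r^\pi \ge 0$ entrywise, and each truncation $\sum_{i=0}^{N}(P^{\pi_k})^i r^{\pi_k} \to \sum_{i=0}^N (P^{\pi_0})^i r^{\pi_0}$ as $k\to\infty$ by continuity of finite matrix products; combined with a uniform tail bound coming from Assumption~\ref{assump_total} (finiteness of $V_+$, which dominates the tails uniformly — this is the step needing care, since the transient structure and hence the Neumann tail estimate is uniform on $\Pi_+$ by Proposition~\ref{prop:classification} and Fact~\ref{fact:trans_spec}, but $\pi_0$ itself may be on the boundary), we get $V^{\pi_k}_\mu \to V^{\pi_0}_\mu$. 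Applying this with $\pi_0$ an optimal policy gives $V^\star_{+,\mu} \ge \lim_k V^{\pi_k}_\mu = V^\star_\mu$, and the reverse inequality is immediate, so $V^\star_{+,\mu} = V^\star_\mu$.

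For continuity at optimal policies, let $\pi^\star \in \Pi$ be optimal and let $\pi_k \to \pi^\star$ be arbitrary (not necessarily in $\Pi_+$). One direction is upper semicontinuity: $\limsup_k V^{\pi_k}_\mu \le V^\star_\mu$ trivially since $V^{\pi_k}_\mu \le V^\star_\mu$. For the lower bound I would use nonnegativity: $V^{\pi_k}_\mu = \sum_{i=0}^\infty (P^{\pi_k})^i r^{\pi_k} \ge \sum_{i=0}^N (P^{\pi_k})^i r^{\pi_k}$ for every fixed $N$ (each term nonnegative), and the right side converges to $\sum_{i=0}^N (P^{\pi^\star})^i r^{\pi^\star}$ as $k\to\infty$ by continuity of finite sums of matrix products; letting $N\to\infty$ afterward (monotone convergence, using Assumption~\ref{assump_total}) gives $\liminf_k V^{\pi_k}_\mu \ge V^{\pi^\star}_\mu = V^\star_\mu$. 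Combining, $V^{\pi_k}_\mu \to V^\star_\mu = V^{\pi^\star}_\mu$, which is continuity at $\pi^\star$.

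The main obstacle I anticipate is the uniform tail control needed to pass from convergence of truncated sums to convergence of the full series when the limiting policy $\pi_0$ (or $\pi^\star$) lies on $\partial\Pi$: there, the recurrent-transient classification can differ from the one shared by all of $\Pi_+$, so one cannot directly invoke Fact~\ref{fact:trans_spec} for $\pi_0$. The nonnegativity hypothesis is precisely what lets us sidestep this — we only ever need the one-sided bound $V^{\pi_k}_\mu \ge (\text{truncation})$, never a two-sided tail estimate at the boundary point — so the argument should go through by taking truncation level $N$ first and $k$ second, then $N\to\infty$ last. I would also double-check the monotone convergence step: $V^{\pi^\star}_+ = V^{\pi^\star}$ when rewards are nonnegative (since $V_-^{\pi^\star}\equiv 0$), so $\sum_{i=0}^N (P^{\pi^\star})^i r^{\pi^\star} \uparrow V^{\pi^\star}$ entrywise, justifying the interchange.
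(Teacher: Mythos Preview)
Your proposal is correct but follows a genuinely different route from the paper.

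The paper's proof invokes the transient performance difference lemma (specifically Corollary~\ref{cor:pdl_reward}, which applies because nonnegative rewards force $0\le V^\pi\le V^\star$, so $V^\pi$ vanishes on the recurrent states of $P^{\pi^\star}$). From
\[
V^\star_\mu - V^\pi_\mu \;=\; \sum_{s,a}\delta^{\pi^\star}_\mu(s)\,Q^\pi(s,a)\big(\pi^\star(a\mid s)-\pi(a\mid s)\big)
\]
it bounds $V^\star_\mu - V^\pi_\mu \le |\cS|\,|\cA|\,\|\delta^{\pi^\star}_\mu\|_\infty\,\|Q^\pi\|_\infty\,\|\pi^\star-\pi\|_\infty$, uses $0\le Q^\pi\le Q^\star$ to make the constant uniform in $\pi$, and continuity at $\pi^\star$ (hence $V^\star_{+,\mu}=V^\star_\mu$) follows immediately.

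Your approach is more elementary and self-contained: you show that with nonnegative rewards $\pi\mapsto V^\pi_\mu$ is lower semicontinuous on all of $\Pi$ (truncate the series at level $N$, use continuity of finite matrix products, then $N\to\infty$ via monotone convergence), and at an optimal $\pi^\star$ this combines with the trivial upper bound $V^{\pi_k}_\mu\le V^\star_\mu$ to give full continuity. This bypasses the transient visitation measure and the PDL entirely. Note that this continuity argument already delivers $V^\star_{+,\mu}=V^\star_\mu$ directly (apply it with $\pi_k=(1-\tfrac1k)\pi^\star+\tfrac1k\pi_{\mathrm{unif}}\in\Pi_+$), so your separate treatment of the equality in the second paragraph---with its worry about two-sided tail control---is unnecessary; the one-sided bound you correctly isolate in your final paragraph is all that is needed. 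What the paper's route buys in exchange for the heavier machinery is a quantitative Lipschitz-type estimate at $\pi^\star$ with explicit constants, which is reused downstream (e.g., in the proof of Corollary~\ref{cor:proj_grad}).
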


Under this continuity, we can further show that policy gradient algorithms converge to $V_{\mu}^\star$.

\subsection{Transient visitation measure}

Conventionally, the state visitation measure is defined as
$d_{s_0}^\pi (s) = (1-\gamma)\sum_{i=0}^{\infty}\mathrm{Prob}(s_i = s \mid s_0;\, \cP^\pi)$ in the discounted infinite-horizon setting with $\gamma<1$. In the undiscounted setting with $\gamma=1$, this object becomes undefined.

Instead, we define the \emph{transient visitation measure} in the undiscounted total-reward setting using the transient matrix $T^\pi$ as follows:
\[
\delta^\pi_{s_0}(s)
= \sum_{i=0}^{\infty}\mathrm{Prob}(s_i = s \mid s_0;\, T^\pi)
= e_{s_0}^{\intercal}(I-T^\pi)^{-1} e_{s},
\]
where $e_s$ is the $s$-coordinate vector. Also define $\delta^\pi_{\mu}= \expec_{s_0 \sim \mu} [\delta^\pi_{s_0}]$ for an initial state distribution $\mu$. Note that this transient visitation measure is not a probability measure, and, importantly, $\max_{s, s_0 \in \cS} \delta^\pi_{s_0}(s) < \infty$ by Fact \ref{fact:trans_spec}. The transient visitation measure only considers transitions between transient states since these are sufficient to compute the value function, as shown in Lemma \ref{lem:value_func}.

With the transient visitation measure, we can obtain a performance difference lemma in the undiscounted total-reward setup, which will be crucially used in the analysis of policy gradient algorithms.
\begin{lemma}[Transient performance difference lemma]\label{lem:pdl} Under Assumption~\ref{assump_total},
for $\pi, \pi' \in \Pi$ and a given $\mu$, if $V^{\pi'}(s)=0$ for all recurrent states $s$ of $\cP^\pi$, then 
\begin{align*}
V_\mu^{\pi}\! -\! V_\mu^{\pi'}
\!\!&= \! \sum_{s' \in \cS}
\sum_{a \in \cA}
\!Q^{\pi'}(s',a)\!\big(\pi(a \mid s')-\pi'(a \mid s')\big) 
 \delta^{\pi}_\mu(s').
\end{align*}
\end{lemma}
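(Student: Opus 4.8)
The plan is to mimic the classical performance-difference-lemma argument but with two modifications dictated by the undiscounted setting: (i) replace the geometric factor $(1-\gamma)\sum_i(\gamma P^\pi)^i$ by the transient Neumann series $(I-T^\pi)^{-1}=\sum_i(T^\pi)^i$, which is legitimate by Fact~\ref{fact:trans_spec}, and (ii) use the hypothesis $V^{\pi'}(s)=0$ on the recurrent states of $P^\pi$ to swap $P^\pi$ for $T^\pi$ at the right moments. Concretely, I would start from the telescoping identity
\[
V^{\pi}_\mu - V^{\pi'}_\mu
= \expec_{s\sim\mu}\!\left[\sum_{i=0}^\infty \big(A\text{-term at step }i\big)\right],
\]
where the standard trick writes $V^\pi_\mu - V^{\pi'}_\mu = \expec_\pi\big[\sum_{i=0}^\infty \big(Q^{\pi'}(s_i,a_i) - V^{\pi'}(s_i)\big)\big]$, the expectation being over trajectories generated by $\pi$ from $s_0\sim\mu$. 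This identity itself needs a short justification in our setting (the usual proof expands $V^{\pi'}(s_i) = r^{\pi'}(s_i) + (P^{\pi'}V^{\pi'})(s_i)$ and telescopes); the convergence of the resulting series must be checked, which is where finiteness of $V^\pi$ and the bound $\max_{s,s_0}\delta^\pi_{s_0}(s)<\infty$ come in.

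Next I would rewrite the trajectory expectation as a sum over states weighted by visitation counts. Since the trajectory is generated by $P^\pi$, the number of expected visits to $s'$ is $\sum_{i=0}^\infty (P^\pi)^i(s_0,s')$, which is finite only for transient $s'$; for recurrent $s'$ it is infinite, so we must show the summand vanishes there. This is exactly the role of the hypothesis: at a recurrent state $s'$ of $P^\pi$ we have $r^{\pi'}(s')=0$ by Lemma~\ref{lem:recur_reward} (applied to $\pi'$—wait, recurrence is relative to $P^\pi$, so instead we argue directly), and more to the point $Q^{\pi'}(s',a)-V^{\pi'}(s') = Q^{\pi'}(s',a) - 0$, and $\sum_a(\pi(a\mid s')-\pi'(a\mid s'))Q^{\pi'}(s',a)$ need not vanish — so the cleaner route is: decompose $\sum_{i\ge0}(P^\pi)^i(s_0,s') = \sum_{i\ge0}(T^\pi)^i(s_0,s')$ when restricted to the contribution that survives, using that $P^\pi$ has the block-triangular canonical form and that $V^{\pi'}$ is zero on the recurrent block so the recurrent rows of the telescoped sum contribute nothing. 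Carrying the block structure through, the $s_0\to$ recurrent transitions never feed back (the $R$-block is closed), and the transient-to-transient propagator is precisely $T^\pi$, yielding $\sum_{i\ge0}(P^\pi)^i(s_0,s') = \delta^\pi_{s_0}(s')$ for the purpose of this sum. Averaging over $s_0\sim\mu$ gives $\delta^\pi_\mu(s')$, and regrouping produces
\[
V^\pi_\mu - V^{\pi'}_\mu = \sum_{s'\in\cS}\delta^\pi_\mu(s')\sum_{a\in\cA}\big(\pi(a\mid s') - \pi'(a\mid s')\big)Q^{\pi'}(s',a),
\]
using $\sum_a\pi'(a\mid s')(Q^{\pi'}(s',a)-V^{\pi'}(s'))=0$ to insert the $-\pi'(a\mid s')$ term freely.

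The main obstacle I anticipate is rigorously justifying the interchange of the infinite sum over $i$ with the expectation, and the rearrangement into a state-indexed sum, in a regime where $\sum_i(P^\pi)^i$ genuinely diverges on the recurrent states. The honest way to handle this is to truncate at horizon $T$, prove the identity $V^\pi_\mu - V^{\pi'}_\mu = \expec_\pi[\sum_{i=0}^{T-1}(Q^{\pi'}(s_i,a_i)-V^{\pi'}(s_i))] + \expec_\pi[V^{\pi'}(s_T)] - V^{\pi'}_\mu$ exactly, then show the boundary term $\expec_\pi[V^{\pi'}(s_T)]\to \expec_\pi[V^{\pi'}(s_\infty)]$ where $s_\infty$ lands in a recurrent class almost surely (transience of the transient block, Fact~\ref{fact:trans_spec}), so $V^{\pi'}(s_\infty)=0$ by hypothesis and the boundary term tends to $0$; simultaneously the finite sum converges absolutely because the per-step contributions are supported (up to vanishing tail) on the transient states whose total visitation is bounded by $\max_{s,s_0}\delta^\pi_{s_0}(s)<\infty$ and $|Q^{\pi'}|$ is bounded (finite value functions on a finite state space). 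Once the truncation limit is taken cleanly, the state-indexed rearrangement is just Fubini for a now-absolutely-convergent double sum.
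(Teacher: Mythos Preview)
Your approach is trajectory-based (telescope, truncate, pass to the limit), which is genuinely different from the paper's purely algebraic argument. The paper starts from the one-step Bellman identity
\[
V^\pi - V^{\pi'} \;=\; u + P^\pi(V^\pi - V^{\pi'}),
\qquad
u(s) = \sum_{a} Q^{\pi'}(s,a)\big(\pi(a\mid s)-\pi'(a\mid s)\big),
\]
observes that $V^\pi - V^{\pi'}$ vanishes on the recurrent states of $P^\pi$ (by Corollary~\ref{cor:val_zero} for $V^\pi$ and by the hypothesis for $V^{\pi'}$), so $P^\pi(V^\pi - V^{\pi'}) = T^\pi(V^\pi - V^{\pi'})$ from the block form, and then simply inverts $I-T^\pi$. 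No truncation, no boundary terms, no interchange of limits. Your route recovers the same identity but does more work; its advantage is that it stays closer to the classical PDL derivation and makes the probabilistic meaning of $\delta^\pi_\mu$ explicit.

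There is, however, one genuine gap. You correctly flag that at a recurrent state $s'$ of $P^\pi$ the summand $u(s')=\sum_a(\pi(a\mid s')-\pi'(a\mid s'))Q^{\pi'}(s',a)$ is not \emph{a priori} zero, and you pivot to a block-structure argument, asserting that ``$V^{\pi'}$ is zero on the recurrent block so the recurrent rows of the telescoped sum contribute nothing.'' But $V^{\pi'}(s')=0$ alone does not kill $u(s')$; you still need $\sum_a \pi(a\mid s')Q^{\pi'}(s',a)=0$. This does hold, but it requires two further facts you did not invoke: (i) $r^\pi(s')=0$ by Lemma~\ref{lem:recur_reward} applied to $\pi$ (since $s'$ is recurrent for $P^\pi$), and (ii) from a recurrent $s'$, every action $a$ with $\pi(a\mid s')>0$ leads only to states in the same recurrent class (closedness), where $V^{\pi'}=0$ by hypothesis, so $(P^\pi V^{\pi'})(s')=0$. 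Combining these gives
\[
u(s') \;=\; r^\pi(s') + (P^\pi V^{\pi'})(s') - V^{\pi'}(s') \;=\; 0.
\]
Without this, your partial sums $\sum_{i<T}\Pr_\pi(s_i=s')\,u(s')$ over recurrent $s'$ can diverge, and the rearrangement into $\sum_{s'}\delta^\pi_\mu(s')u(s')$ is not justified. Once you plug this observation in, the rest of your outline (including the vanishing of the boundary term $\expec_\pi[V^{\pi'}(s_T)]$) goes through.
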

 

\subsection{Transient policy gradient}
We are now ready to present the policy gradient theorem in the undiscounted total-reward setup. Consider the optimization problem
\[\max_{\theta \in \Theta }  V^{\pi_{\theta}}_{\mu},
\]
where $\{\pi_{\theta} \mid \theta \in \Theta\subset \real^d\}$ is a set of differentiable parametric policies with respect to $\theta$. Based on previous machinery, we establish the following policy gradient theorem.   
\begin{theorem}[Transient policy gradient]\label{thm::policy_gd}
Under Assumption~\ref{assump_total}, for $\pi_{\theta} \in \Pi_{+}$, 
\begin{align*}
    \nabla_{\theta} V^{\pi_{\theta}}_{\mu}
&= \sum_{s\in \cS}\delta_{\mu}^{\pi_\theta}(s)
\sum_{a\in \cA}
\nabla_{\theta} \pi_{\theta} (a \mid s)
Q^{\pi_{\theta}}(s,a)
\\&= \sum_{s\in \cS}\delta_{\mu}^{\pi_\theta}(s)
\mathop{\mathbb{E}}_{a\sim \pi_\theta(\cdot \,|\, s)}[
\nabla_{\theta} \log\pi_{\theta} (a \mid s)
Q^{\pi_{\theta}}(s,a)
].
\end{align*}

\end{theorem}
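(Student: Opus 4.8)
The plan is to start from the reformulation $V^{\pi_\theta}_\mu = \delta^{\pi_\theta}_\mu{}^{\!\intercal}\, r^{\pi_\theta}$, which follows by combining Lemma~\ref{lem:value_func} with the definition $\delta^\pi_\mu(s) = e_\mu^\intercal (I-T^\pi)^{-1} e_s$ (here $e_\mu = \sum_s \mu(s) e_s$), so that $V^{\pi_\theta}_\mu = e_\mu^\intercal (I-T^{\pi_\theta})^{-1} r^{\pi_\theta}$. Differentiating a resolvent, the identity $\nabla_\theta (I-T^{\pi_\theta})^{-1} = (I-T^{\pi_\theta})^{-1} (\nabla_\theta T^{\pi_\theta}) (I-T^{\pi_\theta})^{-1}$ holds entrywise (valid since $\rho(T^{\pi_\theta})<1$ by Fact~\ref{fact:trans_spec}, so the inverse is smooth in $\theta$ wherever $\pi_\theta\in\Pi_+$). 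Applying the product rule to $e_\mu^\intercal (I-T^{\pi_\theta})^{-1} r^{\pi_\theta}$ then yields
\[
\nabla_\theta V^{\pi_\theta}_\mu
= e_\mu^\intercal (I-T^{\pi_\theta})^{-1} \big(\nabla_\theta T^{\pi_\theta}\big) (I-T^{\pi_\theta})^{-1} r^{\pi_\theta}
\;+\; e_\mu^\intercal (I-T^{\pi_\theta})^{-1} \nabla_\theta r^{\pi_\theta}.
\]
The first factor $e_\mu^\intercal (I-T^{\pi_\theta})^{-1}$ is exactly $\delta^{\pi_\theta}_\mu{}^{\!\intercal}$, and the second factor $(I-T^{\pi_\theta})^{-1} r^{\pi_\theta}$ is $V^{\pi_\theta}$ by Lemma~\ref{lem:value_func}. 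So the whole thing collapses to $\sum_{s} \delta^{\pi_\theta}_\mu(s)\big[(\nabla_\theta T^{\pi_\theta}) V^{\pi_\theta} + \nabla_\theta r^{\pi_\theta}\big](s)$.

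Next I would unpack the bracketed term at a transient state $s$. Since $r^{\pi_\theta}(s) = \sum_a \pi_\theta(a\mid s) r(s,a)$ and (for transient $s$) $T^{\pi_\theta}(s,s') = P^{\pi_\theta}(s,s') = \sum_a \pi_\theta(a\mid s) P(s'\mid s,a)$ when $s'$ is transient and $0$ otherwise, we get
\[
\big[(\nabla_\theta T^{\pi_\theta}) V^{\pi_\theta}\big](s) + \nabla_\theta r^{\pi_\theta}(s)
= \sum_a \nabla_\theta \pi_\theta(a\mid s)\Big( r(s,a) + \sum_{s' \text{ transient}} P(s'\mid s,a) V^{\pi_\theta}(s')\Big).
\]
Here I would invoke Lemma~\ref{lem:recur_reward}: for recurrent $s'$ we have $r^{\pi_\theta}(s')=0$, and one checks this forces $V^{\pi_\theta}(s')=0$ at recurrent states (either cite this as already implicit in the $V^\pi = \sum (T^\pi)^i r^\pi$ representation, or note recurrent rows of $T^\pi$ are zero so $V^\pi$ restricted to recurrent states vanishes). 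Hence the sum over transient $s'$ may be replaced by the sum over all $s'$, giving $r(s,a) + \sum_{s'} P(s'\mid s,a) V^{\pi_\theta}(s') = r(s,a) + (\cP V^{\pi_\theta})(s,a) = Q^{\pi_\theta}(s,a)$, using the identity $Q^\pi = \cP V^\pi + r$ stated in Section~2. For recurrent $s$, $\delta^{\pi_\theta}_\mu(s)$ need not vanish, but the bracket does: $\nabla_\theta r^{\pi_\theta}(s) = 0$ because $r^{\pi_\theta}(s)\equiv 0$ on a neighborhood in $\Pi_+$ by Lemma~\ref{lem:recur_reward}, and the recurrent row of $T^{\pi_\theta}$ is identically zero so its $\theta$-derivative is zero; thus those terms drop regardless. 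Assembling, $\nabla_\theta V^{\pi_\theta}_\mu = \sum_s \delta^{\pi_\theta}_\mu(s) \sum_a \nabla_\theta\pi_\theta(a\mid s) Q^{\pi_\theta}(s,a)$, and the log-derivative form follows from $\nabla_\theta \pi_\theta(a\mid s) = \pi_\theta(a\mid s)\nabla_\theta \log\pi_\theta(a\mid s)$.

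The main obstacle I anticipate is making the differentiation step fully rigorous: one needs that the recurrent-transient partition is \emph{locally constant} in $\theta$ so that $T^{\pi_\theta}$, $(I-T^{\pi_\theta})^{-1}$, and $r^{\pi_\theta}$ are genuinely differentiable (not merely continuous) functions of $\theta$ near a point with $\pi_\theta\in\Pi_+$ — this is where Proposition~\ref{prop:classification} does the real work, since it guarantees the block structure does not move as long as we stay in $\Pi_+$, so the resolvent is a smooth matrix function on that open set and the entrywise product/chain rules apply without fuss. A secondary care point is justifying the termwise differentiation of $V^\pi = \sum_i (T^\pi)^i r^\pi$ if one prefers that route over the resolvent route; the Neumann series converges geometrically and uniformly on compact subsets of $\{\pi_\theta\in\Pi_+\}$ by Fact~\ref{fact:trans_spec}, so this is also fine, but the resolvent formulation is cleaner and I would present that one.
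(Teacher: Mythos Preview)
Your proposal is correct and follows essentially the same route as the paper: both start from $V^{\pi_\theta}_\mu=\mu^\intercal(I-T^{\pi_\theta})^{-1}r^{\pi_\theta}$, apply the resolvent derivative $\partial_\theta(I-T)^{-1}=(I-T)^{-1}(\partial_\theta T)(I-T)^{-1}$, identify $\delta^{\pi_\theta}_\mu$ and $V^{\pi_\theta}$ in the resulting expression, and then pass from $\partial_\theta T^{\pi_\theta}\,V^{\pi_\theta}$ to $\partial_\theta P^{\pi_\theta}\,V^{\pi_\theta}$ using that $V^{\pi_\theta}$ vanishes on recurrent states (Corollary~\ref{cor:val_zero} in the paper), with Proposition~\ref{prop:classification} supplying local constancy of the partition to justify differentiability. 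The only cosmetic difference is that the paper first records a helper lemma ($r(s,a)=0$ for every action $a$ at a recurrent state $s$ when $\pi\in\Pi_+$) to handle the $\nabla_\theta r^{\pi_\theta}$ term, whereas you obtain the same conclusion from Lemma~\ref{lem:recur_reward} combined with the invariance of the recurrent class on $\Pi_+$.
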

In the following sections, we use this transient policy gradient to analyze the projected policy gradient and natural policy gradient algorithms.

\section{Convergence of projected policy gradient}\label{sec:projecetd_gd}
In this section, we study the convergence of the projected policy gradient algorithm with direct parameterization: 
\[
\pi_{\theta} (a \,|\, s) = \theta_{s,a},
\]
where $\theta \in \real^{|\cS|\times |\cA|}$ satisfying $\sum_{a \in \cA} \theta_{s,a}=1$ and $\theta_{s,a}\ge 0$ for all $s\in \cS,a\in \cA$. With this direct parameterization, we do not distinguish between the policy $\pi_\theta$ and the parameter $\theta$, and we use $\pi_k$ to denote the iterates of the algorithm.

When using a direct parametrization, we require a mechanism to ensure $\theta$ remains nonnegative and normalized throughout the algorithm. So, we consider the \emph{projected} policy gradient: 
\begin{align*}
    \pi_{k+1} &= \textbf{proj}_{C}\left(\pi_{k}+\eta_k \nabla V_\mu^{\pi_k}  \right)\qquad\text{for }k=0,1,\dots, 
\end{align*}
where $C$ is a nonempty closed convex subset of $\Pi$. Usually, $C=\Pi$. But in the undiscounted total-reward setup, we must avoid the (relative) boundary of $\Pi$ as the value function may be discontinuous there, so we consider the following $\alpha$-shrunken $\Pi$ so that the policy set:
\[
\Pi_{\alpha} = \{ \pi \,|\, \pi (a \,|\, s) \ge \alpha\text{ for all }s\in \cS,\,a\in \cA\}
\]
with  $\alpha \in (0,1/|\cA|)$.
For evaluating $\nabla V_\mu^{\pi_k}$, 
Theorem~\ref{thm::policy_gd} applied to the direct parametrization setup yields $\nabla V_\mu^{\pi_k} (s,a)= \delta_{\mu}^{\pi_\theta}(s) Q^{\pi_{\theta}}(s,a)$ for $\pi \in \Pi_{\alpha}$.

Normally, the convergence analysis of projected policy gradient requires smoothness (Lipschitz continuity of the gradient) of the value function. 
For that, we define 
\[
\max_{\pi \in \Pi_{\alpha}}{\infn{(I-T^\pi)^{-1}}} = C_\alpha <\infty.
\]
Note that the mapping $\pi \mapsto P^\pi$ is continuous, and thus mapping $\pi \mapsto T^\pi$ is continuous on $\Pi_+$ since $T^\pi$ can be viewed as the projection of $P^\pi$ onto the transient class, which it is fixed by Proposition \ref{prop:classification}. Therefore, $C_\alpha$ is finite since  $\Pi_{\alpha}$ is compact and $\infn{(I-T^\pi)^{-1}}$ is continuous with respect to $\pi$. 

\begin{lemma}[Smoothness of value function]\label{lem:smooth}
Under Assumption~\ref{assump_total}, for $\pi, \pi' \in \Pi_{\alpha} $, 
    \[\|\nabla V^\pi_\mu-\nabla V^{\pi'}_\mu\|_2 \le  2 R C^2_\alpha(C_\alpha+1)|\cA| \|\pi- \pi'\|_2. \]
\end{lemma}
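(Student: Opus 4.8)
The plan is to obtain the smoothness estimate by differentiating the closed-form expression $V^\pi_\mu = \mu^\intercal (I-T^\pi)^{-1} r^\pi$ from Lemma~\ref{lem:value_func} and bounding the difference of gradients through the resolvent identity. First I would write $\nabla V^\pi_\mu$ explicitly using the transient policy gradient theorem (Theorem~\ref{thm::policy_gd}) in the direct-parametrization form $\nabla V^\pi_\mu(s,a) = \delta^\pi_\mu(s) Q^\pi(s,a)$, and also record the matrix-analytic form $\delta^\pi_\mu(s) = \mu^\intercal (I-T^\pi)^{-1} e_s$ together with $Q^\pi = \cP V^\pi + r$ and $V^\pi = (I-T^\pi)^{-1} r^\pi$. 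The key quantities to control are therefore (i) $\|(I-T^\pi)^{-1}\|_\infty \le C_\alpha$, which holds uniformly on $\Pi_\alpha$ by definition of $C_\alpha$; (ii) $\|r^\pi\|_\infty \le R$ and hence $\|V^\pi\|_\infty \le C_\alpha R$, $\|Q^\pi\|_\infty \le C_\alpha R + R = (C_\alpha+1)R$ — actually $\|\cP V^\pi\|_\infty\le\|V^\pi\|_\infty$ since $\cP$ is stochastic, so $\|Q^\pi\|_\infty\le (C_\alpha+1)R$; and (iii) the Lipschitz constants of $\pi\mapsto T^\pi$, $\pi\mapsto r^\pi$, $\pi\mapsto \cP^\pi$ in terms of $\|\pi-\pi'\|$.

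The core step is the resolvent identity
\[
(I-T^\pi)^{-1} - (I-T^{\pi'})^{-1} = (I-T^\pi)^{-1}\big(T^\pi - T^{\pi'}\big)(I-T^{\pi'})^{-1},
\]
which gives $\|(I-T^\pi)^{-1} - (I-T^{\pi'})^{-1}\|_\infty \le C_\alpha^2 \|T^\pi - T^{\pi'}\|_\infty$. Since each entry of $T^\pi$ (like $P^\pi$) is an average of transition probabilities against $\pi(\cdot\mid s)$, one has $\|T^\pi - T^{\pi'}\|_\infty \le \max_s \sum_{s'}|P^\pi(s,s')-P^{\pi'}(s,s')| \le \max_s\sum_a |\pi(a\mid s)-\pi'(a\mid s)| \le \sqrt{|\cA|}\,\|\pi-\pi'\|_2$ (or, more crudely, $|\cA|\,\|\pi-\pi'\|_\infty$); similarly $\|r^\pi - r^{\pi'}\|_\infty \le R\max_s\sum_a|\pi(a\mid s)-\pi'(a\mid s)|$. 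Then I would expand the difference $\nabla V^\pi_\mu - \nabla V^{\pi'}_\mu$ componentwise: $\delta^\pi_\mu(s)Q^\pi(s,a) - \delta^{\pi'}_\mu(s)Q^{\pi'}(s,a)$, add and subtract $\delta^\pi_\mu(s)Q^{\pi'}(s,a)$, and bound the two resulting terms using $\|\delta^\pi_\mu\|_\infty\le C_\alpha$, $\|Q^\pi\|_\infty\le(C_\alpha+1)R$, the resolvent bound for $\|\delta^\pi_\mu - \delta^{\pi'}_\mu\|$, and a similar telescoping estimate for $\|Q^\pi - Q^{\pi'}\|_\infty$ (which in turn uses the Lipschitzness of $V^\pi$, i.e. another application of the resolvent identity plus Lipschitzness of $r^\pi$). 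Collecting the constants — roughly $C_\alpha \cdot R(C_\alpha+1)\cdot(\text{Lip of }T) + C_\alpha^2 R \cdot(\text{Lip of }T) + \dots$ — and converting the $\|\cdot\|_\infty$ per-state bounds into an $\ell_2$ bound over the $|\cS||\cA|$ coordinates should yield the stated constant $2RC_\alpha^2(C_\alpha+1)|\cA|$ after the routine (slightly generous) bookkeeping.

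The main obstacle I anticipate is the careful bookkeeping of the norms: one must be consistent about which norm ($\ell_\infty$ on $\sR^{|\cS|}$, $\ell_1$ on rows, operator norms, and finally $\ell_2$ on the flattened $\sR^{|\cS||\cA|}$ gradient) is used where, and track the factors of $|\cA|$ and $\sqrt{|\cA|}$ that arise when passing between $\|\pi-\pi'\|_1$-type row sums and $\|\pi-\pi'\|_2$. In particular, getting exactly the constant in the statement (rather than something a constant factor worse) requires bounding $\|Q^\pi - Q^{\pi'}\|_\infty$ somewhat tightly: writing $Q^\pi - Q^{\pi'} = (\cP^\pi - $ careful here, $Q^\pi=\cP V^\pi + r$ with the \emph{same} $\cP$ not $\cP^\pi$) $= \cP(V^\pi - V^{\pi'}) + (r-r)$, so in fact $\|Q^\pi - Q^{\pi'}\|_\infty = \|\cP(V^\pi-V^{\pi'})\|_\infty \le \|V^\pi - V^{\pi'}\|_\infty$, which is clean; the genuinely delicate piece is then just $\|V^\pi - V^{\pi'}\|_\infty \le \|(I-T^\pi)^{-1}-(I-T^{\pi'})^{-1}\|_\infty\|r^{\pi'}\|_\infty + \|(I-T^\pi)^{-1}\|_\infty\|r^\pi - r^{\pi'}\|_\infty \le (C_\alpha^2 R + C_\alpha R)\cdot(\text{Lip of }T\text{ or }r)\|\pi-\pi'\|$. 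The rest is assembling these pieces, and I would not expect any conceptual difficulty beyond that — the spectral-radius fact (Fact~\ref{fact:trans_spec}) and the invariance of the transient class (Proposition~\ref{prop:classification}), which make $C_\alpha$ finite and $\pi\mapsto T^\pi$ well-behaved on $\Pi_\alpha$, have already done the real work.
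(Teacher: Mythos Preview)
Your approach is correct and yields exactly the stated constant, but it differs from the paper's. The paper follows \cite{agarwal2021theory,mei2020global} and bounds the \emph{Hessian}: it fixes a unit direction $u$, writes $\theta_\beta=\theta+\beta u$, and shows $\big|\partial^2_\beta V^{\pi_{\theta_\beta}}(s)\big|_{\beta=0}\le 2C_\alpha^2(C_\alpha+1)R|\cA|$ by differentiating $e_s^\intercal(I-T^{\pi_{\theta_\beta}})^{-1}r^{\pi_{\theta_\beta}}$ twice. A key simplification there is that, under direct parametrization, $T^\pi$ and $r^\pi$ are \emph{linear} in $\pi$, so their second derivatives vanish and two of the four Hessian terms drop out immediately. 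Your route --- telescoping $\delta^\pi_\mu(s)Q^\pi(s,a)-\delta^{\pi'}_\mu(s)Q^{\pi'}(s,a)$ and controlling each piece via the resolvent identity --- is more elementary and avoids second derivatives altogether; the price is exactly the norm bookkeeping you anticipate. The one point to be careful about is that $\delta^\pi_\mu=\mu^\intercal(I-T^\pi)^{-1}$ is a \emph{row} vector, so to bound $\|\delta^\pi_\mu-\delta^{\pi'}_\mu\|$ you should work in the $\ell_1$ norm and use the duality $\|A^\intercal\|_{1\to 1}=\|A\|_{\infty\to\infty}\le C_\alpha$ together with $\|\delta^\pi_\mu\|_1\le C_\alpha$; then $\|\delta^\pi_\mu-\delta^{\pi'}_\mu\|_2\le\|\delta^\pi_\mu-\delta^{\pi'}_\mu\|_1\le C_\alpha^2\sqrt{|\cA|}\,\|\pi-\pi'\|_2$. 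With this in hand, both halves of your telescoping are bounded by $C_\alpha^2(C_\alpha+1)R|\cA|\,\|\pi-\pi'\|_2$, giving the factor of $2$ and the exact constant --- no ``slightly generous'' slack is needed.
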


Define $V^{\pi^\star_\alpha}=\max_{\pi \in \Pi_{\alpha} } V^\pi$ and $V^{\pi^\star_\alpha}_\mu = \mathbb{E}_{s\sim\mu} [V^{\pi^\star_\alpha}(s)]$.
Using the Lemma \ref{lem:smooth}, we obtain the following convergence result of the projected policy gradient algorithm.
\begin{theorem}\label{thm:proj_gd}
Under Assumption~\ref{assump_total}, for $\alpha \in (0,1)$, $\pi_0 \in \Pi_\alpha$, and given $\mu$ with full support, the projected policy gradient algorithm with step size $\eta=\dfrac{1}{2 R C^2_\alpha(C_\alpha+1)|\cA|}$ generates a sequence of policies $\{\pi_{k}\}^\infty_{k=1}$ satisfying
\[
V_\mu^{\pi^\star_\alpha}-V_\mu^{\pi_{k}}
\;\le\;
\frac{256 R|\mathcal S|\,|\mathcal A|C_\alpha^2(C_\alpha+1)}{k }\,
\left\|\frac{\delta_\mu^{\pi_{\alpha}^\star}}{\mu}\right\|_\infty^{2}.
\]
\end{theorem}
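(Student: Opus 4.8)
The plan is to run the standard projected-gradient-ascent analysis for smooth (but nonconcave) objectives over a convex set, using the smoothness constant $L = 2RC_\alpha^2(C_\alpha+1)|\cA|$ from Lemma~\ref{lem:smooth}, and then convert the resulting stationarity-type guarantee into a global suboptimality bound via the transient performance difference lemma (Lemma~\ref{lem:pdl}). First I would invoke the descent lemma: since $\eta = 1/L$ and $\pi_{k+1} = \mathbf{proj}_{\Pi_\alpha}(\pi_k + \eta \nabla V^{\pi_k}_\mu)$, smoothness gives $V^{\pi_{k+1}}_\mu \ge V^{\pi_k}_\mu + \tfrac{1}{2\eta}\|\pi_{k+1}-\pi_k\|_2^2$, so the values are monotone nondecreasing and the squared step sizes $\|\pi_{k+1}-\pi_k\|_2^2$ are summable, hence $\min_{j\le k}\|\pi_{j+1}-\pi_j\|_2^2 = O((V^{\pi^\star_\alpha}_\mu - V^{\pi_0}_\mu)\eta/k)$. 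The gradient-mapping norm $\|\pi_{k+1}-\pi_k\|_2$ small means $\pi_k$ is an approximate first-order stationary point of $V_\mu$ over $\Pi_\alpha$.

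The second ingredient converts approximate stationarity into the claimed suboptimality. Here I would use the variational characterization of the projection: for all $\pi \in \Pi_\alpha$, $\langle \pi_k + \eta\nabla V^{\pi_k}_\mu - \pi_{k+1},\, \pi - \pi_{k+1}\rangle \le 0$, which rearranges to $\langle \nabla V^{\pi_k}_\mu,\, \pi - \pi_k\rangle \le \tfrac{1}{\eta}\langle \pi_{k+1}-\pi_k,\, \pi-\pi_k\rangle + \langle\nabla V^{\pi_k}_\mu,\, \pi_{k+1}-\pi_k\rangle$ plus a $-\tfrac{1}{\eta}\|\pi_{k+1}-\pi_k\|^2$ term; taking $\pi = \pi^\star_\alpha$ and bounding the right side by $C\|\pi_{k+1}-\pi_k\|_2$ (using boundedness of $\nabla V_\mu$ over $\Pi_\alpha$, which follows from $\|Q^\pi\|_\infty \le RC_\alpha$ and $\delta^\pi_\mu(s) \le C_\alpha$) shows $\langle \nabla V^{\pi_k}_\mu,\, \pi^\star_\alpha - \pi_k\rangle \le O(\|\pi_{k+1}-\pi_k\|_2)$. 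Separately, Lemma~\ref{lem:pdl} applied with $\pi = \pi^\star_\alpha$, $\pi' = \pi_k$ (checking its hypothesis: since $\mu$ has full support and $\pi_k \in \Pi_\alpha \subset \Pi_+$, the recurrent states are fixed by Proposition~\ref{prop:classification} and $V^{\pi_k}$ vanishes on them by Lemma~\ref{lem:recur_reward}, and likewise for $\pi^\star_\alpha$... actually I must apply it in the direction that puts $Q^{\pi_k}$ on the right, so $\pi \leftarrow \pi^\star_\alpha$, $\pi' \leftarrow \pi_k$ requires $V^{\pi_k}$ zero on recurrent states of $\cP^{\pi^\star_\alpha}$, which holds) gives
\[
V^{\pi^\star_\alpha}_\mu - V^{\pi_k}_\mu = \sum_{s} \delta^{\pi^\star_\alpha}_\mu(s) \sum_a Q^{\pi_k}(s,a)\big(\pi^\star_\alpha(a\mid s) - \pi_k(a\mid s)\big).
\]
Using $\nabla V^{\pi_k}_\mu(s,a) = \delta^{\pi_k}_\mu(s) Q^{\pi_k}(s,a)$ from Theorem~\ref{thm::policy_gd}, the inner product $\langle\nabla V^{\pi_k}_\mu, \pi^\star_\alpha - \pi_k\rangle = \sum_s \delta^{\pi_k}_\mu(s)\sum_a Q^{\pi_k}(s,a)(\pi^\star_\alpha(a\mid s)-\pi_k(a\mid s))$, and the standard distribution-mismatch trick multiplies and divides by $\delta^{\pi_k}_\mu(s)$ to get $V^{\pi^\star_\alpha}_\mu - V^{\pi_k}_\mu \le \big\|\tfrac{\delta^{\pi^\star_\alpha}_\mu}{\delta^{\pi_k}_\mu}\big\|_\infty \langle\nabla V^{\pi_k}_\mu, \pi^\star_\alpha - \pi_k\rangle$ — but this needs the inner product to be nonnegative, so instead I bound $V^{\pi^\star_\alpha}_\mu - V^{\pi_k}_\mu \le \big\|\tfrac{\delta^{\pi^\star_\alpha}_\mu}{\mu}\big\|_\infty \max_s \sum_a Q^{\pi_k}(s,a)(\pi^\star_\alpha(a\mid s) - \pi_k(a\mid s))$ and relate the max-over-$s$ quantity to $\langle\nabla V^{\pi_k}_\mu, \pi^\star_\alpha-\pi_k\rangle/\min_s\delta^{\pi_k}_\mu(s)$, using $\delta^{\pi_k}_\mu(s) \ge \mu(s)$ (since $\delta^\pi_\mu = \mu^\intercal(I-T^\pi)^{-1}$ and $(I-T^\pi)^{-1} \succeq I$ entrywise) — this is where the $\|\delta^{\pi^\star_\alpha}_\mu/\mu\|_\infty$ factor enters, and a factor $|\cS||\cA|$ comes from passing between $\ell_\infty$ over coordinates and the Euclidean inner product.

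The main obstacle, and the step requiring the most care, is the bookkeeping in this last conversion: ensuring the performance difference lemma is applied in the correct direction so that $Q^{\pi_k}$ (not $Q^{\pi^\star_\alpha}$) appears — matching the $\nabla V^{\pi_k}_\mu$ we control — verifying its recurrent-state hypothesis via Proposition~\ref{prop:classification}, Lemma~\ref{lem:recur_reward} and full support of $\mu$, and then correctly tracking all the dimension-dependent constants ($|\cS|$, $|\cA|$, $C_\alpha$, $R$) and the distribution-mismatch coefficient so that they assemble into exactly $256 R|\cS||\cA|C_\alpha^2(C_\alpha+1)$. The per-iteration descent and projection inequalities are routine; combining $\min_{j\le k}\|\pi_{j+1}-\pi_j\|_2 = O(1/\sqrt{k})$ with the $O(\|\pi_{k+1}-\pi_k\|_2)$ stationarity bound yields an $O(1/\sqrt{k})$ rate for $\min_{j\le k}(V^{\pi^\star_\alpha}_\mu - V^{\pi_j}_\mu)$, and then monotonicity of $\{V^{\pi_k}_\mu\}$ upgrades this to a bound on $V^{\pi^\star_\alpha}_\mu - V^{\pi_k}_\mu$ itself — but since the claimed bound is $O(1/k)$ rather than $O(1/\sqrt{k})$, I would instead need the sharper analysis: bound $V^{\pi^\star_\alpha}_\mu - V^{\pi_{k}}_\mu$ by a constant times $(V^{\pi^\star_\alpha}_\mu - V^{\pi_{k+1}}_\mu) + \text{const}\cdot\|\pi_{k+1}-\pi_k\|_2^2$ is not available without more structure, so the actual route must be the gradient-domination-free telescoping argument of \citet{xiao2022convergence}/\citet{agarwal2021theory} adapted to $\Pi_\alpha$: show $V^{\pi^\star_\alpha}_\mu - V^{\pi_{k+1}}_\mu \le (1 - \text{something})(V^{\pi^\star_\alpha}_\mu - V^{\pi_k}_\mu)$ is false here too, so the $1/k$ rate genuinely comes from combining the summability $\sum_k \|\pi_{k+1}-\pi_k\|_2^2 \le \eta(V^{\pi^\star_\alpha}_\mu - V^{\pi_0}_\mu) \le 2\eta R/(1-\text{sp}(T))$-type bound with a quadratic lower bound on progress; I would follow the projected-gradient suboptimality recursion that yields $\sum_{j=0}^{k-1}(V^{\pi^\star_\alpha}_\mu - V^{\pi_j}_\mu) \le L D^2 \|\delta^{\pi^\star_\alpha}_\mu/\mu\|_\infty^2 |\cS||\cA| \cdot O(1)$ and then use monotonicity to divide by $k$. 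Getting the exact constant $256$ is then a matter of not being wasteful in the Cauchy--Schwarz and norm-conversion steps.
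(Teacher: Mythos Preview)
Your proposal assembles the right ingredients --- smoothness from Lemma~\ref{lem:smooth}, the transient performance difference lemma, and the distribution-mismatch ratio --- but it has a genuine gap precisely where you yourself flag uncertainty: the mechanism that turns these into an $O(1/k)$ rate. Your first route (summability of $\sum_k\|\pi_{k+1}-\pi_k\|_2^2$ plus a linear bound $V^{\pi^\star_\alpha}_\mu - V^{\pi_k}_\mu \lesssim \|\pi_{k+1}-\pi_k\|_2$) only yields $O(1/\sqrt{k})$, as you note. Your fallback suggestions --- a contraction $\delta_{k+1}\le (1-c)\delta_k$, or a finite bound on $\sum_j \delta_j$ --- are not supported by the structure here and remain unsubstantiated in the proposal.

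The missing idea is this: once you have the gradient-domination inequality
\[
V^{\pi^\star_\alpha}_\mu - V^{\pi_{k+1}}_\mu \;\le\; c\,\|G_L(\pi_k)\|_2 \;=\; cL\,\|\pi_{k+1}-\pi_k\|_2
\]
(which is linear in the gradient-mapping norm, not merely in the step), you \emph{square} it and combine with the descent inequality $\tfrac{L}{2}\|\pi_{k+1}-\pi_k\|_2^2 \le V^{\pi_{k+1}}_\mu - V^{\pi_k}_\mu$ to get the quadratic recursion $\delta_{k+1}^2 \le 2Lc^2(\delta_k - \delta_{k+1})$, which is the standard route to $O(1/k)$. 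The paper executes exactly this, packaged as ``weak gradient-mapping domination with exponent $1$'' and then invoking a black-box result of \citet{xiao2022convergence} (Fact~\ref{fact:xiao}). The linear bound itself is obtained in two steps: first a gradient-domination lemma (Lemma~\ref{lem:grad_dom}) showing $V^{\pi^\star_\alpha}_\mu - V^{\pi}_\mu \le \|\delta^{\pi^\star_\alpha}_\mu/\mu\|_\infty \max_{\bar\pi\in\Pi_\alpha}\langle\bar\pi-\pi,\nabla V^\pi_\mu\rangle$, proved via the performance difference lemma with the advantage $A^\pi$ (using $\max_{\bar\pi}\sum_a\bar\pi(a|s)A^\pi(s,a)\ge 0$ to justify pulling out the ratio $\delta^{\pi^\star_\alpha}_\mu/\delta^\pi_\mu$); second, a result of \citet{nesterov2013gradient} (Fact~\ref{fact:nesterov}) bounding $\max_{\bar\pi}\langle\bar\pi-T_L(\pi),\nabla V^{T_L(\pi)}_\mu\rangle$ by $2\|G_L(\pi)\|_2\cdot\mathrm{diam}(\Pi_\alpha)$, whence the $\sqrt{|\cS|}$ factor. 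Your distribution-mismatch paragraph is close in spirit but technically off: you try to pass through $\max_s$ of the summand and then divide by $\min_s\delta^{\pi_k}_\mu(s)$, which does not recover $\max_{\bar\pi}\langle\bar\pi-\pi_k,\nabla V^{\pi_k}_\mu\rangle$; the paper's route via $A^\pi$ and its nonnegativity is what makes the ratio extraction clean.
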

We defer the proofs to Appendix~\ref{appen:miss_5}, but we quickly note that the proof strategy closely follows \cite{xiao2022convergence}, which considers the discounted reward setup and uses the (non-transient) visitation measure.

Theorem~\ref{thm:proj_gd} shows that $V_\mu^{\pi_k}\rightarrow V_\mu^{\pi^\star_\alpha}$ with a sublinear rate, and since $V^{\pi^\star_{\alpha}}_\mu \rightarrow V^\star_{+,\mu}$  as $\alpha\rightarrow 0$,  if we choose a sufficiently small $\alpha$ such that $V_{\mu,+}^\star-V_{\mu}^{\pi_\alpha}< \frac{\epsilon}{2}$, we can obtain policy $\pi$ satisfying $V_{\mu,+}^\star-V_\mu^\pi <\epsilon$.  Moreover, we can define an iteration complexity for finding an $\epsilon$-optimal policy by projecting onto $\Pi_{\alpha}$ with the value of $\alpha$ chosen to be a function of $\epsilon$ with nonnegative reward.




\begin{corollary}\label{cor:proj_grad}
Assume the rewards are nonnegative. For any given $\epsilon \in (0,1)$ and $\mu $ with full support, 
set $\alpha = \frac{\epsilon}{2|\cS||\cA| \infn{\delta^{\pi^\star}_{\mu}} \infn{Q^{\pi^\star}}} $ and
let the step size be  $\eta=\dfrac{1}{2 R C^2_{\alpha}(C_{\alpha}+1)|\cA|}$.
Then, under Assumption~\ref{assump_total}, for $\pi_0 \in \Pi_\alpha$, the iterates of projected policy gradient $\pi_k$ are $\epsilon$-optimal policy for 
\[ 
k\ge 512(1/\epsilon)  R|\mathcal S|\,|\mathcal A|C_{\alpha}^2(C_{\alpha}+1)
\left\|\frac{\delta_\mu^{\pi_{\alpha}^\star}}{\mu}\right\|_\infty^{2}.
\]
\end{corollary}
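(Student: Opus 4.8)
The plan is to split the optimality gap at the $\Pi_\alpha$-optimal policy,
\[
V^\star_\mu - V^{\pi_k}_\mu \;=\; \big(V^\star_\mu - V^{\pi^\star_\alpha}_\mu\big) \;+\; \big(V^{\pi^\star_\alpha}_\mu - V^{\pi_k}_\mu\big),
\]
and to show each summand is at most $\epsilon/2$. The optimization error $V^{\pi^\star_\alpha}_\mu - V^{\pi_k}_\mu$ is handled directly by Theorem~\ref{thm:proj_gd}: with the prescribed step size it is bounded by $\tfrac{256 R|\cS||\cA|C_\alpha^2(C_\alpha+1)}{k}\big\|\delta^{\pi^\star_\alpha}_\mu/\mu\big\|_\infty^2$, which drops below $\epsilon/2$ exactly once $k \ge \tfrac{512}{\epsilon}R|\cS||\cA|C_\alpha^2(C_\alpha+1)\big\|\delta^{\pi^\star_\alpha}_\mu/\mu\big\|_\infty^2$, i.e.\ the stated iteration count. (Here ``$\epsilon$-optimal'' should be read in the $\mu$-weighted sense $V^\star_\mu - V^{\pi_k}_\mu \le \epsilon$; since $\mu$ has full support one could further pass to $\infn{V^\star - V^{\pi_k}}$ at the cost of a $1/\min_s\mu(s)$ factor, but the clean constants come out only for the weighted gap.)

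For the approximation error, since rewards are nonnegative Lemma~\ref{lem:reward} gives $V^\star_\mu = V^\star_{+,\mu}$, so it suffices to exhibit one policy in $\Pi_\alpha$ within $\epsilon/2$ of $V^\star_\mu$. Fixing an optimal policy $\pi^\star$, I would use the softened policy $\tilde\pi_\alpha(\cdot\,|\,s) = (1-|\cA|\alpha)\pi^\star(\cdot\,|\,s) + \alpha\mathbf{1} \in \Pi_\alpha$ (valid since the chosen $\alpha \le 1/|\cA|$), so that $V^{\pi^\star_\alpha}_\mu \ge V^{\tilde\pi_\alpha}_\mu$ and it remains to bound $V^{\pi^\star}_\mu - V^{\tilde\pi_\alpha}_\mu$. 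I would apply the transient performance difference lemma (Lemma~\ref{lem:pdl}) with $\pi := \tilde\pi_\alpha$ and $\pi' := \pi^\star$. Its hypothesis asks that $V^{\pi^\star}$ vanish on every recurrent state of $\cP^{\tilde\pi_\alpha}$; since $\tilde\pi_\alpha \in \Pi_+$, Proposition~\ref{prop:classification} identifies these with the canonical recurrent states, and I would verify $V^{\pi}(s) = 0$ there for \emph{every} $\pi$: the canonical recurrent set is closed under all transitions of $P$ (else some $\pi \in \Pi_+$ would escape it from a recurrent state, contradicting recurrence), and Lemma~\ref{lem:recur_reward} together with $\pi \in \Pi_+$ forces $r(s,a) = 0$ for all actions at such states, so no reward is ever collected from the canonical recurrent set, whence also $Q^{\pi^\star} \equiv 0$ there. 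Lemma~\ref{lem:pdl} then gives
\[
V^{\pi^\star}_\mu - V^{\tilde\pi_\alpha}_\mu \;=\; \sum_{s}\delta^{\tilde\pi_\alpha}_\mu(s)\sum_{a}\big(Q^{\pi^\star}(s,a) - V^{\pi^\star}(s)\big)\big(\pi^\star(a\,|\,s) - \tilde\pi_\alpha(a\,|\,s)\big),
\]
where I inserted the baseline $V^{\pi^\star}(s)$ using $\sum_a(\pi^\star(a\,|\,s) - \tilde\pi_\alpha(a\,|\,s)) = 0$ and where only the transient states contribute. Bounding $|Q^{\pi^\star}(s,a) - V^{\pi^\star}(s)| \le 2\infn{Q^{\pi^\star}}$ and $\sum_a |\pi^\star(a\,|\,s) - \tilde\pi_\alpha(a\,|\,s)| \le 2|\cA|\alpha$ yields a bound of order $|\cS||\cA|\,\alpha\,\infn{\delta^{\pi^\star}_\mu}\,\infn{Q^{\pi^\star}}$, and the prescribed $\alpha = \tfrac{\epsilon}{2|\cS||\cA|\infn{\delta^{\pi^\star}_\mu}\infn{Q^{\pi^\star}}}$ makes it $\le \epsilon/2$. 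Adding the two estimates proves the corollary.

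The main obstacle is the last bound's dependence on $\delta^{\tilde\pi_\alpha}_\mu$ rather than on $\delta^{\pi^\star}_\mu$: because $\pi^\star$ may lie on $\partial\Pi = \Pi\setminus\Pi_+$, the recurrent/transient classification it induces can differ from the canonical one, and a canonical transient state that becomes ``nearly recurrent'' under $\pi^\star$ can make $\infn{\delta^{\tilde\pi_\alpha}_\mu}$ blow up as $\alpha \to 0$. Controlling this — so that $\infn{\delta^{\tilde\pi_\alpha}_\mu}$ is dominated by $\infn{\delta^{\pi^\star}_\mu}$ uniformly for small $\alpha$ — is exactly where the nonnegativity of rewards and the $\Pi_+$-invariance of the classification must be combined carefully: concretely, I expect one must choose the optimal $\pi^\star$ so that its recurrent class coincides with the canonical recurrent set (possible since every action at a canonical recurrent state is optimal and reward-free), after which $\pi\mapsto\delta^\pi_\mu$ is continuous at $\pi^\star$ and the softening perturbs it by only $O(\alpha)$. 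The remaining work — tracking the numerical constants so that ``$256$'' and ``$512$'' emerge as stated, and disposing of the degenerate case $\infn{Q^{\pi^\star}}=0$ — is routine.
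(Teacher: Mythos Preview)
Your overall plan---split $V^\star_\mu - V^{\pi_k}_\mu$ at $V^{\pi^\star_\alpha}_\mu$, handle the optimization error with Theorem~\ref{thm:proj_gd}, and control the approximation error by applying the transient performance difference lemma to a softening of $\pi^\star$---matches the paper exactly, and your reading of the iteration count is correct.

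The difference is in which direction you apply Lemma~\ref{lem:pdl}. You set $\pi:=\tilde\pi_\alpha$ and $\pi':=\pi^\star$, which produces the visitation measure $\delta^{\tilde\pi_\alpha}_\mu$ and the $Q$-function $Q^{\pi^\star}$; this is what forces you into the whole discussion of choosing $\pi^\star$ with the canonical recurrent set and invoking continuity of $\pi\mapsto\delta^\pi_\mu$ at the boundary. The paper instead swaps the roles: it takes $\pi:=\pi^\star$ and $\pi':=\tilde\pi\in\Pi_\alpha$ (this is exactly Corollary~\ref{cor:pdl_reward}, whose hypothesis ``$V^{\tilde\pi}(s)=0$ on the recurrent states of $\cP^{\pi^\star}$'' follows immediately from $0\le V^{\tilde\pi}\le V^\star$ and $V^\star=0$ there). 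That choice hands you $\delta^{\pi^\star}_\mu$ directly---the very quantity appearing in the prescribed $\alpha$---and leaves $Q^{\tilde\pi}$ instead of $Q^{\pi^\star}$. Nonnegativity of rewards then gives $0\le Q^{\tilde\pi}(s,a)\le Q^\star(s,a)$, so $\infn{Q^{\tilde\pi}}\le\infn{Q^{\pi^\star}}$, and the bound $V^\star_\mu - V^{\tilde\pi}_\mu \le |\cS||\cA|\,\infn{\delta^{\pi^\star}_\mu}\,\infn{Q^{\pi^\star}}\,\alpha = \epsilon/2$ drops out with no further work. Your obstacle simply does not arise. So your proposed fix (special choice of $\pi^\star$, continuity of the transient visitation measure) is unnecessary machinery, and moreover it would not obviously yield the stated constant, since the corollary fixes $\alpha$ in terms of an \emph{arbitrary} optimal $\pi^\star$, not a specially selected one.
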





In Theorem \ref{thm:proj_gd}, we establish convergence to $\epsilon$-optimality with 
$\epsilon = V_{\mu}^{\star} - V_\mu^{\pi^\star_\alpha}$, which can be made arbitrarily small by taking $\alpha>0$ to be small. However, we do not have convergence to exact optimality as $k \to \infty$, where $k$ is the iteration count. Moreover, the convergence rate is sublinear, and the constant factor depends on the sizes of the state and action spaces, which may be quite large. These shortcomings are addressed by the analysis of the natural policy gradient method presented in the next section.



\section{Convergence of natural policy gradient}\label{sec:natural_gd}
In this section, we study convergence of natural policy gradient with softmax parametrization:
\[\pi_{\theta} (a \,|\, s) = \frac{\exp(\theta_{s,a})}{
\sum_{a'}\exp(\theta_{s,a'}).
}\]
where $\theta \in \real^{|\cS|\times|\cA|}$.
The softmax parametrized policy automatically satisfies $\pi_\theta\in \Pi_+$, so a projection mechanism is no longer necessary.

For a given $\mu$ with full support, the natural policy gradient algorithm, which can be seen as a policy gradient with the Fisher information matrix, is
\begin{align*}    
\begin{aligned}    
\theta^{k+1}
&= \theta^{k} + \eta_k F_{\mu}(\theta^{k}\big)^{\dagger}\,
  \nabla_{\theta} V^{\pi_k}_\mu \end{aligned}
\quad\text{for }k=0,1,\dots,
\end{align*}
where $F_{\mu}(\theta^k)
= \sum_{s \in \cS}\delta^{\pi_\theta}_{\mu}(s)
  \mathbb{E}_{a \sim \pi_{\theta}(\cdot \mid s)}
  \Big[ \nabla_{\theta}\log \pi_{\theta}(a \mid s)\,
        \big(\nabla_{\theta}\log \pi_{\theta}(a \mid s)\big)^{\top}
  \Big]$ and $\dagger$ denotes the Moore--Penrose pseudoinverse. It is known that natural policy gradient algorithm can also be expressed as
\begin{align*}
    \pi_{k+1}( a \,|\, s) &= \pi_{k}( a \,|\, s) \frac{\exp(\eta_k Q^{\pi_k}(s,a))}{z_s^k} \\& \propto
    \pi_{0}( a \,|\, s) \exp\Big(
    \sum_{i=0}^k
    \eta_i Q^{\pi_i}(s,a)\Big)
\end{align*}
for all $    s\in\cS$, $a\in\cA$, and $k=0,1,\dots$,
where \[
z^k_s = \sum_{a \in \cA}\pi_{k}( a \,|\, s) \exp(\eta_k Q^{\pi_k}(s,a))\] 
is a normalization factor. In the online learning literature, this update rule is also known as multiplicative weights updates \citep{freund1997decision} and the update rule does not depend on the initial state distribution $\mu$, as the pseudoinverse of the Fisher information removes this dependency.

\subsection{Sublinear convergence with constant step size}

We establish the sublinear convergence of the policy gradient algorithm with a constant step size. As a first step in our analysis, we state the following lemma, which ensures that the policies generated by the natural policy gradient algorithm improve monotonically.

\begin{lemma}\label{lem:ngd_descent}
Under Assumption~\ref{assump_total},  for $\pi_0 \in \Pi_+$ and given $\mu$ with full support, the natural policy gradient with constant step size $\eta>0$ generates a sequence of policies $\{\pi_{k}\}^\infty_{k=1}$ satisfying
\[
V_\mu^{\pi_k} \le V_\mu^{\pi_{k+1}}.
\]
\end{lemma}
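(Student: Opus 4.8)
The plan is to apply the transient performance difference lemma (Lemma~\ref{lem:pdl}) to the consecutive iterates $\pi_k$ and $\pi_{k+1}$, and show that the resulting expression is nonnegative. First I would verify that the hypothesis of Lemma~\ref{lem:pdl} is met: since every iterate $\pi_k$ lies in $\Pi_+$ (the softmax parametrization guarantees this, and the multiplicative-weights update preserves strict positivity), Proposition~\ref{prop:classification} says all iterates share the same recurrent-transient classification, and by Lemma~\ref{lem:recur_reward} together with Lemma~\ref{lem:value_func} the value $V^{\pi_{k}}(s)=0$ at every recurrent state $s$ (a recurrent state has $r^{\pi}(s)=0$ and, being in the recurrent block, its row of $(I-T^\pi)^{-1}r^\pi$ vanishes). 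Hence the condition ``$V^{\pi_{k}}(s)=0$ for all recurrent states $s$ of $\cP^{\pi_{k+1}}$'' holds, and Lemma~\ref{lem:pdl} (with the roles $\pi\leftarrow\pi_{k+1}$, $\pi'\leftarrow\pi_k$) gives
\[
V_\mu^{\pi_{k+1}} - V_\mu^{\pi_k}
= \sum_{s\in\cS}\delta^{\pi_{k+1}}_\mu(s)\sum_{a\in\cA}Q^{\pi_k}(s,a)\bigl(\pi_{k+1}(a\mid s)-\pi_k(a\mid s)\bigr).
\]

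Next I would show each inner sum $\sum_{a}Q^{\pi_k}(s,a)(\pi_{k+1}(a\mid s)-\pi_k(a\mid s))$ is nonnegative for every $s$. This is where the specific form of the NPG update enters: $\pi_{k+1}(a\mid s)\propto \pi_k(a\mid s)\exp(\eta Q^{\pi_k}(s,a))$. Fix $s$ and write $q_a = Q^{\pi_k}(s,a)$, $p_a=\pi_k(a\mid s)$, $p'_a=\pi_{k+1}(a\mid s)=p_a e^{\eta q_a}/z$ with $z=\sum_a p_a e^{\eta q_a}$. The claim $\sum_a q_a(p'_a-p_a)\ge 0$ is the standard mirror-descent / exponential-weights improvement inequality. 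One clean way: the map $t\mapsto \log\!\bigl(\sum_a p_a e^{t q_a}\bigr)$ is convex with derivative at $t=0$ equal to $\sum_a p_a q_a = V^{\pi_k}(s)$ and derivative at $t=\eta$ equal to $\sum_a p'_a q_a$; convexity of the derivative (monotonicity) gives $\sum_a p'_a q_a \ge \sum_a p_a q_a$, i.e.\ exactly $\sum_a q_a(p'_a - p_a)\ge 0$. Alternatively, a short direct argument via the FKG/Chebyshev correlation inequality works: $p'_a/p_a = e^{\eta q_a}/z$ is a nondecreasing function of $q_a$, so under the distribution $p$ the random variables $q_a$ and $p'_a/p_a$ are positively correlated, giving $\sum_a p_a q_a (p'_a/p_a) \ge (\sum_a p_a q_a)(\sum_a p_a p'_a/p_a) = \sum_a p_a q_a$, which rearranges to the desired inequality.

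Finally, since $\delta^{\pi_{k+1}}_\mu(s)\ge 0$ for all $s$ (it is a sum of nonnegative transition probabilities, and finite by Fact~\ref{fact:trans_spec}), the whole double sum is nonnegative, yielding $V_\mu^{\pi_{k+1}}\ge V_\mu^{\pi_k}$. I expect the main obstacle to be the careful justification that the hypothesis of Lemma~\ref{lem:pdl} actually holds for the NPG iterates—specifically, pinning down that $V^{\pi_k}$ vanishes on the recurrent states of $\cP^{\pi_{k+1}}$, which relies on the invariance of the recurrent class across $\Pi_+$ (Proposition~\ref{prop:classification}) so that ``recurrent states of $\cP^{\pi_{k+1}}$'' coincides with ``recurrent states of $\cP^{\pi_k}$''. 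The per-state improvement inequality itself is routine once the right inequality (convexity of the log-partition function, or the correlation inequality) is invoked.
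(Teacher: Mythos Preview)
Your proof is correct and follows the same high-level structure as the paper: apply the transient performance difference lemma to the pair $(\pi_{k+1},\pi_k)$ (the paper cites its Corollary~\ref{cor:pdl}, which packages exactly the $\Pi_+$-invariance verification you carry out explicitly) and then establish the per-state inequality $\sum_a Q^{\pi_k}(s,a)\bigl(\pi_{k+1}(a\mid s)-\pi_k(a\mid s)\bigr)\ge 0$. The difference lies in how this per-state inequality is obtained. You argue directly from convexity of the log-partition function $t\mapsto\log\sum_a p_a e^{tq_a}$ (or equivalently via a Chebyshev correlation inequality), which is elementary and fully self-contained. The paper instead invokes the mirror-descent three-point lemma (Fact~\ref{fact:bregman}) with $\phi(p)=-\eta\langle Q^{\pi_k}(\cdot,s),p\rangle$ and KL as the Bregman divergence, then specializes to $p=\pi_k(\cdot\mid s)$ and uses nonnegativity of KL. The paper's route is slightly less direct for this lemma alone, but it produces an intermediate inequality (labelled $(*)$ in their appendix) that is reused verbatim in the proofs of Theorems~\ref{thm:ngd} and~\ref{thm:ngd_linear}; your argument is cleaner here but does not supply that reusable scaffolding.
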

Next, define
\[
\mathrm{KL}_{\delta_\mu^{\pi}}\!\big(\pi,\,\pi'\big)
 = \sum_{s\in\cS} \delta_{\mu}^{\pi}(s)
      \mathrm{KL}\big(\pi (\cdot\,|\,s),\,\pi_{k}(\cdot \,|\, s)\big),
      \]      
     where $\mathrm{KL}(p,q)=\sum^n_{i=1}p_i \log (p_i/q_i)$ for $p,q \in \cM(\cA)$, and also define 
     $
\infn{(I-T^\pi)^{-1}} = C_\pi <\infty.$
Combining Lemmas~\ref{lem:pdl} and \ref{lem:ngd_descent}, we obtain the following sublinear convergence.
\begin{theorem}\label{thm:ngd}
Under Assumption~\ref{assump_total}, for any $ \pi, \pi_0\in \Pi_+$ and given $\mu$ with full support, the natural policy gradient with constant step size $\eta>0$ generates a sequence of policies $\{\pi_{k}\}^\infty_{k=1}$ satisfying
\[
V_\mu^{\pi}-V_\mu^{\pi_k} 
\le \frac{1}{k+1}\left(\frac{\mathrm{KL}_{\delta_\mu^{\pi}}(\pi,\pi_0)}{\eta}
+ 2C_\pi \infn{V^\star_{+}}\right).
\]
Hence, $V_\mu^{\pi_k} \rightarrow V^\star_{+,\mu}$ as $k\rightarrow \infty$.
\end{theorem}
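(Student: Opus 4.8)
The plan is to run a mirror-descent telescoping argument in the spirit of \cite{xiao2022convergence}, replacing the discounted visitation measure by the transient visitation measure $\delta_\mu^{\pi}$ and the usual performance difference lemma by the transient one (Lemma~\ref{lem:pdl}). Fix a comparator $\pi\in\Pi_+$. Starting from the multiplicative-weights form of the update, $\log\frac{\pi_{k+1}(a\mid s)}{\pi_k(a\mid s)}=\eta\,Q^{\pi_k}(s,a)-\log z_s^k$, one obtains the one-state identity $\mathrm{KL}\big(\pi(\cdot\mid s),\pi_k(\cdot\mid s)\big)-\mathrm{KL}\big(\pi(\cdot\mid s),\pi_{k+1}(\cdot\mid s)\big)=\eta\sum_a\pi(a\mid s)Q^{\pi_k}(s,a)-\log z_s^k$. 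Multiplying by $\delta_\mu^{\pi}(s)$, summing over $s$, and using Lemma~\ref{lem:pdl} on the pair $(\pi,\pi_k)$ to rewrite $\sum_s\delta_\mu^{\pi}(s)\sum_a\pi(a\mid s)Q^{\pi_k}(s,a)$ as $V_\mu^{\pi}-V_\mu^{\pi_k}+\sum_s\delta_\mu^{\pi}(s)V^{\pi_k}(s)$ gives the exact per-step relation
\[
\eta\big(V_\mu^{\pi}-V_\mu^{\pi_k}\big)=\mathrm{KL}_{\delta_\mu^{\pi}}(\pi,\pi_k)-\mathrm{KL}_{\delta_\mu^{\pi}}(\pi,\pi_{k+1})+E_k,\qquad E_k:=\sum_{s\in\cS}\delta_\mu^{\pi}(s)\big(\log z_s^k-\eta\,V^{\pi_k}(s)\big).
\]
The hypothesis of Lemma~\ref{lem:pdl} holds automatically here: by Proposition~\ref{prop:classification} all policies in $\Pi_+$ share one recurrent class, which is closed, and by Lemma~\ref{lem:recur_reward} every recurrent state has zero reward-to-go, so $V^{\pi_k}$ vanishes on the recurrent states of $\cP^{\pi}$.

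Summing the per-step relation over $k=0,\dots,N$, the $\mathrm{KL}$ terms telescope and $-\mathrm{KL}_{\delta_\mu^{\pi}}(\pi,\pi_{N+1})\le0$, so the task reduces to controlling $\sum_{k=0}^N E_k$; this is the crux. From $\log z_s^k=\eta\sum_a\pi_{k+1}(a\mid s)Q^{\pi_k}(s,a)-\mathrm{KL}\big(\pi_{k+1}(\cdot\mid s),\pi_k(\cdot\mid s)\big)$ and $\mathrm{KL}\ge0$ we get $\log z_s^k\le\eta\sum_a\pi_{k+1}(a\mid s)Q^{\pi_k}(s,a)$. Next, the iterates improve \emph{pointwise}: applying Lemma~\ref{lem:pdl} with initial distribution $\delta_s$ and the mirror-descent optimality inequality $\sum_a\big(\pi_{k+1}(a\mid s)-\pi_k(a\mid s)\big)Q^{\pi_k}(s,a)\ge\tfrac1\eta\mathrm{KL}\big(\pi_{k+1}(\cdot\mid s),\pi_k(\cdot\mid s)\big)\ge0$ (which also underlies Lemma~\ref{lem:ngd_descent}) yields $V^{\pi_{k+1}}\ge V^{\pi_k}$ everywhere, hence $Q^{\pi_{k+1}}=\cP V^{\pi_{k+1}}+r\ge\cP V^{\pi_k}+r=Q^{\pi_k}$, and therefore $\sum_a\pi_{k+1}(a\mid s)Q^{\pi_k}(s,a)\le\sum_a\pi_{k+1}(a\mid s)Q^{\pi_{k+1}}(s,a)=V^{\pi_{k+1}}(s)$. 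Combining these, $\log z_s^k-\eta V^{\pi_k}(s)\le\eta\big(V^{\pi_{k+1}}(s)-V^{\pi_k}(s)\big)$, so $E_k\le\eta\sum_s\delta_\mu^{\pi}(s)\big(V^{\pi_{k+1}}(s)-V^{\pi_k}(s)\big)$, which telescopes to
\[
\sum_{k=0}^N E_k\le\eta\sum_{s\in\cS}\delta_\mu^{\pi}(s)\big(V^{\pi_{N+1}}(s)-V^{\pi_0}(s)\big).
\]
Writing $\delta_\mu^{\pi}$ as the row vector $\mu^{\top}(I-T^{\pi})^{-1}$ (entrywise nonnegative via the Neumann series), $\|\mu\|_1=1$ gives $\sum_s\delta_\mu^{\pi}(s)\le\infn{(I-T^{\pi})^{-1}}=C_\pi$; and $\pi_{N+1}\in\Pi_+\Rightarrow V^{\pi_{N+1}}\le V^{\star}_{+}$ together with $V^{\pi_{N+1}}\ge V^{\pi_0}$ gives $\infn{V^{\pi_{N+1}}-V^{\pi_0}}\le\infn{V^{\star}_{+}}+\infn{V^{\pi_0}}$. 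Hence $\sum_{k=0}^N E_k\le\eta\,C_\pi\big(\infn{V^{\star}_{+}}+\infn{V^{\pi_0}}\big)$.

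Assembling, $\sum_{k=0}^N\big(V_\mu^{\pi}-V_\mu^{\pi_k}\big)\le\mathrm{KL}_{\delta_\mu^{\pi}}(\pi,\pi_0)/\eta+C_\pi\big(\infn{V^{\star}_{+}}+\infn{V^{\pi_0}}\big)$; since $k\mapsto V_\mu^{\pi_k}$ is nondecreasing (Lemma~\ref{lem:ngd_descent}), the map $k\mapsto V_\mu^{\pi}-V_\mu^{\pi_k}$ is nonincreasing, so $(N+1)\big(V_\mu^{\pi}-V_\mu^{\pi_N}\big)$ is bounded by the same quantity, which is precisely the stated $1/(k+1)$ rate. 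For the last assertion, the bound holds for every $\pi\in\Pi_+$ with a finite right-hand side ($\mathrm{KL}$ finite since $\pi_0\in\Pi_+$, and $C_\pi<\infty$ by Fact~\ref{fact:trans_spec}), so letting $k\to\infty$ gives $\liminf_k V_\mu^{\pi_k}\ge V_\mu^{\pi}$ for all $\pi\in\Pi_+$; combined with $V_\mu^{\pi_k}\le V^{\star}_{+,\mu}$ and monotonicity this forces $V_\mu^{\pi_k}\to V^{\star}_{+,\mu}$. I expect the main obstacle to be exactly the bound on $\sum_k E_k$: making the accumulated error telescope against $C_\pi=\infn{(I-T^{\pi})^{-1}}$ (rather than against a distribution-mismatch coefficient such as $\infn{\delta_\mu^{\pi}/\mu}$) hinges on the pointwise monotone improvement of the natural policy gradient iterates and on the recurrent/transient bookkeeping that keeps $\delta_\mu^{\pi}$ a finite measure effectively supported on the transient states.
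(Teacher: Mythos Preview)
Your proof is correct and follows the same high-level mirror-descent telescoping strategy as the paper (both ultimately in the spirit of \cite{xiao2022convergence}), but the way you handle the ``error'' term is organized differently from the paper's argument.

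The paper starts from the abstract three-point descent lemma for Bregman divergences, splits $\langle Q^{\pi_k}(\cdot,s),\pi(\cdot\mid s)-\pi_{k+1}(\cdot\mid s)\rangle$ as $\langle Q^{\pi_k},\pi_k-\pi_{k+1}\rangle+\langle Q^{\pi_k},\pi-\pi_k\rangle$, and then controls the first piece by a change-of-measure trick: it constructs an auxiliary probability distribution $(\delta_\mu^{\pi})'$ with $(\delta_\mu^{\pi})'(s)\ge\delta_\mu^{\pi}(s)/\infn{\delta_\mu^{\pi}}$, uses $\delta_{\nu}^{\pi_{k+1}}(s)\ge\nu(s)$, and then the performance difference lemma to express the sum as $V_{(\delta_\mu^{\pi})'}^{\pi_k}-V_{(\delta_\mu^{\pi})'}^{\pi_{k+1}}$, which telescopes. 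This produces the constant $\infn{\delta_\mu^{\pi}}$, afterward relaxed to $C_\pi$.

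You instead work directly with the explicit multiplicative-weights identity, isolate $E_k=\sum_s\delta_\mu^{\pi}(s)(\log z_s^k-\eta V^{\pi_k}(s))$, and bound $\log z_s^k\le\eta V^{\pi_{k+1}}(s)$ via the extra observation that the iterates improve \emph{pointwise}, so that $Q^{\pi_{k+1}}\ge Q^{\pi_k}$ entrywise. The error then telescopes state-by-state, and you close with $\sum_s\delta_\mu^{\pi}(s)\le\infn{(I-T^{\pi})^{-1}}=C_\pi$. This avoids the auxiliary measure construction at the price of one additional (but easy) monotonicity step; it also reaches $C_\pi$ through the $\ell_1$-mass of $\delta_\mu^{\pi}$ rather than its $\ell_\infty$-norm. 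Both routes are clean and yield exactly the stated bound.
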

Appendix~\ref{appen:miss_6} provides the proof, which closely follows \cite{xiao2022convergence}.

Unlike the projected policy gradient algorithm, the convergence rate of the policy gradient method is independent of the size of the state or action space. Moreover, if we assume the rewards are nonnegative, the convergence result can be strengthened from
$V_\mu^{\pi_k} \to V^\star_{+,\mu}$ to
$V_\mu^{\pi_k} \to V^\star_\mu$.


\begin{corollary}\label{cor:ngd}
Assume the rewards are nonnegative. Under Assumption~\ref{assump_total},  for any $\pi, \pi_0 \in \Pi_+$ and given $\mu$ with full support, the natural policy gradient algorithm with constant step size $\eta>0$ generates a sequence of policies $\{\pi_{k}\}^\infty_{k=1}$ satisfying
\[
V_\mu^{\star}-V_\mu^{\pi_k} 
\le \frac{1}{k+1}\left(\frac{\mathrm{KL}_{\delta^{\pi^\star}_{\mu}}(\pi^\star,\pi_0)}{\eta}
+ C_{\pi^\star}\infn{V^\star}\right)
\]
for any optimal policy $\pi^\star$.
\end{corollary}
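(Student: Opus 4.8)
\textbf{Proof plan for Corollary~\ref{cor:ngd}.}
The plan is to reduce Corollary~\ref{cor:ngd} to an application of Theorem~\ref{thm:ngd} with the comparator policy $\pi$ taken to be an optimal policy $\pi^\star$, and then to upgrade the right-hand side using the nonnegativity of the rewards. First I would invoke Lemma~\ref{lem:reward}: when rewards are nonnegative, $V^\star_{+,\mu}=V^\star_\mu$, and more importantly $V^{\pi^\star}(s)=V^\star(s)$ for all $s$, so the left-hand side $V_\mu^\star-V_\mu^{\pi_k}$ in the corollary is exactly $V_\mu^{\pi^\star}-V_\mu^{\pi_k}$, the quantity bounded by Theorem~\ref{thm:ngd}. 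The one subtlety in invoking Theorem~\ref{thm:ngd} with $\pi=\pi^\star$ is that $\pi^\star$ need not lie in $\Pi_+$; but an inspection of the statement of Theorem~\ref{thm:ngd} shows the comparator $\pi$ ranges over all of $\Pi$ (only the initial iterate $\pi_0$ is required to be in $\Pi_+$), so this is not an obstacle. Thus Theorem~\ref{thm:ngd} gives
\[
V_\mu^{\star}-V_\mu^{\pi_k}
\le \frac{1}{k+1}\left(\frac{\mathrm{KL}_{\delta_\mu^{\pi^\star}}(\pi^\star,\pi_0)}{\eta}
+ C_{\pi^\star}\big(\infn{V^\star_{+}}+\infn{V^{\pi_0}}\big)\right).
\]

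It remains to replace the term $\infn{V^\star_+}+\infn{V^{\pi_0}}$ by $\infn{V^\star}$. The key steps here all use nonnegativity of the reward. First, $V^\star_+=V^\star$ (the pointwise version of Lemma~\ref{lem:reward}), so $\infn{V^\star_+}=\infn{V^\star}$. Second, for the $\pi_0$ term: with nonnegative rewards and Assumption~\ref{assump_total}, $0\le V^{\pi_0}(s)\le V^\star(s)$ for every $s$, hence $\infn{V^{\pi_0}}\le\infn{V^\star}$. This would naively give a factor of $2\infn{V^\star}$ rather than the stated $\infn{V^\star}$, so I expect the actual argument in the appendix to be slightly tighter than a black-box application of Theorem~\ref{thm:ngd}: one should revisit the telescoping/summation step in the proof of Theorem~\ref{thm:ngd} (which follows \cite{xiao2022convergence}) and observe that the term $C_\pi(\infn{V^\star_+}+\infn{V^{\pi_0}})$ arises from bounding a quantity of the form $V^{\pi_{k}}(s)-V^{\pi_0}(s)$ or $V^\star_+(s)-V^{\pi_0}(s)$ uniformly over the transient states and over $k$; when rewards are nonnegative and the iterates improve monotonically (Lemma~\ref{lem:ngd_descent}), one has $0\le V^{\pi_0}(s)\le V^{\pi_k}(s)\le V^\star(s)$, so this difference is bounded by $\infn{V^\star}$ directly, yielding the single factor.

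The main obstacle, then, is not the conceptual reduction — which is immediate from Lemma~\ref{lem:reward} — but the bookkeeping needed to obtain the sharp constant $\infn{V^\star}$ instead of $2\infn{V^\star}$: one must trace precisely where the $\infn{V^\star_+}+\infn{V^{\pi_0}}$ factor enters the proof of Theorem~\ref{thm:ngd} and exploit the sandwich $0\le V^{\pi_0}\le V^{\pi_k}\le V^\star$ (valid only under nonnegativity, via Lemma~\ref{lem:ngd_descent} and Lemma~\ref{lem:reward}) at exactly that point. Everything else — the choice $\pi=\pi^\star$, the identification of $\mathrm{KL}_{\delta_\mu^{\pi^\star}}(\pi^\star,\pi_0)$, the $1/(k+1)$ rate — is inherited verbatim from Theorem~\ref{thm:ngd}.
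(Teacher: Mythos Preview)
Your overall plan---rerun the proof of Theorem~\ref{thm:ngd} with the comparator $\pi=\pi^\star$ and use reward nonnegativity to sharpen $\infn{V^\star_+}+\infn{V^{\pi_0}}$ to $\infn{V^\star}$---is exactly what the paper does. Your diagnosis of the sharpening step is also correct: the telescoped bound in the proof of Theorem~\ref{thm:ngd} leaves a term of the form $V^{\pi_{k+1}}_{\nu}-V^{\pi_0}_{\nu}$ for a probability measure $\nu$, and with nonnegative rewards one uses $V^{\pi_0}_{\nu}\ge 0$ and $V^{\pi_{k+1}}_{\nu}\le \infn{V^\star}$ to get the single factor (the paper phrases this simply as ``use the fact that $V^\pi_\mu\ge 0$'').

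There is, however, a genuine gap in how you dismiss the issue of $\pi^\star\notin\Pi_+$. Your claim that ``the comparator $\pi$ ranges over all of $\Pi$'' in Theorem~\ref{thm:ngd} is not correct: the theorem concludes only $V_\mu^{\pi_k}\to V^\star_{+,\mu}$ (not $V^\star_\mu$), and its proof invokes Corollary~\ref{cor:pdl}---which requires both policies to lie in $\Pi_+$---to obtain the identity
\[
\sum_{s\in\cS}\delta_\mu^{\pi}(s)\sum_{a\in\cA} Q^{\pi_k}(s,a)\big(\pi(a\,|\,s)-\pi_k(a\,|\,s)\big)=V_\mu^\pi-V_\mu^{\pi_k}.
\]
When you set $\pi=\pi^\star$, Corollary~\ref{cor:pdl} may fail because the recurrent--transient classification under $\pi^\star$ need not coincide with that of $\Pi_+$. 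The paper's fix is precisely to substitute Corollary~\ref{cor:pdl_reward} (the nonnegative-reward performance difference lemma) for Corollary~\ref{cor:pdl} at this step: nonnegative rewards force $0\le V^{\pi_k}(s)\le V^\star(s)=0$ on every recurrent state $s$ of $P^{\pi^\star}$, so the hypothesis of Lemma~\ref{lem:pdl} is met. This is the one technical ingredient your plan does not supply, and it is the only reason nonnegativity is needed beyond the constant-sharpening you already identified.
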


\subsection{linear convergence with adaptive step size}

Next, we present the \emph{linear} convergence rate of the natural policy gradient algorithm with adaptive step size. For a given $\mu$ with full support, define  $
\vartheta^\pi_{\mu}
= \left\|
\frac{\delta_{\mu}^{\pi}}{\mu}
\right\|_{\infty}\in [1,\infty),$
which represent the distribution mismatch between $\mu $ and $\delta_{\mu}^{\pi}$.
\begin{theorem}\label{thm:ngd_linear}
Under Assumption~\ref{assump_total}, for  $\pi_0 \in \Pi_+$ and $\mu$ with full support, the natural policy gradient algorithm with step sizes $(\vartheta^\pi_{\mu}-1)\eta_{k+1}\ge \vartheta^\pi_{\mu}\eta_k>0$ generates a sequence of policies $\{\pi_{k}\}^\infty_{k=1}$ satisfying   
\[
V_\mu^\pi-V_\mu^{\pi_{k}} 
\le
\left(1-\frac{1}{\vartheta^\pi_{\mu}}\right)^{\!k}
\left(
V_\mu^{\pi}- V_\mu^{\pi_{0}}
+ \frac{\mathrm{KL}_{\delta_\mu^{\pi}}(\pi, \pi_{0})}{\eta_0(\vartheta^\pi_\mu-1)}
\right).
\]
 Hence, $V_\mu^{\pi_k} \rightarrow V^\star_{+,\mu}$ as $k\rightarrow \infty$.
\end{theorem}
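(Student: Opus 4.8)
The plan is to carry out the standard mirror-descent (``multiplicative weights'') analysis of the natural policy gradient update, but with the transient visitation measure $\delta^\pi_\mu$ in place of the discounted state-visitation measure, feeding in the transient performance difference lemma (Lemma~\ref{lem:pdl}) and the monotone improvement property (Lemma~\ref{lem:ngd_descent}), and then packaging the resulting one-step inequality into a Lyapunov function that contracts at rate $1-1/\vartheta^\pi_\mu$. Fix a comparison policy $\pi\in\Pi_{+}$. By Proposition~\ref{prop:classification}, $P^{\pi}$, $P^{\pi_k}$, and $P^{\pi_{k+1}}$ all share the same recurrent--transient classification, and $V^{\pi_k}$ vanishes on those recurrent states (by Lemma~\ref{lem:recur_reward} and the block structure of $T^{\pi_k}$), so Lemma~\ref{lem:pdl} is applicable with the pairs $(\pi,\pi_k)$ and $(\pi_{k+1},\pi_k)$. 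Note also that the step-size hypothesis forces $\vartheta^\pi_\mu>1$. Abbreviate $\vartheta=\vartheta^\pi_\mu$, $\Delta_k=V^\pi_\mu-V^{\pi_k}_\mu$, and $\mathrm{KL}_k=\mathrm{KL}_{\delta^\pi_\mu}(\pi,\pi_k)$.

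First I would record the two consequences of the fact that $\pi_{k+1}(\cdot\,|\,s)$ is the maximizer over $\mathcal{M}(\mathcal{A})$ of $p\mapsto\eta_k\langle Q^{\pi_k}(s,\cdot),p\rangle-\mathrm{KL}(p,\pi_k(\cdot\,|\,s))$: the Bregman three-point inequality
\[
\eta_k\big\langle Q^{\pi_k}(s,\cdot),\,\pi_{k+1}(\cdot\,|\,s)-\pi(\cdot\,|\,s)\big\rangle \;\ge\; \mathrm{KL}\big(\pi(\cdot\,|\,s),\pi_{k+1}(\cdot\,|\,s)\big)-\mathrm{KL}\big(\pi(\cdot\,|\,s),\pi_k(\cdot\,|\,s)\big),
\]
and, comparing the maximizer against $\pi_k(\cdot\,|\,s)$ itself, the pointwise improvement $\langle Q^{\pi_k}(s,\cdot),\pi_{k+1}(\cdot\,|\,s)-\pi_k(\cdot\,|\,s)\rangle\ge0$ (this is also the engine of Lemma~\ref{lem:ngd_descent}). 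Multiplying the first inequality by $\delta^\pi_\mu(s)\ge0$, summing over $s$, and splitting $\pi_{k+1}-\pi=(\pi_{k+1}-\pi_k)+(\pi_k-\pi)$, I would evaluate the $(\pi_k-\pi)$ part by Lemma~\ref{lem:pdl} as $-\Delta_k$, and bound the $(\pi_{k+1}-\pi_k)$ part using $\delta^\pi_\mu(s)\le\vartheta\,\mu(s)\le\vartheta\,\delta^{\pi_{k+1}}_\mu(s)$ (the last step because the $i=0$ term of the transient visitation measure gives $\delta^{\pi_{k+1}}_\mu\ge\mu$) together with the pointwise nonnegativity, which converts it into $\vartheta\big(V^{\pi_{k+1}}_\mu-V^{\pi_k}_\mu\big)$ via Lemma~\ref{lem:pdl} applied to $(\pi_{k+1},\pi_k)$. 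Writing $V^{\pi_{k+1}}_\mu-V^{\pi_k}_\mu=\Delta_k-\Delta_{k+1}$, this yields the one-step recursion
\[
\vartheta\,\Delta_{k+1}\;\le\;(\vartheta-1)\,\Delta_k+\tfrac{1}{\eta_k}\big(\mathrm{KL}_k-\mathrm{KL}_{k+1}\big).
\]

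Next I would close the recursion with the Lyapunov function $\Psi_k=\Delta_k+\dfrac{\mathrm{KL}_k}{(\vartheta-1)\eta_k}$. Substituting the recursion, discarding $\mathrm{KL}_{k+1}\ge0$, and using the step-size hypothesis $(\vartheta-1)\eta_{k+1}\ge\vartheta\eta_k>0$ in the form $\tfrac{1}{(\vartheta-1)\eta_{k+1}}-\tfrac{1}{\vartheta\eta_k}\le0$, one checks $\Psi_{k+1}\le(1-\tfrac1\vartheta)\Psi_k$; iterating and noting $\Delta_k\le\Psi_k$ (since $\mathrm{KL}_k\ge0$) gives exactly the claimed bound, with $\Psi_0=V^\pi_\mu-V^{\pi_0}_\mu+\mathrm{KL}_{\delta^\pi_\mu}(\pi,\pi_0)/\big((\vartheta-1)\eta_0\big)$. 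For the concluding assertion $V^{\pi_k}_\mu\to V^\star_{+,\mu}$, I would let $k\to\infty$ to obtain $\liminf_k V^{\pi_k}_\mu\ge V^\pi_\mu$ for each such $\pi$, then combine with the monotonicity of $\{V^{\pi_k}_\mu\}$ from Lemma~\ref{lem:ngd_descent} and $V^{\pi_k}_\mu\le V^\star_{+,\mu}$, and take the supremum over $\pi\in\Pi_+$, exactly as in Theorem~\ref{thm:ngd}.

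I expect the main obstacle to be the estimate in the second paragraph that the $\delta^\pi_\mu$-weighted one-step gain $\sum_s\delta^\pi_\mu(s)\langle Q^{\pi_k}(s,\cdot),\pi_{k+1}(\cdot\,|\,s)-\pi_k(\cdot\,|\,s)\rangle$ is dominated by $\vartheta\,(V^{\pi_{k+1}}_\mu-V^{\pi_k}_\mu)$: this is exactly where the distribution-mismatch coefficient $\vartheta^\pi_\mu$ is needed and where one must pass between the two transient visitation measures $\delta^\pi_\mu$ and $\delta^{\pi_{k+1}}_\mu$, relying on both the per-state nonnegativity of the natural-gradient improvement and the elementary lower bound $\delta^{\pi}_\mu\ge\mu$. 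The subsequent step-size bookkeeping is routine once the recursion is in hand, the condition $(\vartheta^\pi_\mu-1)\eta_{k+1}\ge\vartheta^\pi_\mu\eta_k$ being precisely what makes the $\mathrm{KL}$ terms telescope into the geometric rate $1-1/\vartheta^\pi_\mu$.
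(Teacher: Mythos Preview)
Your proposal is correct and follows essentially the same route as the paper's proof: the Bregman three-point inequality weighted by $\delta^\pi_\mu$, the transient performance difference lemma applied to both $(\pi,\pi_k)$ and $(\pi_{k+1},\pi_k)$, the distribution-mismatch step $\delta^\pi_\mu\le\vartheta^\pi_\mu\,\delta^{\pi_{k+1}}_\mu$ combined with the per-state nonnegativity from Lemma~\ref{lem:ngd_descent}, and the same Lyapunov contraction driven by the step-size condition. The only cosmetic difference is that the paper first records the recursion with the sharper per-step coefficient $\vartheta^\pi_{k+1}=\|\delta^\pi_\mu/\delta^{\pi_{k+1}}_\mu\|_\infty$ (their Lemma~\ref{lem:ngd_linear}) and then bounds it by $\vartheta^\pi_\mu$, whereas you pass through $\mu$ directly; the resulting one-step inequality and the subsequent telescoping are identical.
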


As the distribution mismatch decreases, we can see that the natural policy gradient converges faster.
Again, the convergence rate is independent of the size of the state or action space, and we can strengthen the convergence result if we assume the rewards are nonnegative.

\begin{corollary}\label{cor:ngd_linear}
Assume the rewards are nonnegative. Under Assumption~\ref{assump_total},  for $\pi_0 \in \Pi_+$ and given $\mu$ with full support, the natural policy gradient algorithm with step size $(\vartheta^{\pi^\star}_{\mu}-1)\eta_{k+1}\ge \vartheta^{\pi^\star}_{\mu}\eta_k>0$ generates a sequence of policies $\{\pi_{k}\}^\infty_{k=1}$ satisfying   
\[
V_\mu^\star-V_\mu^{\pi_{k}} 
\le
\left(1-\frac{1}{\vartheta^{\pi^\star}_{\mu}}\right)^{\!k}
\left(
V_\mu^{\star}- V_\mu^{\pi_{0}}
+ \frac{\mathrm{KL}_{\delta_\mu^{\pi^\star}}(\pi^\star, \pi_{0})}{\eta_0(\vartheta^{\pi^\star}_\mu-1)}
\right)
\]    
for any optimal policy $\pi^\star$.
\end{corollary}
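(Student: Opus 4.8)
The plan is to obtain Corollary~\ref{cor:ngd_linear} by applying Theorem~\ref{thm:ngd_linear} with the comparator policy $\pi$ taken to be a stationary optimal policy $\pi^\star$ (one exists), so that $V^{\pi^\star}_\mu = V^\star_\mu$ and the bound of Theorem~\ref{thm:ngd_linear}, with its step-size schedule now read with $\pi=\pi^\star$, becomes exactly the asserted inequality. The catch is that $\pi^\star$ need not belong to $\Pi_+$, so I must check that the proof of Theorem~\ref{thm:ngd_linear} survives this choice of comparator. Inspecting that proof, the comparator's location enters only through the transient performance difference lemma (Lemma~\ref{lem:pdl}) invoked with $\pi^\star$ in the role of $\pi$ and an iterate $\pi_k\in\Pi_+$ in the role of $\pi'$; its hypothesis demands $V^{\pi_k}(s)=0$ for every state $s$ recurrent under $\cP^{\pi^\star}$ and every $k$. (The remaining quantity $\mathrm{KL}_{\delta^{\pi^\star}_\mu}(\pi^\star,\pi_0)$ is finite because $\pi_0\in\Pi_+$ has full support, so no KL denominator vanishes, and $\delta^{\pi^\star}_\mu$ is finite by Fact~\ref{fact:trans_spec}.) This recurrent-state identity is false for general comparators --- that failure is precisely the mechanism behind Pathology~\ref{discontinuity} --- so verifying it is where nonnegativity of the rewards enters.

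To verify it, note first that nonnegative rewards force $V^\pi_{\smallminus}\equiv 0$, hence $V^\pi(s)=V^\pi_{\smallplus}(s)\ge 0$ for all $\pi\in\Pi$ and $s\in\cS$; in particular $0\le V^{\pi_k}(s)\le V^\star(s)$. Next, if $s$ is recurrent under $\cP^{\pi^\star}$, then the $s$-th row of the transient matrix $T^{\pi^\star}$ is zero by definition, so $e_s^{\intercal}(I-T^{\pi^\star})^{-1}=e_s^{\intercal}$, and by Lemma~\ref{lem:value_func} together with Lemma~\ref{lem:recur_reward} we get $V^\star(s)=V^{\pi^\star}(s)=e_s^{\intercal}(I-T^{\pi^\star})^{-1}r^{\pi^\star}=r^{\pi^\star}(s)=0$. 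Combining the two observations, $0\le V^{\pi_k}(s)\le V^\star(s)=0$, i.e.\ $V^{\pi_k}(s)=0$ at every recurrent state of $\cP^{\pi^\star}$ --- exactly the hypothesis of Lemma~\ref{lem:pdl} for the pair $(\pi^\star,\pi_k)$. Therefore the proof of Theorem~\ref{thm:ngd_linear} goes through verbatim with $\pi=\pi^\star$, even though $\pi^\star\notin\Pi_+$ in general.

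It then remains only to substitute. The quantity $\vartheta^{\pi^\star}_\mu=\|\delta^{\pi^\star}_\mu/\mu\|_\infty$ is well defined and lies in $[1,\infty)$: the $n=0$ term of the defining series gives $\delta^{\pi^\star}_\mu(s)\ge\mu(s)$, and finiteness is Fact~\ref{fact:trans_spec} plus the full support of $\mu$. Plugging $\pi=\pi^\star$ into the conclusion of Theorem~\ref{thm:ngd_linear} and replacing $V^{\pi^\star}_\mu$ by $V^\star_\mu$ yields the stated bound, for any optimal $\pi^\star$. I expect the sandwich $0\le V^{\pi_k}(s)\le V^\star(s)=0$ on the recurrent states of $\cP^{\pi^\star}$ to be the main point: it is the only place nonnegativity is used and the only obstruction to re-running the proof of Theorem~\ref{thm:ngd_linear} for a boundary comparator; everything else is bookkeeping. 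One could instead observe, via Lemma~\ref{lem:reward}, that $V^\star_{+,\mu}=V^\star_\mu$ and that $\pi\mapsto V^\pi_\mu$ is continuous at optimal policies, but that only recovers the limit $V^{\pi_k}_\mu\to V^\star_\mu$ and loses the geometric rate, so the direct argument above is needed.
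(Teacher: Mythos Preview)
Your proposal is correct and follows essentially the same approach as the paper. The paper's proof simply re-runs the argument of Theorem~\ref{thm:ngd_linear} with $\pi=\pi^\star$, replacing the one invocation of Corollary~\ref{cor:pdl} that involves the comparator by Corollary~\ref{cor:pdl_reward}; the latter's proof is precisely your sandwich $0\le V^{\pi_k}(s)\le V^\star(s)=0$ on the recurrent states of $\cP^{\pi^\star}$, which you unpacked inline.
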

 Although the adaptive step size yields a linear convergence rate, it requires knowledge of $\vartheta_\mu^{\pi}$ to set the step sizes. In contrast, a constant step size always guarantees a sublinear rate.


\section{Stochastic natural policy gradient}\label{sec:SNPG}
In this section, we extend the analysis of the natural policy gradient to the sampling setting in which the transition probabilities are unknown. Specifically, we assume access to a generative model \cite{kearns1998finite}, which provides independent samples of the next state for any given state and action.

\subsection{Approximating $Q$-function with a generative model}
With generative model, for a given policy $\pi$ and any state-action pair $(s,a)\in\mathcal{S}\times\mathcal{A}$, we can
generate independent trajectories of horizon $H$, i.e.,
\[
\left\{
\bigl(s_0,a_0\bigr),\bigl(s_1,a_1\bigr),\ldots,\bigl(s_{H-1},a_{H-1}\bigr)
\ \middle|\ 
s_0=s,\ a_0=a
\right\}.
\]
 With this samples, we consider following $Q$-estimator to approximate state-action value function.
\[\Tilde{Q}^\pi=\frac{1}{N}\sum^{N}_{j=1} \Tilde{Q}_j^\pi \quad \text{where} \quad \Tilde{Q}_j^\pi(s_0,a_0)=\sum^H_{i=0}r_j(s_i,a_i)\]
for all $s_0 \in\cS$, $a_0\in\cA$. 

Next, we  define \emph{transient half-life} for given $\pi$ as
\[t_{\frac{1}{2},\pi} = \argmin_{t \ge 1} \infn{(T^\pi)^t}\le \frac{1}{2},\]
 where $\infn{\cdot}$ denotes the maximum absolute row sum of the matrix. This transient half-life is always finite due to Fact \ref{fact:spec}.

We now present sample complexity on the $Q$-estimator with generative model.
\begin{theorem}\label{thm:PE}
    Let $\epsilon>0$ and $ \delta>0$. Under Assumption~\ref{assump_total},  for given $\pi \in \Pi$, with $1-\delta$ probability, $\infn{\tilde{Q}^\pi - Q^\pi} \le \epsilon $ with sample complexity 
    \[\widetilde{\mathcal{O}}\left( \frac{t_{\frac{1}{2},\pi} ^3 R^2 |\cS||\cA|}{\epsilon^2}\right)\]
    where $\widetilde{\mathcal{O}}$ ignores all logarithmic factors.
\end{theorem}
Note that this holds for arbitrary $\pi \in \Pi$, and if we  priori know classification of states, $|\cS|$ in sample complexity of Theorem \ref{thm:PE} could be reduced to number of transient states.

\subsection{Sample complexity of stochastic natural policy gradient}

By approximating $Q$-function, stochastic natural policy gradient can be expressed as
\begin{align*}
&\Tilde{Q}^{\pi_k}(s_0,a_0)=\frac{1}{N}\sum^{N}_{j=1}\sum^H_{i=0}r_j(a_i,s_i)\\
    &\pi_{k+1}( a \,|\, s) = \pi_{k}( a \,|\, s) \frac{\exp(\eta_k \tilde{Q}^{\pi_k}(s,a))}{z_s^k}
\end{align*}
for all $s_0,    s\in\cS$, $a, a_0\in\cA$, and $k=0,1,\dots$,
where $ z^k_s = \sum_{a \in \cA}\pi_{k}( a \,|\, s) \exp(\eta_k Q^{\pi_k}(s,a))$. Note that unlike projected policy gradient, natural policy gradient only requires information of state-action value function.

\begin{figure*}[ht]
    \centering
\includegraphics[width=0.9\linewidth]{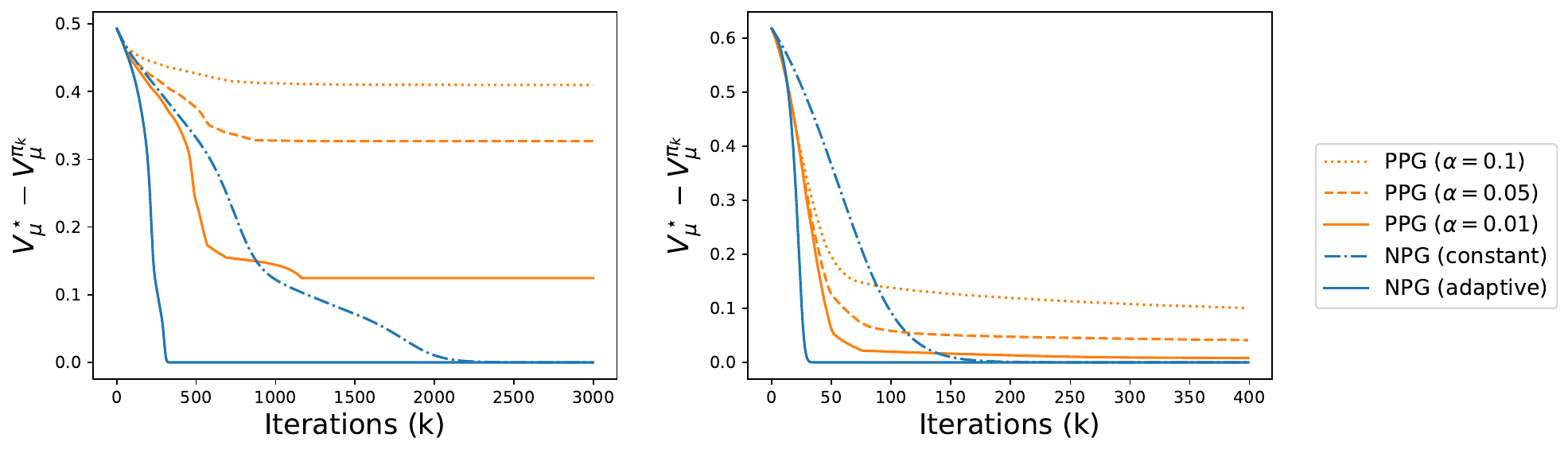}
    \caption{Comparison of projected policy gradient (PPG) and natural policy gradient (NPG) algorithms in (left) Frozenlake and (right) Cliffwalk.
    The limit of the projected policy gradient algorithm gets closer to the optimum as $\alpha>0$ gets smaller.}
    \label{fig:pg}
\end{figure*}

Now we present sample complexity of stochastic natural policy gradient. We defer the proofs to Appendix \ref{appen:D}, but we note
that the proof strategy closely follows  \cite{xiao2022convergence}.

\begin{theorem}\label{thm:SNPG}
    Let $\epsilon>0$ and $\delta>0$. Under Assumption~\ref{assump_total}, for any $\pi, \pi_0 \in \Pi_+$ and given $\mu$ with full support, with probability $1-\delta$, the iterate of stochastic natural policy gradient with constant step size $\eta>0$ and $K=2\Big(\frac{\text{KL}_{\rho_\mu^{\pi}}(\pi,\pi_0)}{\eta}
+ 2C_{\pi}\infn{V^\star_{+}}\Big)/\epsilon$ satisfies $V_\mu^{\pi}-V_{\mu}^{\pi_K}\le \epsilon$ with sample complexity 
\[\widetilde{\mathcal{O}}\left(\frac{ t_{\frac{1}{2}}^3 (\vartheta^\pi_\mu)^2 R^2 C_\pi^5\left( \frac{\text{KL}_{\delta_\mu^{\pi}}(\pi, \pi_0)}{\eta}\right)^3\infn{V^\star_{+}}^3 |\cS||\cA|}{\epsilon^5}  \right) \]
and with adaptive step size $(\vartheta^\pi_{\mu}-1)\eta_{k+1}\ge \vartheta^\pi_{\mu}\eta_k>0$ and $K = \log \left(2\left(
V_\mu^\pi-V_\mu^{\pi_0}
+
\frac{\text{KL}_{\delta^\pi_\mu}(\pi, \pi_{0})}{\eta_0(\vartheta^\pi_\mu - 1)} \right)/\epsilon
\right) $ satisfies $V_\mu^{\pi^\star}-V_\mu^{\pi_K}\le \epsilon$ with sample complexity 
\[\widetilde{\mathcal{O}}\left(\frac{t_{\frac{1}{2}}^3 (\vartheta^\pi_\mu)^2C_K^2 R^2 |\cS||\cA|}{\epsilon^2} \right)\] 
where $\widetilde{\mathcal{O}}$ ignores all logarithmic factors, $C_{K}= \max_{0\le k \le K-1}\{C_{\pi_{i}},C_\pi\}$,  and $t_{\frac{1}{2}}= \max_{0\le k \le K-1}t_{\frac{1}{2},\pi_k}$.
\end{theorem}

Note that sample complexity of adaptive step size $\widetilde{\mathcal{O}}(1/\epsilon^2)$ is more efficient that of constant step size $\widetilde{\mathcal{O}}(1/\epsilon^5)$.
We can also derive similar rates when the assume the rewards are nonnegative as follows. 

\begin{corollary}\label{cor:sngd}
Assume the rewards are nonnegative.    Let $\epsilon>0$ and $\delta>0$. Under Assumption~\ref{assump_total}, for $\pi_0 \in \Pi_+$ and given $\mu$ with full support, with probability $1-\delta$, the policy error of stochastic natural policy gradient with constant step size $\eta>0$ and $K=2\Big(\frac{\text{KL}_{\rho_\mu^{\pi^\star}}(\pi^\star,\pi_0)}{\eta}
+ C_{\pi^\star}\infn{V^\star}\Big)/\epsilon$ satisfies $V_\mu^{\star}-V_\mu^{\pi_K}\le \epsilon$ with sample complexity 
\[\widetilde{\mathcal{O}}\left(\frac{ t_{\frac{1}{2}}^3 (\vartheta^{\pi^\star}_\mu)^2 R^2 C_{\pi^\star}^5\left( \frac{\text{KL}_{\delta_\mu^{\pi^\star}}(\pi^\star, \pi_0)}{\eta}\right)^3\infn{V^\star}^3 |\cS||\cA|}{\epsilon^5}  \right) \]
and  with adaptive step size $(\vartheta^{\pi^\star}_{\mu}-1)\eta_{k+1}\ge \vartheta^{\pi^\star}_{\mu}\eta_k>0$ and $K = \log \left(2\left(
V_\mu^{\star}-V_\mu^{\pi_0}
+
\frac{\text{KL}_{\delta^{\pi^\star}_\mu}({\pi^\star}, \pi_{0})}{\eta_0(\vartheta^\pi_\mu - 1)} \right)/\epsilon
\right) $ satisfies $V_\mu^{\pi^\star}-V_\mu^{\pi_K}\le \epsilon$ with sample complexity 
\[\widetilde{\mathcal{O}}\left(\frac{t_{\frac{1}{2}}^3 (\vartheta^{\pi^\star}_\mu)^2C_K^2 R^2 |\cS||\cA|}{\epsilon^2} \right)\] 
for any optimal policy $\pi^\star$ where $\widetilde{\mathcal{O}}$ ignores all logarithmic factors, $C_{K}= \max_{0\le k \le K-1}\{C_{\pi_{i}},C_{\pi^\star}\}$,  and $t_{\frac{1}{2}}= \max_{0\le k \le K-1}t_{\frac{1}{2},\pi_k}$.
\end{corollary}



\section{Experiments}
For the experiments, we consider two toy examples: Frozenlake with $4\times4$ states and 4 actions and CliffWalk with $3\times7$ states and 4 actions. We use nonnegative rewards which ensures $V_\mu^\star = V_{+,\mu}^\star$  by Lemma~\ref{lem:reward}, and uniform initial state distribution. Further details are provided in Appendix~\ref{appen:experiments}. 


We run the projected policy gradient method with $\alpha \in \{0.1, 0.05, 0.01\}$ and the natural policy gradient method with both constant and adaptive step sizes. All algorithms are implemented using the transient policy gradient with transient visitation measure. For Frozenlake, we use $\{0.1\cdot 1.01^k\}^\infty_{k=0}$  for the adaptive step size of natural policy gradient, where $k$ is the number of iterations, and $0.1$ for others. For CliffWalk, we use $\{0.05\cdot 1.1^k\}^\infty_{k=0}$ for the adaptive step size and $0.05$ for others.

The results are shown in Figure~\ref{fig:pg}. The natural policy gradient with adaptive step size exhibits the fastest convergence rate among the algorithms, as the guaranteed linear rate of Corollary~\ref{cor:ngd_linear} predicts. Note that both natural policy gradients converge to $V^\star_\mu$ while the projected policy gradient converges to $V_\mu^{\pi^\star_\alpha}$ for each $\alpha$, and smaller $\alpha$ makes projected policy gradient converge closer to $V_\mu^{\star}$ since $V_\mu^{\pi^\star_\alpha}$ increases monotonically to $V_\mu^\star$ as $\alpha \to 0$.

Additionally, we run an experiment with pathological MDP of Figure \ref{fig:example}, shown in Appendix~\ref{appen:experiments}.



\section{Conclusion}
In this work, we present the first analysis of policy gradient methods for undiscounted expected total-reward infinite-horizon MDPs. Our approach combines the classical recurrent-transient theory from Markov chain theory with prior analysis techniques for policy gradient methods. Specifically, we first establish invariance of the classification of MDP states on $\Pi_+$, where the value function is continuous, and define a new transient visitation measure that leads to a transient policy gradient. Based on this machinery, we establish non-asymptotic convergence rates for projected policy gradient and natural policy gradient in the undiscounted total-reward setting. Lastly, we present the sample complexity of stochastic natural policy gradient by approximating the $Q$-function with a generative model.

Our framework opens the door to several directions for future work. One direction is to extend our results to function approximation in a sampling setting, where restricted parametric policies may not include the optimal policy, and the estimation and optimization errors from finite samples must be quantified. Another promising direction is to establish the convergence of the naive policy gradient with softmax parameterization, without preconditioning by the Fisher information matrix.

Finally, we highlight that recurrent-transient classification of MDP states is a fundamental and broadly applicable technique. Previously, the recurrent-transient theory was also applied to improve the convergence of policy iteration independently of the discount factor \citep{fox1968policy,bertsekas1991analysis,scherrer2013improved}. We expect that this technique can be used to analyze a wide range of RL algorithms in the undiscounted total-reward setting.
\bibliography{icml2026}
\bibliographystyle{icml2026}

\newpage
\appendix
\onecolumn

\section{Omitted proofs in Section \ref{sec:recurrent_transeint}}

For definitions of basic concepts of transient-recurrent theory such as irreducible class,
communicating class, closedness, etc., please refer to \citet[Chapters 2 and 3]{bremaud2013markov}

\subsection{Proof of Proposition \ref{prop:classification}}
\begin{proof} For any $\pi, \pi' \in \Pi_+$, $P^\pi (s,s') \neq 0 $ if and only if $P^{\pi'} (s,s') \neq 0$ for $s,s' \in \cS$ by definition of $\Pi_+$. This implies $s$ and $s'$ of $P^\pi$ communicate if and only if $s$ and $s'$ of $P^{\pi'}$ communicate, and thus communicating class is invariant for $\pi \in \Pi_+$. It is known that states in communicating class are all transient or recurrent \citep[Theorem 3.1.6]{bremaud2013markov}. Next, the set of states is closed in $P^{\pi}$ if and only if it is closed in $P^{\pi'}$.  Therefore, since  communicating class is closed if and only if it is recurrent in finite states MDP \citep[Theorem 3.2.8]{bremaud2013markov}, we obtain the desired result.  
\end{proof}

\subsection{Proof of Lemma \ref{lem:recur_reward}}

\begin{proof}
 Since  $\sum_{k=0}^{\infty} (P^\pi)^{k}(s, s)=\infty$ for recurrent state $s$, $r^\pi(s)=0$ to satisfy Assumption \ref{assump_total}. 
\end{proof}

By Lemma \ref{lem:recur_reward} and \ref{lem:value_func}, we directly obtain the following Corollary.
\begin{corollary}\label{cor:val_zero}
  Under Assumption~\ref{assump_total},  $V^\pi(s)=0$ for all recurrent states $s$.
\end{corollary}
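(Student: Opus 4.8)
The plan is to read off the result directly from the two lemmas it cites. By Lemma~\ref{lem:value_func}, for any $\pi\in\Pi$ we have $V^\pi=(I-T^\pi)^{-1}r^\pi$, which by Fact~\ref{fact:trans_spec} admits the Neumann expansion $V^\pi=\sum_{i=0}^\infty (T^\pi)^i r^\pi$ (this is exactly the identity displayed just above Lemma~\ref{lem:value_func}). The only structural input needed is that, by the very definition of the transient matrix, every row of $T^\pi$ indexed by a recurrent state of $P^\pi$ is identically zero: $T^\pi(s,s')=0$ for all $s'$ whenever $s$ is recurrent, since $T^\pi(s_1,s_2)$ is nonzero only when both $s_1,s_2$ are transient.

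First I would fix a recurrent state $s$. From the row-zero property, $e_s^\intercal (T^\pi)^i=0$ for every $i\ge 1$, while $e_s^\intercal (T^\pi)^0=e_s^\intercal$. Hence $V^\pi(s)=e_s^\intercal (I-T^\pi)^{-1}r^\pi=e_s^\intercal r^\pi+\sum_{i\ge 1}e_s^\intercal (T^\pi)^i r^\pi=r^\pi(s)$. Then by Lemma~\ref{lem:recur_reward}, $r^\pi(s)=0$ because $s$ is recurrent under Assumption~\ref{assump_total}, so $V^\pi(s)=0$. Equivalently, one may argue via the canonical block form: writing $r^\pi=(r^\pi_R,r^\pi_T)$ in recurrent/transient coordinates with $r^\pi_R=0$ by Lemma~\ref{lem:recur_reward}, the block structure $T^\pi=\mathrm{diag}(0,\bar T^\pi)$ gives $(I-T^\pi)^{-1}=\mathrm{diag}(I,(I-\bar T^\pi)^{-1})$, so the recurrent block of $V^\pi$ equals $r^\pi_R=0$.

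There is no real obstacle here; the statement is an immediate corollary, and the only points deserving a line of justification are (i) validity of the Neumann expansion, supplied by Fact~\ref{fact:trans_spec}, and (ii) vanishing of recurrent rows of $T^\pi$, immediate from its definition. I would only remark that the recurrent-transient classification in play is the one induced by the given $\pi$ (not restricted to $\Pi_+$), and that since both Lemma~\ref{lem:recur_reward} and Lemma~\ref{lem:value_func} are stated for all $\pi\in\Pi$, the argument applies verbatim to every $\pi\in\Pi$.
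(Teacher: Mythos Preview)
Your proposal is correct and follows essentially the same approach as the paper, which simply states that the corollary is obtained ``directly'' from Lemma~\ref{lem:recur_reward} and Lemma~\ref{lem:value_func}. You have merely spelled out the one-line computation implicit in that remark.
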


\subsection{Proof of Lemma \ref{lem:continuity}}
\begin{proof}
$\pi \mapsto P^\pi$ is continuous, and by Proposition \ref{prop:classification}, $\pi \mapsto T^\pi$ is also continuous on $\Pi_+$. Since $(I-T^\pi)^{-1}$ is continuous with respect to $T^\pi$, by Lemma \ref{lem:value_func}, $V^\pi$ and $V^\pi_\mu$ are continuous with respect to $\pi$.
\end{proof}

\subsection{Proof of Lemma \ref{lem:pdl}}



\begin{proof}
\begin{align*}
    V^\pi(s)-V^{\pi'}(s) &= \sum_{a\in \cA} Q^\pi(s,a) \pi(a \,|\, s) - \sum_{a\in \cA}  Q^{\pi'}(s,a) \pi'(a \,|\, s) 
    \\&= \sum_{a\in \cA}  Q^{\pi'}(s, a) (\pi(a \,|\, s) -\pi'(a \,|\, s))+\sum_{a\in \cA} ( Q^\pi(s, a)-Q^{\pi'}(s, a)) \pi(a\,|\, s)
    \\&= \sum_{a\in \cA}  Q^{\pi'}(s, a) (\pi(a \,|\, s) -\pi'(a \,|\, s))+\sum_{s'\in \cS}\sum_{a\in \cA} P(s' \,|\, s, a) (V^\pi(s')-V^{\pi'}(s'))\pi(a \,|\, s)
       \\&= \sum_{a\in \cA}  Q^{\pi'}(s, a) (\pi(a \,|\, s) -\pi'(a \,|\, s))+ (\cP^\pi (V^\pi-V^{\pi'}))(s)
\end{align*}
where we used Bellman equation in third equality. 
Let $u(s) = \sum_{a\in \cA}  Q^{\pi'}(s, a) (\pi(a \,|\, s) -\pi'(a \,|\, s)).$ Then, we have
\begin{align*}
    V^\pi-V^{\pi'} &= u + P^\pi(V^\pi-V^{\pi'} ) \\&= u +  T^\pi(V^\pi-V^{\pi'})
\end{align*}
which further implies 
\[V^\pi-V^{\pi'} = (I-T^\pi)^{-1}u\]
and 
\[V_\mu^\pi-V_\mu^{\pi'} = \mu^\intercal (I-T^\pi)^{-1}u.\]
\end{proof}

\begin{corollary}\label{cor:pdl} Under Assumption~\ref{assump_total}, if $\pi, \pi' \in \Pi_+ $, 
\begin{align*}
V_\mu^{\pi} - V_\mu^{\pi'}
&=  \sum_{s' \in \cS}
\sum_{a \in \cA}
Q^{\pi'}(s',a)(\pi(a \mid s')-\pi'(a \mid s')) 
 \delta^{\pi}_\mu(s').
\end{align*}
\end{corollary}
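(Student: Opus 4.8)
The plan is to derive Corollary~\ref{cor:pdl} directly from Lemma~\ref{lem:pdl} by verifying that the hypothesis ``$V^{\pi'}(s) = 0$ for all recurrent states $s$ of $\cP^\pi$'' is automatically satisfied when $\pi, \pi' \in \Pi_+$, and then unpacking the matrix expression $V_\mu^\pi - V_\mu^{\pi'} = \mu^\intercal (I-T^\pi)^{-1} u$ into the stated sum. These are the only two things that need checking.

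First I would observe that, by Proposition~\ref{prop:classification}, the recurrent-transient classification of states is the same for $\pi$ and for $\pi'$ whenever both lie in $\Pi_+$. Hence the recurrent states of $\cP^\pi$ coincide with the recurrent states of $\cP^{\pi'}$. By Corollary~\ref{cor:val_zero} (which follows from Lemmas~\ref{lem:recur_reward} and \ref{lem:value_func}), $V^{\pi'}(s) = 0$ for every recurrent state $s$ of $\cP^{\pi'}$, and therefore for every recurrent state $s$ of $\cP^\pi$. This is exactly the hypothesis needed to invoke Lemma~\ref{lem:pdl}, so the performance difference lemma applies to the pair $\pi, \pi'$.

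Next I would translate the conclusion of Lemma~\ref{lem:pdl} into the claimed form. Writing $u(s') = \sum_{a \in \cA} Q^{\pi'}(s',a)\big(\pi(a \mid s') - \pi'(a \mid s')\big)$ as in the proof of Lemma~\ref{lem:pdl}, we have $V_\mu^\pi - V_\mu^{\pi'} = \mu^\intercal (I - T^\pi)^{-1} u = \sum_{s' \in \cS} \big(\mu^\intercal (I-T^\pi)^{-1}\big)(s')\, u(s')$. By the definition of the transient visitation measure, $\delta^\pi_\mu(s') = \sum_{s_0} \mu(s_0)\, e_{s_0}^\intercal (I - T^\pi)^{-1} e_{s'} = \big(\mu^\intercal (I-T^\pi)^{-1}\big)(s')$, so substituting gives $V_\mu^\pi - V_\mu^{\pi'} = \sum_{s' \in \cS} \delta^\pi_\mu(s')\, u(s')$, which is precisely the stated identity once $u(s')$ is written out.

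There is essentially no obstacle here; the only subtlety is making sure the citation chain is right, namely that membership in $\Pi_+$ is what unlocks Proposition~\ref{prop:classification} and hence lets us replace ``recurrent states of $\cP^\pi$'' by ``recurrent states of $\cP^{\pi'}$'' so that Corollary~\ref{cor:val_zero} can be applied to $\pi'$. (One should also note in passing that the inner sum index in the corollary statement, written $a \in \cS$, is a typo for $a \in \cA$, matching Lemma~\ref{lem:pdl}.) Everything else is a direct rewriting of the matrix expression already established in the proof of Lemma~\ref{lem:pdl}.
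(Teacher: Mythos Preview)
Your proposal is correct and follows essentially the same route as the paper's own proof: the paper likewise uses Lemmas~\ref{lem:recur_reward} and~\ref{lem:value_func} (i.e., the content of Corollary~\ref{cor:val_zero}) to get $V^{\pi'}(s)=0$ on recurrent states, then invokes Proposition~\ref{prop:classification} to transfer this to the recurrent states of $\cP^\pi$, and finally applies Lemma~\ref{lem:pdl}. Your extra step of unpacking $\mu^\intercal(I-T^\pi)^{-1}u$ into the sum is already built into the statement of Lemma~\ref{lem:pdl}, so it is not needed, but it does no harm; your observation about the $a\in\cS$ typo is also correct.
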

\begin{proof}
By Lemma \ref{lem:recur_reward} and \ref{lem:value_func}, we obtain $V^\pi(s)=0$ for any recurrent state $s$. Thus by Proposition \ref{prop:classification}, condition of Lemma \ref{lem:pdl} is satisfied.
\end{proof}
\begin{corollary} \label{cor:pdl_reward} Under Assumption~\ref{assump_total}, if rewards are nonnegative, for $\pi \in \Pi$, 
\begin{align*}
V_\mu^{\star} - V_\mu^{\pi}
&=  \sum_{s' \in \cS}
\sum_{a \in \cA}
Q^{\pi}(s',a)(\pi^\star(a \mid s')-\pi(a \mid s')) 
 \delta^{\pi^\star}_\mu(s').
\end{align*}
\end{corollary}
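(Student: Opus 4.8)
The plan is to obtain this identity as a direct specialization of the transient performance difference lemma (Lemma~\ref{lem:pdl}), applied with the policy whose transient visitation measure appears taken to be an optimal policy $\pi^\star$ and the policy whose action-values appear taken to be $\pi$; that is, instantiate Lemma~\ref{lem:pdl} with its ``$\pi$'' set to $\pi^\star$ and its ``$\pi'$'' set to $\pi$. Both policies lie in $\Pi$ (an optimal stationary policy exists under our assumptions), so the only step that requires real work is checking the lemma's hypothesis, namely that $V^{\pi}(s)=0$ for every recurrent state $s$ of $\cP^{\pi^\star}$.

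To verify this vanishing property, I would combine three facts. First, because the rewards are nonnegative we have $V^\pi_{-}\equiv 0$, so $V^\pi = V^\pi_{+}\ge 0$ pointwise; in particular $V^\pi(s)\ge 0$ for all $s\in\cS$. Second, optimality of $V^\star$ gives $V^\pi(s)\le V^\star(s) = V^{\pi^\star}(s)$ for all $s\in\cS$. Third, Corollary~\ref{cor:val_zero} applied to the policy $\pi^\star\in\Pi$ shows $V^{\pi^\star}(s)=0$ at every recurrent state $s$ of $\cP^{\pi^\star}$. Chaining these, for any recurrent state $s$ of $\cP^{\pi^\star}$ we obtain $0\le V^\pi(s)\le V^{\pi^\star}(s)=0$, hence $V^\pi(s)=0$, which is precisely the hypothesis of Lemma~\ref{lem:pdl}.

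Granting the hypothesis, Lemma~\ref{lem:pdl} then yields
\[
V_\mu^{\pi^\star}-V_\mu^{\pi}
= \sum_{s'\in\cS}\sum_{a\in\cA} Q^{\pi}(s',a)\big(\pi^\star(a\mid s')-\pi(a\mid s')\big)\,\delta^{\pi^\star}_\mu(s'),
\]
and since $\pi^\star$ is optimal, $V_\mu^{\pi^\star}=V_\mu^{\star}$; substituting this on the left-hand side produces the claimed identity (the inner summation index ranging over $a\in\cA$).

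The crux I expect is the verification in the second paragraph. One must be careful that Corollary~\ref{cor:val_zero} and the notion of a recurrent state are being invoked relative to the transition kernel $\cP^{\pi^\star}$ of the optimal policy, which in general need not lie in $\Pi_+$, so Proposition~\ref{prop:classification} does not fix its recurrent class; indeed $\cP^{\pi}$ and $\cP^{\pi^\star}$ may have genuinely different recurrent sets. It is exactly the nonnegativity of the rewards that supplies the sandwich $0\le V^\pi\le V^{\pi^\star}$, which transfers the vanishing of $V^{\pi^\star}$ on its own recurrent class to the vanishing of $V^\pi$ there. Once that is in place, only a mechanical substitution into Lemma~\ref{lem:pdl} remains.
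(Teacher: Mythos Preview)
Your proposal is correct and follows essentially the same approach as the paper: both verify the hypothesis of Lemma~\ref{lem:pdl} (with its ``$\pi$'' set to $\pi^\star$ and its ``$\pi'$'' set to the given $\pi$) via the sandwich $0\le V^\pi\le V^\star$ coming from nonnegative rewards and optimality, together with $V^{\pi^\star}=0$ on the recurrent states of $\cP^{\pi^\star}$. The paper's proof is simply the one-line version of your argument.
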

\begin{proof}
      Since $V^\star \ge V^{\pi} \ge 0$ by definition of $V^\pi$, $\pi$ satisfies condition of Lemma \ref{lem:pdl}.
\end{proof}

\subsection{Proof of Lemma \ref{lem:reward}}

\begin{proof}
    By Corollary $5$, we have 
    \[V_\mu^{\star} - V_\mu^{\pi} \le  |\cS||\cA|\infn{Q^\pi}\infn{\pi^\star- \pi }\infn{\delta^{\pi^\star}_\mu}.\]
   Since $\infn{Q^\pi}$ is bounded by $\infn{Q^\star}$, $ 
\lim_{\pi \rightarrow \pi^\star }  V_\mu^{\pi}= V_\mu^{\star}$ and this implies $V^\star_{+,\mu}= V^\star_\mu$.
\end{proof}

\subsection{Proof of Theorem \ref{thm::policy_gd}}
For the proof of Theorem \ref{thm::policy_gd}, we first prove the following lemmas.
\begin{lemma}\label{lem:8}
    Under Assumption~\ref{assump_total}, for recurrent state $s$ of $\cP^\pi$ where $\pi \in \Pi_+$, $r(s,a)=0$. 
\end{lemma}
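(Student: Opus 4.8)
The plan is to upgrade Lemma~\ref{lem:recur_reward} — which gives only the policy-averaged identity $r^\pi(s)=0$ — to the pointwise identity $r(s,a)=0$ for every action, by exploiting that within $\Pi_+$ the recurrent–transient classification is fixed (Proposition~\ref{prop:classification}), so one may freely vary the action distribution at $s$ without destroying recurrence of $s$.

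Concretely, let $s$ be recurrent for $\cP^\pi$ with $\pi\in\Pi_+$. By Proposition~\ref{prop:classification}, $s$ is recurrent for $\cP^{\pi'}$ for \emph{every} $\pi'\in\Pi_+$, so Lemma~\ref{lem:recur_reward} applies to each such $\pi'$ and yields
\[
\sum_{a\in\cA}\pi'(a\,|\,s)\,r(s,a)=r^{\pi'}(s)=0\qquad\text{for all }\pi'\in\Pi_+ .
\]
Since the set $\{\,\pi'(\cdot\,|\,s) : \pi'\in\Pi_+\,\}$ is exactly the relative interior of the simplex $\cM(\cA)$ (the action distribution at $s$ can be chosen strictly positive and independently of the other states, still giving a policy in $\Pi_+$), the affine function $p\mapsto\sum_{a}p_a\,r(s,a)$ on $\cM(\cA)$ vanishes on a relatively open subset of its domain, hence vanishes identically, in particular at each vertex $e_a$. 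Therefore $r(s,a)=0$ for all $a\in\cA$. (Equivalently and more elementarily: fix a reference $\pi'\in\Pi_+$ and, for any two actions $a,a'$, perturb $\pi'(\cdot\,|\,s)$ by moving a small mass $\epsilon>0$ from $a'$ to $a$; the perturbed policy is still in $\Pi_+$, and subtracting the two instances of the displayed identity gives $\epsilon\big(r(s,a)-r(s,a')\big)=0$, so $r(s,\cdot)$ is constant on $\cA$ with common value $r^{\pi'}(s)=0$.)

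I expect no substantive obstacle: the argument is essentially immediate once Lemma~\ref{lem:recur_reward} and Proposition~\ref{prop:classification} are in hand. The one point requiring care is that the freedom to perturb the policy at $s$ relies on staying inside $\Pi_+$, which is precisely what guarantees (via Proposition~\ref{prop:classification}) that $s$ remains recurrent and hence that Lemma~\ref{lem:recur_reward} keeps applying; a perturbation toward the boundary of $\Pi$ could in principle change the classification. A self-contained alternative that bypasses Proposition~\ref{prop:classification} is to observe that a recurrent $s$ is visited infinitely often in expectation, $\sum_{k=0}^\infty (P^\pi)^k(s,s)=+\infty$, so for any $a$ with $r(s,a)\neq 0$ the event $\{s_i=s,\,a_i=a\}$ occurs infinitely often in expectation (using $\pi(a\,|\,s)>0$ since $\pi\in\Pi_+$), which forces $V^\pi_+(s)=\infty$ or $V^\pi_-(s)=\infty$ and contradicts Assumption~\ref{assump_total}.
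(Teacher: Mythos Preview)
Your proposal is correct and follows essentially the same route as the paper: combine Proposition~\ref{prop:classification} (so that $s$ remains recurrent for every $\pi'\in\Pi_+$) with Lemma~\ref{lem:recur_reward} to obtain $r^{\pi'}(s)=0$ for all $\pi'\in\Pi_+$, and then argue that an affine functional vanishing on the interior of the simplex must vanish at the vertices. The paper's one-line proof leaves the appeal to Proposition~\ref{prop:classification} implicit and does not spell out the simplex argument; your version makes both explicit, and your alternative direct argument via $\sum_k (P^\pi)^k(s,s)=\infty$ and $\pi(a\,|\,s)>0$ is a valid self-contained route that the paper does not take.
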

\begin{proof}
    If $r(s,a)\neq0$ for some $a \in \cA$, there exists $\pi \in \Pi_+$ such that $r^\pi(s)\neq 0$ since $\infn{r}\le R$. This is a contradiction by Lemma \ref{lem:recur_reward}. 
\end{proof}
Now we prove Theorem \ref{thm::policy_gd}.
\begin{proof}
Fix $\pi \in \Pi_+$. We first clarify  differentiability of $V^\pi$.
For $\triangle \pi \in \real^{|\cS|\times|\cA|}$ which represents change of policy, we define $P^{\pi+\triangle \pi}(s,s')=\sum_{a \in \cA} (\pi+\triangle \pi)(s,a)P(s' \,|\, s,a)$ and $r^{\pi+\triangle \pi}(s) = \sum_{a \in \cA} (\pi+\triangle \pi)(s,a)r(s,a)$ for all $s,s' \in \cS$. Then, if $s$  is recurrent, $r^{\pi+\triangle \pi}(s)=0$ by Lemma \ref{lem:8}.

   By definition of $\Pi_+$, there exist an open ball $B(\pi, \epsilon)$ such that $(\pi+\triangle \pi)(a \,|\, s)>0 $ for all $s\in\cS, a \in \cA$ and $\triangle \pi \in B(\pi, \epsilon)$. Then,   $P^\pi (s,s') \neq 0 $ if and only if $P^{\pi+\triangle \pi} (s,s') \neq 0$ for all $s,s'\in\cS$, and we can define perturbed transient matrix $T^{\pi+\triangle \pi}$ for $\pi \in \Pi_+$. 
   
   Since spectral radius is continuous with respect to entries of matrix \citep[Theorem 2.4.9.2]{horn2012matrix}, and $T^\pi$ is continuous with respect to $\pi \in \Pi_+$, there also exist open ball $B'(\pi, \epsilon) \subset B(\pi, \epsilon)$ such that spectral radius of $T^{\pi+\triangle \pi}$ is smaller than $1$. (Note that this argument is valid since set of transient class and recurrent class fixed by Proposition \ref{prop:classification}.) Thus 
   \[V^{\pi+\triangle \pi} =\sum^\infty_{i=0} (P^{\pi+\triangle \pi})^i r^{\pi+\triangle \pi} =\sum^\infty_{i=0} (T^{\pi+\triangle \pi})^i r^{\pi+\triangle \pi} = (I-T^{\pi+\triangle \pi})^{-1} r^{\pi+\triangle \pi}, \]
 and this implies well-definedness of value function on $\pi+\triangle \pi$ where $c \in \Pi_+$ and $\triangle \pi \in B'(\pi, \epsilon)$. Then, by Lemma \ref{lem:value_func}, differentiability of $T^\pi$ and $r^\pi$ on $\pi \in \Pi_+$ implies differentiability of $V^\pi$, and it can be easily seen that $T^\pi$ is differentiable with respect to $\pi$ since each element of $P^\pi$ is differentiable and transient class is fixed by Proposition \ref{prop:classification}. $r^\pi$ is obviously differentiable.

Therefore,
\begin{align*}
\nabla_{\theta} V^{\pi_{\theta}}_\mu &= \nabla_{\theta} \mu^\intercal (I-T^{\pi_\theta})^{-1}r^{\pi_\theta}    
\\& =\nabla_{\theta}(\mu^\intercal (I-T^{\pi_\theta})^{-1})r^{\pi_\theta}+\mu^\intercal (I-T^{\pi_\theta})^{-1}\nabla_\theta r^{\pi_\theta}
\\& =\mu^\intercal (I-T^{\pi_\theta})^{-1} \frac{\partial T^{\pi_\theta}}{\partial \theta}(I-T^{\pi_\theta})^{-1}r^{\pi_\theta}+ \mu^\intercal(I-T^{\pi_\theta})^{-1}\nabla_\theta r^{\pi_\theta}
\\& =\mu^\intercal (I-T^{\pi_\theta})^{-1} \frac{\partial \Theta P}{\partial \theta}V^{\pi_\theta}+ \mu^\intercal(I-T^{\pi_\theta})^{-1}\frac{\partial\Theta r}{\partial \theta} 
   \\&= \mu^\intercal (I-T^{\pi_{\theta}})^{-1} \frac{\partial \Theta}{\partial \theta}(P V^{\pi_{\theta}} + r)
   \\&=  \mu^\intercal(I-T^{\pi_{\theta}})^{-1} \frac{\partial \Theta}{\partial \theta}Q^{\pi_{\theta}}
\end{align*}
where third equality comes from matrix calculus $\frac{\partial A^{-1}}{\partial \theta} = A^{-1}\frac{\partial A}{\partial \theta}A^{-1}$ for $A(\theta) \in \real ^{|\cS| \times |\cS|}$ and forth equality is from fact that $\Theta \in  \real^{|\cS| \times |\cS||\cA|} $ is matrix form of policy $\pi_\theta$ satisfying $\Theta P = P^{\pi_\theta}$ and $\Theta r = r^{\pi_\theta}$ and $\frac{\partial T^{\pi_\theta}}{\partial \theta} V^{\pi_\theta}  = \frac{\partial  P^{\pi_\theta}}{\partial \theta}V^{\pi_\theta}$  by Corollary \ref{cor:val_zero} and Proposition \ref{prop:classification}.
\end{proof}

\section{Omitted proofs in Section \ref{sec:projecetd_gd}}\label{appen:miss_5}
\subsection{Proof of Lemma \ref{lem:smooth}}
\begin{proof}
We basically follow the proof strategy of \cite{agarwal2021theory, mei2020global} .    Let $ \theta_{\beta} = \theta + \beta u$. Then, with direct parametrization,
    \[ \max_{\|u\|_2=1}\sum_a \left[ \frac{\partial \pi_{\theta_\beta}(a|s)}{\partial \beta} \Big|_{\beta=0} \right]  \le  \sqrt{|\cA|}, \qquad  \sum_a \left[ \frac{\partial^2 \pi_{\theta_\beta}(a|s)}{\partial \beta^2} \Big|_{\beta=0} \right]  =0.\]

    Note that $T^{\pi_{\theta_\beta}} \in \mathbb{R}^{|\cS| \times |\cS|}$ as
\begin{align*}
[T^{\pi_{\theta_\beta}}]_{(s,s')} &= \sum_a \pi_{\theta_\beta}(a|s) \cdot P(s'|s,a) \qquad  \text{for all } s,s' \in \mathcal{T}     
\\& =0 \qquad \qquad \qquad \qquad \qquad \quad \,\,\text{otherwise}
\end{align*}
where $\mathcal{T}$ is invariant transient class. Then, the derivative with respect to $\beta$ is
\[
\left[ \frac{\partial T^{\pi_{\theta_\beta}} }{\partial \beta} \Big|_{\beta=0} \right]_{(s,s')}
= \sum_a \left[ \frac{\partial \pi_{\theta_\beta}(a|s)}{\partial \beta} \Big|_{\beta=0} \right] 
\cdot P(s'|s,a)
\]
for $ s,s' \in \mathcal{T}$,   and for any vector $x \in \mathbb{R}^{|\cS|}$, we have 
\begin{align*}
\left\| \frac{\partial T^{\pi_{\theta_\beta}} }{\partial \beta} \Big|_{\beta=0} x \right\|_\infty
&= \max_{s\in \mathcal{T}} \left| \sum_{s'\in \mathcal{T}} \sum_a \left[ \frac{\partial \pi_{\theta_\beta}(a|s)}{\partial \beta} \Big|_{\beta=0} \right] 
\cdot P(s'|s,a) \cdot x(s') \right| \\
&\le \max_{s\in \mathcal{T}} \sum_a \sum_{s'\in \mathcal{T}} P(s'|s,a) 
\cdot \left| \frac{\partial \pi_{\theta_\beta}(a|s)}{\partial \beta} \Big|_{\beta=0} \right|
\cdot \|x\|_\infty \\
&\le \max_{s\in \mathcal{T}} \sum_a \left| \frac{\partial \pi_{\theta_\beta}(a|s)}{\partial \beta} \Big|_{\beta=0} \right| \cdot \|x\|_\infty . 
\end{align*}
Therefore, 
\[ \max_{\|u\|_2=1}\left\| \frac{\partial T^{\pi_{\theta_\beta}} }{\partial \beta} \Big|_{\beta=0} x \right\|_\infty \le  \sqrt{\cA} \cdot \|x\|_\infty\]
Similarly, taking second derivative with respect to $\beta$,
\[
\left[ \frac{\partial^2 T^{\pi_{\theta_\beta}} }{\partial \beta^2} \Big|_{\beta=0} \right]_{(s,s')}
= \sum_a \left[ \frac{\partial^2 \pi_{\theta_\beta}(a|s)}{\partial \beta^2} \Big|_{\beta=0} \right]
\cdot P(s'|s,a)=0.
\]

Next, consider the state value function of $\pi_{\theta_\beta}$,
\[
V^{\pi_{\theta_\beta}}(s) = e_s^\top M^{\pi_{\theta_\beta}} r^{\pi_\beta},
\]
where
\[
M^{\pi_{\theta_\beta}} = (I -T^{\pi_{\theta_\beta}})^{-1}, \qquad r^{\pi_{\theta_\beta}}(s) = \sum_a \pi_{\theta_\beta}(a|s) \cdot r(s,a) \quad \text{for all} \,s \in \cS.
\]
Since $[T^{\pi_{\theta_\beta}} ]_{(s,s')} \ge 0$,  for all $s,s'$, we have $[M^{\pi_{\theta_\beta}}]_{(s,s')} \ge 0$.  
Suppose $\infn{M^{\pi_{\theta_\beta}}} \le C_\beta$. Then, for any vector $x \in \mathbb{R}^S$,
\begin{align*}
\|M^{\pi_{\theta_\beta}}x\|_\infty 
\le C_\beta \cdot \|x\|_\infty.
\end{align*}
Note that $\infn{r^{\pi_{\theta_\beta}}} \le R$. Thus, we have



\begin{align*}
\left\| \frac{\partial r^{\pi_{\theta_\beta}}}{\partial \beta} \right\|_\infty
&= \max_s \left| \frac{\partial r^{\pi_{\theta_\beta}}(s)}{\partial \beta} \right| \\& \le R \max_s \sum_a \left[ \frac{\partial \pi_{\theta_\beta}(a|s)}{\partial \beta} \Big|_{\beta=0} \right] .
\end{align*}
Then
\[
 \max_{\|u\|_2=1}\left\| \frac{\partial r^{\pi_{\theta_\beta}}}{\partial \beta} \right\|_\infty \le R\sqrt{\cA} .\]
Similarly,
\begin{align*}
\left\| \frac{\partial^2 r^{\pi_{\theta_\beta}}}{\partial \beta^2} \right\|_\infty
&\le \max_s R \sum_a \left[ \frac{\partial^2 \pi_{\theta_\beta}(a|s)}{\partial \beta^2} \Big|_{\beta=0} \right] \\
&= 0. 
\end{align*}

Derivative of value state function with respect to $\beta$ is 
\[
\frac{\partial V^{\pi_{\theta_\beta}}(s)}{\partial \beta}
=  e_s^\top M^{\pi_{\theta_\beta}} \frac{\partial T^{\pi_{\theta_\beta}} }{\partial \beta} M^{\pi_{\theta_\beta}} r^{\pi_{\theta_\beta}}
+ e_s^\top M^{\pi_{\theta_\beta}} \frac{\partial r^{\pi_{\theta_\beta}}}{\partial \beta}.
\]
by matrix calculus $\frac{\partial A^{-1}}{\partial \theta} = A^{-1}\frac{\partial A}{\partial \theta}A^{-1}$.
Taking second derivative w.r.t. $\beta$,
\begin{align*}
\frac{\partial^2 V^{\pi_{\theta_\beta}}(s)}{\partial \beta^2}
&= 2\cdot e_s^\top M^{\pi_{\theta_\beta}} \frac{\partial T^{\pi_{\theta_\beta}} }{\partial \beta} 
   M^{\pi_{\theta_\beta}} \frac{\partial T^{\pi_{\theta_\beta}} }{\partial \beta} M^{\pi_{\theta_\beta}} r^{\pi_{\theta_\beta}} + e_s^\top M^{\pi_{\theta_\beta}} \frac{\partial^2 T^{\pi_{\theta_\beta}} }{\partial \beta^2} M^{\pi_{\theta_\beta}} r^{\pi_{\theta_\beta}} \\
&\quad + 2\cdot e_s^\top M^{\pi_{\theta_\beta}} \frac{\partial T^{\pi_{\theta_\beta}} }{\partial \beta} M^{\pi_{\theta_\beta}} \frac{\partial r^{\pi_{\theta_\beta}}}{\partial \beta} 
+ e_s^\top M^{\pi_{\theta_\beta}} \frac{\partial^2 r^{\pi_{\theta_\beta}}}{\partial \beta^2}.
\end{align*}

For the last term,
\begin{align*}
\left| e_s^\top M^{\pi_{\theta_\beta}} \frac{\partial^2 r^{\pi_{\theta_\beta}}}{\partial \beta^2} \Big|_{\beta=0} \right|
= 0. 
\end{align*}

For the second last term,
\begin{align*}
 \max_{\|u\|_2=1}\left| e_s^\top M^{\pi_{\theta_\beta}} \frac{\partial T^{\pi_{\theta_\beta}} }{\partial \beta} M^{\pi_{\theta_\beta}} \frac{\partial r^{\pi_{\theta_\beta}}}{\partial \beta} \Big|_{\beta=0} \right|
&\le \max_{\|u\|_2=1}\infn{ M^{\pi_{\theta_\beta}} \frac{\partial T^{\pi_{\theta_\beta}} }{\partial \beta} M^{\pi_{\theta_\beta}} \frac{\partial r^{\pi_{\theta_\beta}}}{\partial \beta} \Big|_{\beta=0} } \\
&\le C_\beta \max_{\|u\|_2=1}\left\| \frac{\partial T^{\pi_{\theta_\beta}} }{\partial \beta} M^{\pi_{\theta_\beta}} \frac{\partial r^{\pi_{\theta_\beta}}}{\partial \beta} \Big|_{\beta=0} \right\|_\infty 
\\
&\le C_\beta\sqrt{|\cA|} \max_{\|u\|_2=1}\| M^{\pi_{\theta_\beta}} \tfrac{\partial r^{\pi_{\theta_\beta}}}{\partial \beta} \Big|_{\beta=0} \|_\infty 
\\
&\le C_\beta^2\sqrt{|\cA|} \max_{\|u\|_2=1}\left\| \tfrac{\partial r^{\pi_{\theta_\beta}}}{\partial \beta} \Big|_{\beta=0} \right\|_\infty 
 \\
&\le C_\beta^2|\cA| R. 
\end{align*}

For the first term, similarly,
\begin{align*}
\max_{\|u\|_2=1}\left| e_s^\top M^{\pi_{\theta_\beta}} \frac{\partial T^{\pi_{\theta_\beta}} }{\partial \beta} M^{\pi_{\theta_\beta}} \frac{\partial T^{\pi_{\theta_\beta}} }{\partial \beta} M^{\pi_{\theta_\beta}} r^{\pi_{\theta_\beta}} \Big|_{\beta=0} \right|
&\le C_\beta\cdot  \sqrt{\cA }  \cdot C_\beta \sqrt{\cA } \cdot C_\beta \cdot R \\
&= C_\beta^3 R |\cA| .
\end{align*}

For the second term,
\begin{align*}
\left| e_s^\top M^{\pi_{\theta_\beta}} \frac{\partial^2 P^{\pi_{\theta_\beta}} }{\partial \beta^2} M^{\pi_{\theta_\beta}} r^{\pi_{\theta_\beta}} \Big|_{\beta=0} \right|
= 0
\end{align*}

Finally, we have
\begin{align*}
\left| \frac{\partial^2 V^{\pi_{\theta_\beta}}(s)}{\partial \beta^2} \Big|_{\beta=0} \right|
&\le 2\cdot \left| e_s^\top M^{\pi_{\theta_\beta}} \frac{\partial T^{\pi_{\theta_\beta}} }{\partial \beta} M^{\pi_{\theta_\beta}} \frac{\partial T^{\pi_{\theta_\beta}} }{\partial \beta} M^{\pi_{\theta_\beta}} r^{\pi_{\theta_\beta}} \Big|_{\beta=0} \right| \\
&\quad +  \left| e_s^\top M^{\pi_{\theta_\beta}} \frac{\partial^2 T^{\pi_{\theta_\beta}} }{\partial \beta^2} M^{\pi_{\theta_\beta}} r^{\pi_{\theta_\beta}} \Big|_{\beta=0} \right| \\
&\quad + 2 \left| e_s^\top M^{\pi_{\theta_\beta}} \frac{\partial T^{\pi_{\theta_\beta}} }{\partial \beta} M^{\pi_{\theta_\beta}} \frac{\partial r^{\pi_{\theta_\beta}}}{\partial \beta} \Big|_{\beta=0} \right| 
+ \left| e_s^\top M^{\pi_{\theta_\beta}} \frac{\partial^2 r^{\pi_{\theta_\beta}}}{\partial \beta^2} \Big|_{\beta=0} \right| \\
&\le 2 C_\beta^2 (C_\beta+1)R |\cA| .
\end{align*}
\end{proof}

We now prove another key lemma for Theorem \ref{thm:proj_gd}.
\begin{lemma}[Gradient domination]\label{lem:grad_dom} Under Assumption \ref{assump_total}, for $\pi \in \Pi_{\alpha}$,
    \[V_\mu^{\pi_{\alpha}^\star} -V_\mu^\pi \le \left\| \frac{\delta^{\pi^\star}_\mu}{\mu} \right\|_\infty 
\max_{\bar{\pi} \in \Pi_{\alpha}} (\bar{\pi} - \pi)^\top \nabla_\pi V^\pi_\mu.\]
\end{lemma}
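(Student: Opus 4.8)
The plan is to combine the transient performance difference lemma (Corollary~\ref{cor:pdl}) with the explicit form of the direct-parameterization gradient from Theorem~\ref{thm::policy_gd}, and then run a sign-tracking argument converting the $\delta_\mu^{\pi_\alpha^\star}$-weighted expression produced by the former into the $\delta_\mu^{\pi}$-weighted quantity appearing in the gradient. (I read $\pi^\star$ in the statement as $\pi_\alpha^\star$, the maximizer of $V^\pi$ over the compact set $\Pi_\alpha$; this is the policy for which the performance difference lemma applies, since $\pi_\alpha^\star\in\Pi_\alpha\subseteq\Pi_+$ so $V^{\pi_\alpha^\star}$ vanishes on the recurrent states by Corollary~\ref{cor:val_zero}.) Since $\pi,\pi_\alpha^\star\in\Pi_\alpha\subseteq\Pi_+$, Corollary~\ref{cor:pdl} gives
\[
V_\mu^{\pi_\alpha^\star}-V_\mu^{\pi}
=\sum_{s\in\cS}\delta_\mu^{\pi_\alpha^\star}(s)\sum_{a\in\cA}Q^{\pi}(s,a)\big(\pi_\alpha^\star(a\mid s)-\pi(a\mid s)\big),
\]
while Theorem~\ref{thm::policy_gd} specialized to the direct parameterization gives $\nabla_\pi V_\mu^{\pi}(s,a)=\delta_\mu^{\pi}(s)\,Q^{\pi}(s,a)$, hence for any $\bar\pi$,
\[
(\bar\pi-\pi)^\top\nabla_\pi V_\mu^{\pi}
=\sum_{s\in\cS}\delta_\mu^{\pi}(s)\sum_{a\in\cA}\big(\bar\pi(a\mid s)-\pi(a\mid s)\big)Q^{\pi}(s,a).
\]

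Next I would exploit that $\Pi_\alpha$ is a product over states of the constraint sets $\Delta_\alpha=\{p\in\cM(\cA):p(a)\ge\alpha\ \forall a\}$ and that $\delta_\mu^{\pi}(s)\ge 0$, so the maximization of the last display over $\bar\pi\in\Pi_\alpha$ decouples across states; let $\bar\pi^\star$ be the maximizer, i.e.\ $\bar\pi^\star(\cdot\mid s)$ maximizes $p\mapsto\sum_a p(a)Q^{\pi}(s,a)$ over $\Delta_\alpha$. The key observation is that, writing $B^{\pi}(s)=\sum_a Q^{\pi}(s,a)\big(\bar\pi^\star(a\mid s)-\pi(a\mid s)\big)$, feasibility of $\pi(\cdot\mid s)$ and of $\pi_\alpha^\star(\cdot\mid s)$ in the per-state problem yields
\[
B^{\pi}(s)\ge 0
\qquad\text{and}\qquad
\sum_a Q^{\pi}(s,a)\big(\pi_\alpha^\star(a\mid s)-\pi(a\mid s)\big)\le B^{\pi}(s)
\qquad\text{for every }s\in\cS.
\]
Plugging the second inequality into the performance difference identity, using $\delta_\mu^{\pi_\alpha^\star}(s)\ge 0$, and then pulling out the supremum (legitimate since $\mu$ has full support and $\mu(s)B^{\pi}(s)\ge 0$),
\[
V_\mu^{\pi_\alpha^\star}-V_\mu^{\pi}
\le\sum_{s}\delta_\mu^{\pi_\alpha^\star}(s)B^{\pi}(s)
=\sum_{s}\frac{\delta_\mu^{\pi_\alpha^\star}(s)}{\mu(s)}\,\mu(s)\,B^{\pi}(s)
\le\left\|\frac{\delta_\mu^{\pi_\alpha^\star}}{\mu}\right\|_\infty\sum_{s}\mu(s)B^{\pi}(s).
\]

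To finish, I would use the pointwise bound $\delta_\mu^{\pi}\ge\mu$, which follows from $(I-T^{\pi})^{-1}=\sum_{i\ge0}(T^{\pi})^i\succeq I$ entrywise (Neumann series, valid by Fact~\ref{fact:trans_spec}, with all higher-order terms nonnegative), giving $\delta_\mu^{\pi}(s)=\mu^\intercal(I-T^{\pi})^{-1}e_s\ge\mu(s)$. Combined with $B^{\pi}(s)\ge 0$ this yields $\sum_s\mu(s)B^{\pi}(s)\le\sum_s\delta_\mu^{\pi}(s)B^{\pi}(s)=\max_{\bar\pi\in\Pi_\alpha}(\bar\pi-\pi)^\top\nabla_\pi V_\mu^{\pi}$, which is exactly the claimed bound.

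The main obstacle is the sign bookkeeping in the middle step: the per-state quantity $\sum_a Q^{\pi}(s,a)(\pi_\alpha^\star(a\mid s)-\pi(a\mid s))$ coming out of the performance difference lemma need not be nonnegative---only its $\delta_\mu^{\pi_\alpha^\star}$-weighted total is---so one cannot directly compare it termwise against a $\delta_\mu^{\pi}$-weighted expression. Introducing the per-state maximizer $\bar\pi^\star$, which dominates $\pi_\alpha^\star$ pointwise and makes $B^{\pi}(s)$ pointwise nonnegative, is precisely what makes every inequality in the chain valid under nonnegative weights. The auxiliary fact $\delta_\mu^{\pi}\ge\mu$ pointwise is the undiscounted analogue of the standard bound $d_\mu^{\pi}\ge(1-\gamma)\mu$ and is immediate from the Neumann series for the transient block.
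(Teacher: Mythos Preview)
Your proof is correct and follows essentially the same route as the paper's: apply Corollary~\ref{cor:pdl}, replace the per-state contribution of $\pi_\alpha^\star$ by the per-state maximum over $\Pi_\alpha$ (which is nonnegative because $\pi$ itself is feasible), and then convert the $\delta_\mu^{\pi_\alpha^\star}$-weighting into a $\delta_\mu^{\pi}$-weighting using $\delta_\mu^{\pi}\ge\mu$. The only cosmetic differences are that the paper phrases the per-state quantity via the advantage $A^\pi$ rather than $Q^\pi$ (equivalent since $\sum_a(\bar\pi-\pi)(a\mid s)V^\pi(s)=0$) and passes through the intermediate ratio $\|\delta_\mu^{\pi_\alpha^\star}/\delta_\mu^{\pi}\|_\infty$ before bounding it by $\|\delta_\mu^{\pi_\alpha^\star}/\mu\|_\infty$, whereas you go through $\mu$ directly in two hops; your reading of $\pi^\star$ as $\pi_\alpha^\star$ also matches the paper's usage.
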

\begin{proof}
We basically follow the proof strategy of \cite{agarwal2021theory}. Let $A^\pi(s,a)= Q^\pi(s,a)-V^\pi(s)$. Then, we have 
    \begin{align*}
V^{\pi_{\alpha}^\star}_\mu - V^\pi_\mu 
&=  \sum_{s,a} \delta^{\pi^\star}_\mu(s)\pi^\star(a|s)A^\pi(s,a) \\
&\le \sum_{s} \delta^{\pi^\star}_\mu(s)\max_{\bar{\pi} \in \Pi_{\alpha}}\left( \sum _a  \bar{\pi} (a \,|\, s)  A^\pi(s,a)\right) \\
&\le  \left( \max_s \frac{\delta^{\pi^\star}_\mu(s)}{\delta^\pi_\mu(s)} \right) \sum_{s} \delta^\pi_\mu(s) \max_{\bar{\pi} \in \Pi_{\alpha}}\left( \sum _a  \bar{\pi} (a \,|\, s)  A^\pi(s,a)\right),
\end{align*}
\begin{equation}
\end{equation}
where first equality comes from Corollary \ref{cor:pdl}, the last inequality follows since $\max_{\bar{\pi} \in \Pi_{\alpha}}\left( \sum _a  \bar{\pi} (a \,|\, s)  A^\pi(s,a)\right) \ge 0$ for all $s \in \cS$ and policies $\pi \in \Pi_\alpha$.  Also, we have

\begin{align*}
\sum_{s} \delta^\pi_\mu(s) \max_{\bar{\pi} \in \Pi_{\alpha}}\left( \sum _a  \bar{\pi} (a \,|\, s)  A^\pi(s,a)\right)
&= \max_{\bar{\pi} \in \Pi_{\alpha}} \sum_{s,a} \delta^\pi_\mu(s) \bar{\pi}(a|s)  A^\pi(s,a) 
\\&= \max_{\bar{\pi} \in \Pi_{\alpha}} \sum_{s,a} \delta^\pi_\mu(s)\big( \bar{\pi}(a|s) - \pi(a|s)\big) A^\pi(s,a) \\
&= \max_{\bar{\pi} \in \Pi_{\alpha}} \sum_{s,a} \delta^\pi_\mu(s)\big( \bar{\pi}(a|s) - \pi(a|s)\big) Q^\pi(s,a) \\
&= \max_{\bar{\pi} \in \Pi_{\alpha}} (\bar{\pi} - \pi)^\top \nabla_\pi V^\pi_\mu.
\end{align*}
where the first equality follows from the fact that $\max_{\bar{\pi}}$ is attained at an action which maximizes 
$A^\pi(s,\cdot)$, the second equality is from 
$\sum_a \pi(a|s) A^\pi(s,a) = 0$, the third equality is from 
$\sum_a (\bar{\pi}(a|s) - \pi(a|s)) V^\pi(s) = 0$ for all $s$, 
and the last equality follows from the Theorem \ref{thm::policy_gd} with direct parameterization. 
Finally,
\begin{align*}
V^{\pi^\star_\alpha}_\mu - V^\pi_\mu 
&\le \left\| \frac{\delta^{\pi^\star}_\mu}{\delta^\pi_\mu} \right\|_\infty 
\max_{\bar{\pi} \in \Pi_{\alpha}} (\bar{\pi} - \pi)^\top \nabla_\pi V^\pi_\mu \\
&\le  \left\| \frac{\delta^{\pi^\star}_\mu}{\mu} \right\|_\infty 
\max_{\bar{\pi} \in \Pi_{\alpha}} (\bar{\pi} - \pi)^\top \nabla_\pi V^\pi_\mu.
\end{align*}
where the last step follows from 
$\max_{\bar{\pi} \in \Pi_{\alpha}} (\bar{\pi} - \pi)^\top \nabla_\pi V^\pi_\mu \ge 0$ 
for any policy $\pi$ and $\delta^\pi_\mu(s) \ge \mu(s)$.
\end{proof}

\subsection{Proof of Theorem \ref{thm:proj_gd}}
Following the proof strategy of \cite{xiao2022convergence}, we consider composite optimization problem:
   \[\text{min}_{x \in \real^n} F(x) :=f(x)+ \Psi (x)\]
where $f$ is $L$-smooth and $\Psi$ is proper, convex, and closed \citep{rockafellar1970}. Define $F^\star = \min_x F(x)$.

For convex function $\phi$, define proximal operator as 
\[\textbf{prox}_{\phi}(x) = \text{argmin}_y \{\phi (y)+\frac{1}{2}\|y-x\|^2_2\}.\]
Then, for composite optimization problem, proximal gradient method is
\[x^{k+1} = \textbf{prox}_{\eta_k\Psi}(x^k-\eta_k \nabla f(x^k)).\]
Specifically, let $\eta_k=\frac{1}{L} $, and define
\[T_{L}(x) = \textbf{prox}_{\frac{1}{L}\Psi}(x-\frac{1}{L} \nabla f(x))\]
such that proximal gradient method can be expressed as $x^{k+1} = T_L(x^k)$. We also define gradient mapping
\[G_L=L(x-T_L(x)).\]
\begin{definition}[weak gradient-mapping domination]
Consider composite optimization problem.
We say that \(F\) satisfies a weak gradient-mapping dominance condition, of exponent
\(\alpha \in (1/2,1]\), if there exists \(\omega>0\) such that
\[
\|G_L(x)\|_2 \;\ge\; \sqrt{2\omega}\,\big(F(T_L(x)) - F^\star\big)^{\alpha},
\qquad \forall\, x \in \operatorname{dom}\Psi.
\]
\end{definition}

\begin{fact}\citep[Theorem 4]{xiao2022convergence}\label{fact:xiao}
Consider the composite optimization problem.
Suppose \(F\) is weakly gradient-mapping dominant with exponent
\(\alpha =1\) and parameter \(\omega\).
Then, for all \(k \ge 0\), the proximal gradient method with step size \(\eta_k = 1/L\)
generates a sequence \(\{x^{k}\}\) that satisfies
\[
F\!\left(x^{k}\right) - F^\star
\;\le\;
\max\!\left\{
\frac{8L}{\omega\,k},
\ \left(\frac{\sqrt{2}}{2}\right)^{\!k}
\left(F\!\left(x^{0}\right) - F^\star\right)
\right\}.
\]
\end{fact}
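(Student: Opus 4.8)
The plan is to treat this purely as a deterministic first-order convergence result and prove it in three stages: (i) establish a one-step sufficient-decrease inequality phrased through the gradient mapping $G_L$; (ii) substitute the exponent-$1$ domination condition to collapse the dynamics to a scalar recursion on the optimality gap $\Delta_k := F(x^k)-F^\star \ge 0$; and (iii) analyze that recursion by splitting into a ``large-gap'' phase (producing the linear term) and a ``small-gap'' phase (producing the $1/k$ term), gluing them via monotonicity of $\{\Delta_k\}$.

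For stage (i) I would prove the standard descent lemma $F(T_L(x)) \le F(x)-\tfrac{1}{2L}\|G_L(x)\|_2^2$. The first-order optimality condition for the prox step $T_L(x)=\textbf{prox}_{(1/L)\Psi}(x-(1/L)\nabla f(x))$ gives $G_L(x)-\nabla f(x)\in\partial\Psi(T_L(x))$, using $G_L(x)=L(x-T_L(x))$. Convexity of $\Psi$ tested at $y=x$ then upper-bounds $\Psi(T_L(x))$, while $L$-smoothness of $f$ upper-bounds $f(T_L(x))$; on adding the two inequalities the cross terms $\pm\tfrac{1}{L}\langle\nabla f(x),G_L(x)\rangle$ cancel exactly and leave the claimed decrease. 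I would note that this uses only convexity of $\Psi$ and $L$-smoothness of $f$, not convexity of $f$, so the domination hypothesis is precisely what replaces convexity in driving convergence.

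Applying the lemma at $x=x^k$ (so $T_L(x^k)=x^{k+1}$) and inserting $\|G_L(x^k)\|_2^2 \ge 2\omega\,\Delta_{k+1}^2$ yields $\Delta_{k+1} \le \Delta_k - \tfrac{\omega}{L}\Delta_{k+1}^2$, which in particular shows $\{\Delta_k\}$ is non-increasing. Writing $c=\omega/L$, I split on the size of $\Delta_{k+1}$. When $c\Delta_{k+1}\ge 1$ the recursion forces $\Delta_k \ge \Delta_{k+1}(1+c\Delta_{k+1})\ge 2\Delta_{k+1}$, so the gap at least halves; since $2^{-k}\le(\sqrt{2}/2)^k$ this is absorbed by the stated linear term. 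When $c\Delta_{k+1}<1$ the same bound gives $\Delta_{k+1}/\Delta_k \ge 1/2$, and the reciprocal-telescoping estimate $\tfrac{1}{\Delta_{k+1}}-\tfrac{1}{\Delta_k}=\tfrac{\Delta_k-\Delta_{k+1}}{\Delta_k\Delta_{k+1}}\ge \tfrac{c\Delta_{k+1}}{\Delta_k}\ge \tfrac{c}{2}$ telescopes to $\Delta_k \le 2/(c(k-N))$. By monotonicity there is a single threshold $N$ separating the phases, with $\Delta_N\le 2^{-N}\Delta_0$ and $\Delta_N\ge 1/c$, hence $2^N\le c\Delta_0$. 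Splitting at $k=2N$ then finishes: for $k\le 2N$ the chain $\Delta_k\le\Delta_N\le 2^{-N}\Delta_0\le(\sqrt{2}/2)^k\Delta_0$ gives the linear term, while for $k\ge 2N$ we have $k-N\ge k/2$, so $\Delta_k\le 4/(ck)\le 8L/(\omega k)$, producing the advertised maximum.

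The main obstacle I anticipate is not any single stage but the bookkeeping in stage (iii): one must use the non-increasing property carefully so that the large/small-gap classification is a genuine threshold rather than an alternation, and then chase constants through the phase-gluing to land on the factor $8$ and the ratio $\sqrt{2}/2$ (the raw argument gives halving and a leading $1/(ck)$ constant near $4$, with the looser stated constants comfortably absorbing the transition step and the conversion $2^{-k}\le(\sqrt{2}/2)^k$). The descent lemma of stage (i) is the conceptual crux but is routine, and everything downstream is elementary scalar analysis.
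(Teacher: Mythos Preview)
The paper does not prove this statement at all; it is recorded as a black-box Fact from \cite{xiao2022convergence} and simply invoked in the proof of Theorem~\ref{thm:proj_gd}. There is therefore no in-paper argument to compare against, and your proposal is an independent reconstruction. Stages (i) and (ii) are correct and standard; the descent lemma $F(T_L(x))\le F(x)-\tfrac{1}{2L}\|G_L(x)\|_2^2$ is exactly right and your derivation of the scalar recursion $\Delta_{k+1}\le\Delta_k-(\omega/L)\Delta_{k+1}^2$ is fine.

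There is, however, a genuine slip in stage (iii). In the small-gap regime $c\Delta_{k+1}<1$ you assert that ``the same bound gives $\Delta_{k+1}/\Delta_k\ge 1/2$'' and use this to lower-bound the reciprocal increment by $c/2$. That claim does not follow: the recursion is a one-sided inequality $\Delta_k\ge\Delta_{k+1}(1+c\Delta_{k+1})$, which only bounds $\Delta_k$ from \emph{below}; nothing prevents $\Delta_{k+1}\ll\Delta_k$ (indeed $\Delta_{k+1}=0$ is permitted). The fix is to bypass the ratio estimate entirely. Writing $a_k:=c\Delta_k$, the inequality $a_{k+1}(1+a_{k+1})\le a_k$ and the partial-fraction identity $\tfrac{1}{x(1+x)}=\tfrac{1}{x}-\tfrac{1}{1+x}$ give directly
\[
\frac{1}{a_{k+1}}-\frac{1}{a_k}\;\ge\;\frac{1}{1+a_{k+1}}\;>\;\frac{1}{2}
\qquad\text{whenever } a_{k+1}<1,
\]
which is exactly the telescoping increment you need, without any control on $\Delta_{k+1}/\Delta_k$. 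With this replacement, your threshold-at-$N$ splitting and the $k\lessgtr 2N$ gluing go through (some of your intermediate constants are slightly loose---e.g.\ the halving only yields $\Delta_N\le 2^{-(N-1)}\Delta_0$, not $2^{-N}\Delta_0$---but, as you anticipated, the slack in $8L/(\omega k)$ and $(\sqrt{2}/2)^k$ absorbs this comfortably).
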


\begin{fact}\citep[Theorem 1]{nesterov2013gradient}\label{fact:nesterov} Consider the composite optimization problem where $f$ is $L$-smooth on closed convex set $C$ and $\Psi$ is the indicator function with $C$. Then, for $x,y \in C$,
\[
\begin{aligned}
\langle \partial F(T_L(y)),\,  T_L(y)-x  \rangle
&\le
  2 
   \| G_L(y) \|_{2}\, \| T_L(y) - x \|_2 .
\end{aligned}
\]
  
\end{fact}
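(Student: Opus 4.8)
The plan is to exploit that $\Psi$ is the indicator of the closed convex set $C$, so that $T_L(y)$ is exactly the minimizer over $x$ of the proximal model $m_L(y;x) = f(y) + \langle \nabla f(y),\, x-y\rangle + \frac{L}{2}\|x-y\|_2^2 + \Psi(x)$, and then to read off the first-order optimality condition of this inner minimization. Since $m_L(y;\cdot)$ is convex and minimized at $T_L(y)$, the subdifferential optimality condition reads $0 \in \nabla f(y) + L\big(T_L(y)-y\big) + \partial\Psi(T_L(y))$. Recalling $G_L(y) = L(y - T_L(y))$, this asserts the existence of a subgradient $\xi \in \partial\Psi(T_L(y))$ with $\xi = G_L(y) - \nabla f(y)$.

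The crux is to pin down which element of $\partial F(T_L(y))$ the inequality refers to. The natural choice is $g := \nabla f(T_L(y)) + \xi$, which by construction lies in $\partial F(T_L(y)) = \nabla f(T_L(y)) + \partial\Psi(T_L(y))$. Substituting the value of $\xi$ produces the clean decomposition
\[
g = \big(\nabla f(T_L(y)) - \nabla f(y)\big) + G_L(y),
\]
which is the key identity: it writes the subgradient as a smoothness-defect term plus the gradient mapping itself.

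From here I would split the inner product $\langle g,\, T_L(y)-x\rangle$ along this decomposition and bound each piece by Cauchy--Schwarz. For the first piece, $L$-smoothness of $f$ gives $\|\nabla f(T_L(y)) - \nabla f(y)\|_2 \le L\|T_L(y)-y\|_2 = \|G_L(y)\|_2$, whence $\langle \nabla f(T_L(y)) - \nabla f(y),\, T_L(y)-x\rangle \le \|G_L(y)\|_2\,\|T_L(y)-x\|_2$. For the second piece, Cauchy--Schwarz directly yields $\langle G_L(y),\, T_L(y)-x\rangle \le \|G_L(y)\|_2\,\|T_L(y)-x\|_2$. Summing the two bounds gives the factor $2$ in the claimed inequality.

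I expect the only genuine obstacle to be notational rather than technical: making precise that $\langle \partial F(T_L(y)),\,\cdot\rangle$ denotes the specific subgradient furnished by the optimality condition of the prox step, and verifying that this $g$ really belongs to $\partial F(T_L(y))$. Once that subgradient is identified, the argument reduces to the two-term split above together with two applications of Cauchy--Schwarz, with $L$-smoothness converting the gradient difference into $\|G_L(y)\|_2$.
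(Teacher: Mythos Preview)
The paper does not supply its own proof of this statement; it is recorded as a \emph{Fact} and attributed to \cite[Theorem~1]{nesterov2013gradient}. Your argument is correct and is essentially the standard derivation behind Nesterov's result: pick the subgradient $g=\nabla f(T_L(y))+\xi$ with $\xi$ the normal-cone element delivered by the optimality condition of the prox step, rewrite $g=(\nabla f(T_L(y))-\nabla f(y))+G_L(y)$, and bound the two pieces by Cauchy--Schwarz together with $L$-smoothness. You also correctly flag the one genuine subtlety, namely that $\partial F(T_L(y))$ in the statement must be read as this particular subgradient rather than an arbitrary one; the inequality can fail for other elements of the subdifferential when $\Psi$ is an indicator.
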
 
Note that if $\Psi$ is the indicator function of a convex closed non empty subset of $C \in \real^n$, $\Psi$ is closed, convex, and proper and  $\textbf{prox}_{\eta \Psi} = \textbf{proj}_C$ where $\textbf{proj}$ is projection operator. 

Now, we apply previous results to our projected policy gradient setup. Let $-\Psi$ be indicator function with $\Pi_{\alpha}$ and  $f(\pi) = -V^\pi_\mu$. Then $\pi^{k+1} = T_L(\pi^k)$ is projected policy gradient.

To prove Theorem \ref{thm:proj_gd}, we first prove following lemma. 
\begin{lemma}\label{lem:grad_map}
Under Assumption \ref{assump_total}, for a given $\mu$ with full support, suppose that
\(V^\pi_\mu\) is \(L\)-smooth for $\pi \in \Pi_{\alpha}$. Then,
\[
V_\mu^{\pi^\star_\alpha}-V_\mu^{T_L(\pi)}\le\;
2\sqrt{2|\mathcal S|}\,
\Big\|\tfrac{\delta_\mu^{\pi^\star_\alpha}}{\mu}\Big\|_\infty\,
\big\|G_L(\pi)\big\|_2.
\]
\end{lemma}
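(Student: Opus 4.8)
The plan is to feed one projected-policy-gradient step into the gradient-domination bound of Lemma~\ref{lem:grad_dom} and then control the resulting linear form by the norm of the gradient mapping via Fact~\ref{fact:nesterov}. First I would instantiate the composite problem with $f(\pi)=-V^\pi_\mu$ and $\Psi$ the convex indicator of $\Pi_\alpha$, so that $T_L$ coincides with the projected policy gradient map, $F^\star=-V^{\pi^\star_\alpha}_\mu$, and $F(T_L(\pi))-F^\star=V^{\pi^\star_\alpha}_\mu-V^{T_L(\pi)}_\mu$. Writing $\pi^+:=T_L(\pi)$, the iterate $\pi^+=\textbf{proj}_{\Pi_\alpha}(\cdot)$ lies in $\Pi_\alpha\subset\Pi_+$, where $V^\pi_\mu$ is differentiable, so Lemma~\ref{lem:grad_dom} applied at $\pi^+$ gives
\[
V_\mu^{\pi^\star_\alpha}-V_\mu^{\pi^+}\;\le\;\Big\|\tfrac{\delta^{\pi^\star_\alpha}_\mu}{\mu}\Big\|_\infty\,\max_{\bar\pi\in\Pi_\alpha}(\bar\pi-\pi^+)^\top\nabla_\pi V^{\pi^+}_\mu .
\]
It then suffices to show $\max_{\bar\pi\in\Pi_\alpha}(\bar\pi-\pi^+)^\top\nabla_\pi V^{\pi^+}_\mu\le 2\sqrt{2|\cS|}\,\|G_L(\pi)\|_2$.

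For the inner maximum I would fix an optimal $\bar\pi\in\Pi_\alpha$ and rewrite $(\bar\pi-\pi^+)^\top\nabla_\pi V^{\pi^+}_\mu=(\pi^+-\bar\pi)^\top\nabla f(\pi^+)$, using $\nabla f=-\nabla V^\pi_\mu$. Fact~\ref{fact:nesterov} with $y=\pi$, $x=\bar\pi$ supplies a subgradient $g\in\partial F(\pi^+)$ with $\langle g,\pi^+-\bar\pi\rangle\le 2\|G_L(\pi)\|_2\,\|\pi^+-\bar\pi\|_2$. By the subdifferential sum rule $g=\nabla f(\pi^+)+h$ with $h\in\partial\Psi(\pi^+)$, the normal cone of $\Pi_\alpha$ at $\pi^+$; since $\bar\pi\in\Pi_\alpha$ one has $\langle h,\bar\pi-\pi^+\rangle\le 0$, hence $(\pi^+-\bar\pi)^\top\nabla f(\pi^+)=\langle g,\pi^+-\bar\pi\rangle-\langle h,\pi^+-\bar\pi\rangle\le\langle g,\pi^+-\bar\pi\rangle\le 2\|G_L(\pi)\|_2\,\|\pi^+-\bar\pi\|_2$. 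Finally, since $\pi^+,\bar\pi\in\Pi=\cM(\cA)^\cS$, for each $s$ the vectors $\pi^+(\cdot\mid s),\bar\pi(\cdot\mid s)$ are probability distributions and $\|\pi^+(\cdot\mid s)-\bar\pi(\cdot\mid s)\|_2^2\le\|\pi^+(\cdot\mid s)-\bar\pi(\cdot\mid s)\|_1\le 2$, so summing over $s$ gives $\|\pi^+-\bar\pi\|_2\le\sqrt{2|\cS|}$. Chaining these inequalities yields the stated bound.

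The delicate step is the manipulation of the second paragraph: extracting from Fact~\ref{fact:nesterov} a bona fide subgradient of $F$ at $\pi^+$, decomposing it via the sum rule, and using that projecting onto $\Pi_\alpha$ forces the normal-cone remainder $h$ to act in our favor --- i.e.\ passing from $\nabla f(\pi^+)$ to $g$ can only increase the inner product against $\pi^+-\bar\pi$. Once this is in place, the remaining pieces (the $\sqrt 2$ per-state diameter of the simplex in the Euclidean norm and the substitution back into Lemma~\ref{lem:grad_dom}) are routine. A minor bookkeeping point is the consistent use of $\pi^\star_\alpha$ in place of $\pi^\star$ inside Lemma~\ref{lem:grad_dom}, which does not affect the argument.
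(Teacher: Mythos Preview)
Your proof is correct and follows essentially the same route as the paper: apply Lemma~\ref{lem:grad_dom} at $T_L(\pi)$, bound the resulting linear form via Fact~\ref{fact:nesterov}, and use the $\sqrt{2|\cS|}$ Euclidean diameter of $\Pi_\alpha$. In fact you are more careful than the paper here, since you spell out the normal-cone argument needed to pass from the subgradient $g\in\partial F(\pi^+)$ supplied by Fact~\ref{fact:nesterov} to $\nabla f(\pi^+)=-\nabla V^{\pi^+}_\mu$, a step the paper absorbs into its one-line invocation of Fact~\ref{fact:nesterov}.
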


\begin{proof}
By Fact \ref{fact:nesterov}, for all
\(\pi,\pi'\in\Pi_{\alpha}\),
\[
\big\langle \nabla V_\mu^{T_L(\pi)},
           \,\pi'-T_L(\pi)\big\rangle
\;\le\;
2\,\|G_L(\pi)\|_2\,\|T_L(\pi)-\pi'\|_2 .
\]
Since \(T_L(\pi)\in\Pi_{\alpha}\) and
\(\|\pi_1-\pi_2\|_2 \le \sqrt{2|\mathcal S|}\) for any \(\pi_1,\pi_2\in\Pi_{\alpha}\),
we obtain
\[
\max_{\pi'\in\Pi_{\alpha}}
\big\langle \nabla V_\mu^{T_L(\pi)},\,\pi'-T_L(\pi)\big\rangle
\;\le\; 2\sqrt{2|\mathcal S|}\,\|G_L(\pi)\|_2 .
\]
Combining the above bound with Lemma \ref{lem:grad_dom} gives the desired inequality.
\end{proof} 
Now we are ready to prove Theorem \ref{thm:proj_gd}.
\begin{proof}
By lemma \ref{lem:grad_map}, weak gradient-mapping domination condition holds with
\[
\alpha = 1, \qquad
\omega = \frac{1}{16\,|\mathcal S|}\,
\left\|\frac{\delta_\mu^{\pi_{\alpha}^\star}}{\mu}\right\|_\infty^{-2},
\qquad
L = 2 R C_\alpha^2(C_\alpha+1)|\mathcal A|.
\]

Then, by applying Fact \ref{fact:xiao}, we get
\[
V_\mu^{\pi^\star_\alpha}-V_\mu^{\pi_{k}}  \le \frac{8L}{\omega k} = \frac{256 R|\mathcal S|\,|\mathcal A|C_\alpha^2(C_\alpha+1)}{k }\,
\left\|\frac{\delta_\mu^{\pi_{\alpha}^\star}}{\mu}\right\|_\infty^{2}.
\]
Lastly, in Fact \ref{fact:xiao}, exponential decay part 
is always smaller than the sublinear part.
\end{proof}
\subsection{Proof of Corollary \ref{cor:proj_grad}}
\begin{proof}
   Let $\alpha=\frac{\epsilon}{2|\cS||\cA| \infn{\delta^{\pi^\star}_{\mu}} \infn{Q^{\pi^\star}}} $. Since there exist $\pi \in \Pi_{\alpha}$ such that $\infn{\pi^\star-\pi} \le \alpha$, by Corollary \ref{cor:pdl_reward}, we have
   \begin{align*}
V_\mu^{\star} - V_\mu^{\pi}
&=  \sum_{s' \in \cS}
\sum_{a \in \cA}
Q^{\pi}(s',a)(\pi^\star(a \mid s')-\pi(a \mid s')) 
 \delta^{\pi^\star}_\mu(s')
 \\& \le |\cS||\cA|\infn{\delta^{\pi^\star}_{\mu}} \infn{Q^{\pi}} \alpha
 \\& \le \frac{\epsilon}{2}
\end{align*}
   and this implies 
   $V^{\star}_\mu\;-\;V^{\pi^\star_{\alpha}}_\mu  \le \frac{\epsilon}{2}.$ and the result comes from Theorem \ref{thm:proj_gd} by having $ V^{\pi^\star_{\alpha}}_\mu-V^{\pi_k}_\mu \le \frac{\epsilon}{2}$.
\end{proof}

\section{Omitted proofs in Section \ref{sec:natural_gd}}\label{appen:miss_6}

\subsection{Reformulation of natural policy gradient}
We basically follow the derivation in \cite{agarwal2021theory}. First, we provide explicit form of policy gradient with softmax parametrization. 
\begin{lemma}\label{pg_softmax}
\[
\frac{\partial V^{\pi_\theta}_\mu}{\partial \theta_{s,a}}
    = \delta_\mu^{\pi}(s)\pi_\theta(a\mid s)(Q^{\pi_\theta}(s,a)-V^{\pi_\theta}(s)).
\]    
\end{lemma} 
\begin{proof}
    With the softmax policy parameterization, we have 
\[
\frac{\partial \log \pi_{\theta}(a \mid s)}{\partial \theta_{s',a'}}
= \mathbf{1}[s = s'] \left( \mathbf{1}[a = a'] - \pi_{\theta}(a' \mid s) \right)
\]
where $\mathbf{1}$ is the indicator function. Then, by Theorem \ref{thm::policy_gd},
\begin{align*}
\frac{\partial V^{\pi_\theta}_\mu}{\partial \theta_{s',a'}}
&=  \sum_{s \in \cS}\delta_\mu^{\pi}(s')
    \mathbb{E}_{a \sim \pi_\theta(\cdot\mid s)}
    \biggl[ Q^{\pi_\theta}(s,a)\,
           \frac{\partial \log \pi_\theta(a\mid s)}{\partial \theta_{s',a'}}
    \biggr] \\
&=  \sum_{s \in \cS}\delta_\mu^{\pi}(s')
    \mathbb{E}_{a \sim \pi_\theta(\cdot\mid s)}
    \bigl[ Q^{\pi_\theta}(s,a)\,
           \mathbf{1}[s=s']\bigl(\mathbf{1}[a=a']-\pi_\theta(a'\mid s)\bigr)
    \bigr] \\
&=  \delta_\mu^{\pi}(s')
    \mathbb{E}_{a \sim \pi_\theta(\cdot\mid s')}
    \bigl[ Q^{\pi_\theta}(s',a)\bigl(\mathbf{1}[a=a']-\pi_\theta(a'\mid s')\bigr)
    \bigr] \\
&= \delta_\mu^{\pi}(s')
    \Bigl(
      \mathbb{E}_{a \sim \pi_\theta(\cdot\mid s')}
        \bigl[Q^{\pi_\theta}(s',a)\mathbf{1}[a=a']\bigr]
      - \pi_\theta(a'\mid s')
        \mathbb{E}_{a \sim \pi_\theta(\cdot\mid s')}
        \bigl[Q^{\pi_\theta}(s',a)\bigr]
    \Bigr) \\
&=  \delta_\mu^{\pi}(s')\,\pi_\theta(a'\mid s')(Q^{\pi_\theta}(s',a') - V^{\pi_\theta}(s')).
\end{align*}
\end{proof}

Now, we derive the reformulation of natural policy gradient. 
\begin{proof}
First, we have
\[
w^\top \nabla_\theta \log \pi_\theta(a\mid s)
= w_{s,a} - \sum_{a'\in\mathcal{A}} w_{s,a'} \pi_\theta(a'\mid s).
\]
Let \(\overline{w}_s= \sum_{a'\in\mathcal{A}} w_{s,a'} \pi_\theta(a'\mid s)\), which is independent of \(a\).
Then,
\begin{align*}
\mathcal{F}_\mu(\theta) w
&= \sum_{s\in \cS}\delta_{\mu}^{\pi_\theta}(s)
   \mathbb{E}_{a\sim \pi_\theta(\cdot\mid s)}
   \bigl[
      \nabla_\theta \log \pi_\theta(a\mid s)
      \bigl( w^\top \nabla_\theta \log \pi_\theta(a\mid s) \bigr)
   \bigr] \\
&= \sum_{s\in \cS}\delta_{\mu}^{\pi_\theta}(s)
   \mathbb{E}_{a\sim \pi_\theta(\cdot\mid s)}
   \bigl[
      \nabla_\theta \log \pi_\theta(a\mid s)
      \bigl( w_{s,a} - \overline{w}_s \bigr)
   \bigr] \\
&= \sum_{s\in \cS}\delta_{\mu}^{\pi_\theta}(s)
   \mathbb{E}_{a\sim \pi_\theta(\cdot\mid s)}
   \bigl[
      w_{s,a}\,\nabla_\theta \log \pi_\theta(a\mid s)
   \bigr],
\end{align*}
where the last equality uses log derivative trick with \(\overline{w}_s\), and this implies that
\[
\bigl[\mathcal{F}_\mu(\theta) w\bigr]_{s',a'}
= \delta_{\mu}^{\pi_\theta}(s')\,\pi_\theta(a'\mid s')
  \bigl( w_{s',a'} - \overline{w}_{s'} \bigr).
\]
By property of Moore--Penrose pseudoinverse, $
\bigl(\mathcal{F}_\mu(\theta)\bigr)^{\dagger}\nabla V^{\pi_\theta}_\mu$ is the minimum-norm solution of $\min_{w}\bigl\| \nabla V^{\pi_\theta}_\mu - \mathcal{F}_\mu(\theta) w \bigr\|_2^2 $ where $\|\cdot\|_2$ is Euclidean norm. By lemma
\ref{pg_softmax}, we have 
\begin{align*}
\bigl\| \nabla V^{\pi_\theta}_\mu - \mathcal{F}_\mu(\theta)w \bigr\|^2
&= \sum_{s,a} \left(  \delta_{\mu}^{\pi_\theta}(s)\pi_\theta(a\mid s)
\left( Q^{\pi_\theta}(s,a) - V^{\pi_\theta}(s)
      - w_{s,a} - \sum_{a'\in \mathcal{A}} w_{s,a'} \pi_\theta(a'\mid s)
\right) \right)^{2}.
\end{align*}
If \( w = Q^{\pi_\theta}(s,a) - V^{\pi_\theta}(s) \), $\bigl\| \nabla V^{\pi_\theta}_\mu - \mathcal{F}_\mu(\theta) w \bigr\|^2=0$, and $w$ has the form of $Q^{\pi_\theta}(s,a) + v(s)$ where \(v\) only depends on state. Therefore, plugging $
 \mathcal{F}_\mu(\theta)^{\dagger}\nabla V^{\pi_\theta}_\mu =Q^{\pi_\theta}(s,a) + v(s)$ into the original form of natural policy gradient, we obtain reformulation of natural policy gradient.


\end{proof}

Following the proof strategy of \cite{xiao2022convergence}, we consider mirror descent framework.

Let \(h:\cM(\cA)\to\mathbb{R}\) be a strictly convex function and continuously
differentiable on the (relative) interior of \(\cM(\cA)\), denoted as
\(\rint\,\cM(\cA)\).
Define Bregman divergence generated by \(h\) as
\[
D(p,p') \;=\; h(p)-h(p')-\langle \nabla h(p'),\, p-p' \rangle,
\]
 for any \(p\in\cM(\cA)\) and \(p'\in\rint\,\cM(\cA)\).
Specifically, Kullback--Leibler (KL) divergence, generated by the negative entropy $h(p)=\sum_{a\in\cA} p_a\log p_a$ formulated as $
  D(p,p')=\sum_{a\in\cA} p_a \log\!\frac{p_a}{p'_a}\, .
  $



For any \(\mu\in\cM(\cS)\), we define a weighted divergence function
\[
D_\mu(\pi,\pi') \;=\; \sum_{s\in\cS}\mu(s)\,D(\pi(\cdot\,|\,s),\pi'(\cdot\,|\,s)).
\]
Following the derivations of \cite{shani2020adaptive,xiao2022convergence}, we consider
policy mirror descent methods with dynamically weighted divergences:
\[
\pi_{k+1}
=\argmin_{\pi\in\Pi}
\left\{
-\eta_k\big\langle\nabla V_\mu^{\pi_{k}},\,\pi\big\rangle
+\,D_{\delta_\mu(\pi_{k})}\!\big(\pi,\pi_{k}\big)
\right\},
\]
where \(\eta_k\) is the step size, \(\mu\in\cM_+(\cS)\) is an arbitrary
state distribution. 

Consider direction parametrization, and  by Theorem \ref{thm::policy_gd}, we have
\begin{align*}
    \pi_{k+1}
&=\argmin_{\pi\in\Pi}
\left\{
-\eta_k \sum_{s \in \cS} \delta^{\pi_k}_{\mu}(s) \big(\sum_{s\in\cS} Q^{\pi_k}(s,a)\pi(a\,|\,s)+D\big(\pi(\cdot\,|\,s),\pi_{k}(\cdot\,|\,s)\big)
\right\}
\\&=\argmin_{\pi\in\Pi}
\left\{
-\eta_k \sum_{s \in \cS}  \big(\sum_{s\in\cS} Q^{\pi_k}(s,a)\pi(a\,|\,s)+D\big(\pi(\cdot\,|\,s),\pi_{k}(\cdot\,|\,s)\big)
\right\}
\end{align*}

and this is reduced to 
\[
\pi_{k+1}(\cdot \, |\, s)
=
\argmin_{p\in\cM(\cA)}
\Big\{
-\eta_k\ \sum_{a\in \cA}Q^{\pi_{k}}(a, s) p(a)
+ D\big(p,\,\pi_{k}(\cdot \,|\, s)\big)
\Big\}
\]
for all $ s\in\cS $. It was known that solution form of this update is natural policy gradient with softmax parameterization \citep[Section~9.1]{beck2017first}. 

We say function $h$ is of Legendre type if it is essentially smooth and strictly
convex in the (relative) interior of $\dom h$ and 
$h$ is essential smoothness if $h$ is differentiable and
$\|\nabla h(x_k)\|\to\infty$ for every sequence $\{x_k\}$ converging to a
boundary point of $\dom h$.

\begin{fact}[Three-point descent lemma]\citep[Lemma 6]{xiao2022convergence}\label{fact:bregman}
Suppose that $\,\mathcal{C}\subset\mathbb{R}^n$ is a closed convex set,
$\phi:\mathcal{C}\to\mathbb{R}$ is a proper, closed, and convex function,
$D(\cdot,\cdot)$ is the Bregman divergence generated by a function $h$ of
Legendre type and $\rint \dom h \cap \mathcal{C}\neq\varnothing$.
For any $x\in \rint \dom h$, let
\[
x^{+}=\argmin_{u\in\mathcal{C}} \big\{\, \phi(u)+D(u,x) \,\big\}.
\]
Then, $x^{+}\in \rint \dom h \cap \mathcal{C}$ and for any $u\in\mathcal{C}$,
\[
\phi(x^{+}) + D(x^{+},x) \;\le\; \phi(u) + D(u,x) - D(u,x^{+}).
\]
\end{fact}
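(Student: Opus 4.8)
The plan is to reduce the inequality to the first-order optimality condition for the convex program that defines $x^{+}$, and then fold in the elementary three-point identity for Bregman divergences. The first order of business is the regularity claim that $x^{+}$ exists, is unique, and lies in $\rint\dom h\cap\mathcal{C}$. Existence and uniqueness follow because $u\mapsto\phi(u)+D(u,x)$ is proper, closed, and strictly convex on $\mathcal{C}$ (strictness inherited from the strict convexity of $h$) with bounded sublevel sets. The containment $x^{+}\in\rint\dom h$ is where the Legendre-type hypothesis is really used: essential smoothness forces $\|\nabla h(u_k)\|\to\infty$ along any sequence $u_k$ approaching a boundary point of $\dom h$, so the objective's one-sided directional derivatives point strictly inward there and no boundary point of $\dom h$ can be a minimizer. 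I would invoke the standard argument for this (e.g. Bauschke--Borwein) rather than reprove it.

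Once $x^{+}\in\rint\dom h$ is in hand, the map $u\mapsto D(u,x)$ is differentiable at $x^{+}$ with gradient $\nabla h(x^{+})-\nabla h(x)$. Applying the subdifferential sum rule --- legitimate since $D(\cdot,x)$ is smooth at $x^{+}$ and $\rint\dom h\cap\mathcal{C}\neq\varnothing$ --- together with the optimality condition for minimizing a convex function over the convex set $\mathcal{C}$, I obtain a subgradient $g\in\partial\phi(x^{+})$ with
\[
\langle\, g+\nabla h(x^{+})-\nabla h(x),\ u-x^{+}\,\rangle\;\ge\;0\qquad\text{for all }u\in\mathcal{C}.
\]

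Finally I would chain three facts: the convexity inequality $\phi(u)\ge\phi(x^{+})+\langle g,\,u-x^{+}\rangle$; the optimality inequality above, rearranged to $\langle g,\,u-x^{+}\rangle\ge-\langle\nabla h(x^{+})-\nabla h(x),\,u-x^{+}\rangle$; and the algebraic identity obtained by expanding the three Bregman terms,
\[
D(u,x)\;=\;D(u,x^{+})+D(x^{+},x)+\langle\,\nabla h(x^{+})-\nabla h(x),\ u-x^{+}\,\rangle.
\]
Combining the first two gives $\langle\nabla h(x^{+})-\nabla h(x),\,u-x^{+}\rangle\ge\phi(x^{+})-\phi(u)$; substituting this into the identity and rearranging yields exactly $\phi(x^{+})+D(x^{+},x)\le\phi(u)+D(u,x)-D(u,x^{+})$. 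The only nonroutine step is the regularity claim $x^{+}\in\rint\dom h$, where the Legendre-type assumption does the work; the remainder is bookkeeping in convex analysis plus the one-line Bregman identity.
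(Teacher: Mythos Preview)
The paper does not prove this statement: it is stated as a \emph{Fact} and cited verbatim from \cite[Lemma~6]{xiao2022convergence}, with no accompanying argument. There is thus nothing in the paper to compare your proposal against.

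That said, your proof is the standard one and is essentially correct. The combination of (i) the first-order optimality condition at $x^{+}$, (ii) the subgradient inequality for $\phi$, and (iii) the three-point Bregman identity is exactly how this lemma is proved in the references you allude to (Chen--Teboulle, Bauschke--Borwein, Tseng, and indeed Xiao's own Lemma~6). One small quibble: your claim that $u\mapsto\phi(u)+D(u,x)$ has ``bounded sublevel sets'' is asserted rather than argued, and strict convexity alone does not imply coercivity. In the Legendre setting the standard route is to show that the objective tends to $+\infty$ along any sequence approaching the boundary of $\dom h$ (which you do sketch) and to use closedness of $\phi$ plus $\rint\dom h\cap\mathcal{C}\neq\varnothing$ to guarantee a minimizer; alternatively, many presentations simply \emph{assume} the argmin is attained and prove only the location and the inequality. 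Either way this is a routine patch, and your deferral to Bauschke--Borwein for the regularity step is appropriate.
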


In our setup, $\mathcal{C}=\cM(\cA)$,  $\phi(p)=-\eta_k \langle Q^{\pi_k}(\cdot,s), p(\cdot)\,\rangle$, and $h$ is the negative-entropy function, which is also
of Legendre type, satisfying $\rint\,\operatorname{dom}h \cap \mathcal{C}
= \rint\cM(\cA) = \rint \operatorname{dom}h$.
Therefore, if we start with an initial point in $\rint\cM(\cA)$, then every
iterate will stay in $\rint\cM(\cA)$.

\subsection{Proof of Lemma \ref{lem:ngd_descent}}
We proved more detailed version of  Lemma \ref{lem:ngd_descent}.
\begin{lemma}\label{lem:7'}
Under Assumption~\ref{assump_total}, for given $\pi_0\in \Pi_+$ and $\mu$ with full support, the natural policy gradient with step size $\eta_k>0$ generates a sequence of policies $\{\pi_{k}\}^\infty_{k=1}$ satisfying
\[ \sum_{a\in \cA} Q^{\pi_{k}}(a, s )  (\pi_{k}(a\,|\,s)-\pi_{k+1}(a\,|\,s))  \le 0,\qquad \forall\, s\in \cS ,\]and
\[
V_\mu^{\pi_k} \le V_\mu^{\pi_{k+1}}.
\]
\end{lemma}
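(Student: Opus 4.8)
The plan is to prove the two claims of Lemma~\ref{lem:7'} in order, since the monotonic improvement $V_\mu^{\pi_k}\le V_\mu^{\pi_{k+1}}$ will follow from the first inequality via the transient performance difference lemma (Lemma~\ref{lem:pdl}). First I would invoke the three-point descent lemma (Fact~\ref{fact:bregman}) applied to the per-state mirror-descent update that defines the natural policy gradient: with $\mathcal{C}=\cM(\cA)$, $\phi(p)=-\eta_k\langle Q^{\pi_k}(\cdot,s),p(\cdot)\rangle$, $h$ the negative entropy, $x=\pi_k(\cdot\,|\,s)$ and $x^+=\pi_{k+1}(\cdot\,|\,s)$. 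Choosing the comparison point $u=\pi_k(\cdot\,|\,s)$ in Fact~\ref{fact:bregman} gives
\[
\phi(\pi_{k+1}(\cdot\,|\,s)) + D(\pi_{k+1}(\cdot\,|\,s),\pi_k(\cdot\,|\,s))
\;\le\; \phi(\pi_k(\cdot\,|\,s)) - D(\pi_k(\cdot\,|\,s),\pi_{k+1}(\cdot\,|\,s)).
\]
Since $D(\pi_{k+1},\pi_k)\ge 0$ and $D(\pi_k,\pi_{k+1})\ge 0$, this yields $\phi(\pi_{k+1}(\cdot\,|\,s))\le\phi(\pi_k(\cdot\,|\,s))$, which upon substituting the definition of $\phi$ and dividing by $\eta_k>0$ is exactly $\sum_{a\in\cA}Q^{\pi_k}(a,s)(\pi_k(a\,|\,s)-\pi_{k+1}(a\,|\,s))\le 0$ for every $s\in\cS$. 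This is the first claim.

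For the second claim I would apply the transient performance difference lemma with $\pi=\pi_{k+1}$ and $\pi'=\pi_k$. The hypothesis of Lemma~\ref{lem:pdl} requires $V^{\pi_k}(s)=0$ for all recurrent states $s$ of $\cP^{\pi_{k+1}}$; since all iterates lie in $\Pi_+$ (softmax parametrization, or, for the mirror-descent view, because starting in $\rint\cM(\cA)$ keeps every iterate there), the recurrent–transient classification is the common one of Proposition~\ref{prop:classification}, and $V^{\pi_k}$ vanishes on recurrent states by Corollary~\ref{cor:val_zero}. Lemma~\ref{lem:pdl} then gives
\[
V_\mu^{\pi_{k+1}}-V_\mu^{\pi_k}
= \sum_{s\in\cS}\delta_\mu^{\pi_{k+1}}(s)\sum_{a\in\cA}Q^{\pi_k}(s,a)\big(\pi_{k+1}(a\,|\,s)-\pi_k(a\,|\,s)\big).
\]
By the first claim, the inner sum is $\ge 0$ for each $s$, and $\delta_\mu^{\pi_{k+1}}(s)\ge 0$ (indeed $\delta_\mu^{\pi_{k+1}}(s)\ge\mu(s)\ge 0$), so the whole expression is nonnegative, giving $V_\mu^{\pi_k}\le V_\mu^{\pi_{k+1}}$.

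The main obstacle I anticipate is purely a bookkeeping one: verifying carefully that the natural-policy-gradient update (given in closed form as the multiplicative-weights/softmax update $\pi_{k+1}(a\,|\,s)\propto\pi_k(a\,|\,s)\exp(\eta_k Q^{\pi_k}(s,a))$) genuinely coincides with the per-state mirror-descent minimizer to which Fact~\ref{fact:bregman} applies, and that the weighting by $\delta_\mu^{\pi_k}(s)$ in the objective does not affect the argmin — this is where the remark after the update rule (that the $\delta$-weights and $\mu$ drop out because the Fisher pseudoinverse removes the dependence) is used. Once that identification is in place, both inequalities are immediate from the facts already established, with no further estimates needed.
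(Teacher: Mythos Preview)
Your proposal is correct and follows essentially the same route as the paper: apply the three-point descent lemma (Fact~\ref{fact:bregman}) to the per-state mirror-descent update with the comparison point $u=\pi_k(\cdot\,|\,s)$ to obtain the first inequality, then feed that into the transient performance difference lemma (the paper cites Corollary~\ref{cor:pdl}, which packages exactly the $\Pi_+$ verification you spell out via Proposition~\ref{prop:classification} and Corollary~\ref{cor:val_zero}) to deduce monotone improvement. The bookkeeping point you flag---that the $\delta_\mu^{\pi_k}$ weighting drops out of the argmin so the update really is the per-state mirror-descent step---is handled in the paper's preamble to the proof and poses no difficulty.
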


\begin{proof}[Proof of Lemma \ref{lem:ngd_descent}]
    Applying Fact \ref{fact:bregman} with
$\mathcal{C}=\cM(\cA)$, $\phi(p)=-\eta_k \sum_{a\in\cA} Q^{\pi_k}(s,a)p(a)$, and KL divergence as Bregman divergence,
we obtain
\begin{align*}
    &\eta_k \sum_{a\in \cA} Q^{\pi_k}(s,a) p(a)
+\text{KL}\big(\pi_{k+1}(\cdot\,|\,s),\, \pi_k(\cdot\,|\,s)\big)
\\&\le
\eta_k \sum_{a\in \cA} Q^{\pi_k}(s,a) \pi_{k+1}(a\,|\,s)
+ \text{KL}\big(p, \pi_k(\cdot\,|\,s)\big) -\text{KL}\big(p, \pi_{k+1}(\cdot\,|\,s)\big)
\end{align*}
for any $p\in\cM(\cA)$. Rearranging terms and dividing both sides by $\eta_k$, we get
\begin{align*}
    &\sum_{a\in \cA} Q^{\pi_k}(s,a)( p(a)-\pi_{k+1}(a\,|\,s))
+ \frac{1}{\eta_k} \text{KL}\!\big(\pi_{k+1}(\cdot\,|\,s), \pi_k(\cdot\,|\,s)\big)
\\&\le\;
\frac{1}{\eta_k} \text{KL}\!\big(p, \pi_k(\cdot\,|\,s)\big)
- \frac{1}{\eta_k} \text{KL}\!\big(p, \pi_{k+1}(\cdot\,|\,s)\big) \tag{$*$}.
\end{align*}
Letting \(p=\pi_k(\cdot\,|\,s)\) in previous inequality yields
\[
\sum_{a\in \cA} Q^{\pi_k}(s,a)(\pi_{k}(a\,|\,s) - \pi_{k+1}(a\,|\,s)) \le
-\frac{1}{\eta_k} \text{KL}\big(\pi_{k+1}(\cdot\,|\,s), \pi_k(\cdot\,|\,s)\big)
-\frac{1}{\eta_k} \text{KL}\big(\pi_k(\cdot\,|\,s), \pi_{k+1}(\cdot\,|\,s)\big).
\]
Then, the first results comes from nonnegativity of Bregman divergence and second result comes from Corollary \ref{cor:pdl}.
\end{proof}


\subsection{Proof of Theorem \ref{thm:ngd}}
\begin{proof}
Consider previous inequality ($*$). Let \(p=\pi(\cdot \,|\, s)\in\Pi_+\)  and add--subtract \(\pi_{k}(\cdot \,|\, s)\) inside
the inner product. Then we have
\begin{align*}
    &\sum_{a\in \cA} Q^{\pi_k}(s,a)(\pi_{k}(a\,|\,s) - \pi_{k+1}(a\,|\,s))
+ \sum_{a\in \cA} Q^{\pi_k}(s,a)(\pi(a\,|\,s) - \pi_{k}(a\,|\,s)) \\&\le\; \frac{1}{\eta_k} \text{KL}\big(\pi (\cdot\,|\,s),\pi_{k}(\cdot \,|\, s)\big)
        - \frac{1}{\eta_k} \text{KL}\big(\pi (\cdot\,|\,s),\pi_{k+1} (\cdot\,|\,s)\big).
\end{align*}
This implies
\begin{align*}
    &\sum_{s \in \cS}\delta_\mu^{\pi}(s)\sum_{a\in \cA} Q^{\pi_k}(s,a)(\pi_{k}(a\,|\,s) - \pi_{k+1}(a\,|\,s))
+ \sum_{s \in \cS}\delta_\mu^{\pi}(s)\sum_{a\in \cA} Q^{\pi_k}(s,a)(\pi(a\,|\,s) - \pi_{k}(a\,|\,s))\\ & \le\; \frac{1}{\eta_k} \text{KL}_{\delta_\mu^{\pi}}(\pi, \pi_k) - \frac{1}{\eta_k} \text{KL}_{\delta_\mu^{\pi}}(\pi, \pi_{k+1}) .
\end{align*}
For the first term, 
\[
\begin{aligned}
&\sum_{s \in \cS}\frac{\delta_\mu^{\pi}(s)}{\|\delta_\mu^{\pi}\|_1}\sum_{a\in \cA} Q^{\pi_k}(s,a)(\pi_{k}(a\,|\,s) - \pi_{k+1}(a\,|\,s))
\\&\ge \sum_{s \in \cS}(\delta_\mu^{\pi})'(s)\sum_{a\in \cA} Q^{\pi_k}(s,a)(\pi_{k}(a\,|\,s) - \pi_{k+1}(a\,|\,s)) \\
&\ge \sum_{s \in \cS}\delta_{(\delta_\mu^{\pi})'}^{\pi_{k+1}}(s)\sum_{a\in \cA} Q^{\pi_k}(s,a)(\pi_{k}(a\,|\,s) - \pi_{k+1}(a\,|\,s)) \\
&= V_{(\delta_\mu^{\pi})'}^{\pi_{k}}-V_{(\delta_\mu^{\pi})'}^{\pi_{k+1}},
\end{aligned}
\]
where $(\delta_{\mu}^{\pi})'$ is probability distribution satisfying $(\delta_{\mu}^{\pi})'(s) = \frac{\delta_\mu^{\pi}(s)}{\infn{\delta_\mu^{\pi}}}$, the first and second inequalities come from Lemma \ref{lem:7'}, and the last equality comes from Corollary \ref{cor:pdl}.

For the second term, by  Corollary \ref{cor:pdl},
\[
\sum_{s \in \cS}\delta_\mu^{\pi}(s)\sum_{a\in \cA} Q^{\pi_k}(s,a)(\pi(a\,|\,s) - \pi_{k}(a\,|\,s))
= V_\mu^{\pi}- V_\mu^{\pi_{k}}  .
\]
Thus we have
\[
V_\mu^{\pi}-V_\mu^{\pi_{k}} 
\;\le\;\frac{1}{\eta_k} \text{KL}_{\delta_\mu^{\pi}}(\pi, \pi_k) - \frac{1}{\eta_k} \text{KL}_{\delta_\mu^{\pi}}(\pi, \pi_{k+1})
+ \infn{\delta_\mu^{\pi}}(V_{(\delta_\mu^{\pi})'}^{\pi_{k+1}}
 - V_{(\delta_\mu^{\pi})'}^{\pi_{k}}).
\]

Setting \(\eta_k=\eta\) for all \(k\ge 0\) and summing over \(k\) gives
\[
\sum_{i=0}^{k}\big(V_\mu^{\pi}- V_\mu^{\pi_{i}}  \big)
\;\le\;
\frac{1}{\eta} \text{KL}_{\delta_\mu^{\pi}}(\pi, \pi_0)  - \frac{1}{\eta} \text{KL}_{\delta_\mu^{\pi}}(\pi, \pi_{k+1}) 
+ \infn{\delta_\mu^{\pi}}(V_{(\delta_\mu^{\pi})'}^{\pi_{k+1}}
 - V_{(\delta_\mu^{\pi})'}^{\pi_{0}}).
\]

Since \(V_\mu^{\pi_{k}}\) are non-decreasing by Lemma \ref{lem:ngd_descent} and KL-divergence is non-negative, we conclude that
\[
V_\mu^\pi-V_\mu^{\pi_{k}}  
\;\le\; \frac{1}{k+1}\!\left( \frac{\text{KL}_{\delta_\mu^{\pi}}(\pi, \pi_0)}{\eta}
+ \infn{\delta_\mu^{\pi}} (\infn{V^\star_{+}}+\infn{V^{\pi_0}}) \right).
\]

For given $\eta, \epsilon>0$, there exist $\pi$ such that  $V^\star_{+,\mu}-V^\pi <\epsilon/2$ since $V^\star_{+,\mu} <\infty$. Then, by previous inequality, there exist $\pi_k$ such that $V^\pi-V^{\pi_k}_\mu <\epsilon/2$. Thus, we have $V^\star_{+,\mu}-V^{\pi_k}_\mu <\epsilon$. Since this holds for arbitrary $\epsilon$, we get $V^{\pi_k}_\mu \rightarrow V^\star_{+,\mu}.$

\end{proof}
\subsection{Proof of Corollary \ref{cor:ngd}}
\begin{proof}
In the previous proof of Theorem \ref{thm:ngd}, let $\pi =\pi^\star$, and use Corollary \ref{cor:pdl_reward} instead of Corollary \ref{cor:pdl} and the fact that $V^\pi_{\mu} \ge 0$. Then, we obtain the desired result.
\end{proof}
\subsection{Proof of Theorem \ref{thm:ngd_linear}}
Define
$
U^\pi_k=V_\mu^{\pi}- V_\mu^{\pi_{k}},
$ and the per-iteration distribution mismatch coefficient
\[
\vartheta^\pi_k \;:=\; \left\lVert \frac{\delta_\mu^{\pi}}{\delta_\mu^{\pi_{k}}} \right\rVert_{\infty}.
\]
We first prove following key lemma. 
\begin{lemma}\label{lem:ngd_linear}
Under Assumption~\ref{assump_total}, for a given $\mu$ with full support, the natural policy gradient with step size $\eta_k>0$ generates a sequence of policies $\{\pi_{k}\}^\infty_{k=1}$ satisfying,
\[
\vartheta^\pi_{k+1}\,\big(U^\pi_{k+1}-U^\pi_k\big) + U^\pi_k
\;\le\;
\frac{1}{\eta_k} \text{KL}_{\delta_\mu^{\pi}}(\pi, \pi_k) - \frac{1}{\eta_k} \text{KL}_{\delta_\mu^{\pi}}(\pi, \pi_{k+1}) 
\]
\end{lemma}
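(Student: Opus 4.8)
The plan is to start from the per-state inequality $(*)$ established in the proof of Lemma~\ref{lem:ngd_descent} --- the consequence of the three-point descent lemma (Fact~\ref{fact:bregman}) applied to the natural policy gradient update --- and manipulate it exactly as at the beginning of the proof of Theorem~\ref{thm:ngd}, except that the first term will now be tracked via the mismatch coefficient $\vartheta^\pi_{k+1}$ rather than via the crude bound $\infn{\delta^\pi_\mu}$.

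First I set $p=\pi(\cdot\,|\,s)$ in $(*)$, insert $\pm\,\pi_k(\cdot\,|\,s)$ inside the inner product, and discard the nonnegative term $\tfrac1{\eta_k}\mathrm{KL}(\pi_{k+1}(\cdot\,|\,s),\pi_k(\cdot\,|\,s))$ on the left, obtaining for every $s\in\cS$
\[
\sum_{a} Q^{\pi_k}(a,s)\big(\pi_k(a\,|\,s)-\pi_{k+1}(a\,|\,s)\big)
+\sum_{a} Q^{\pi_k}(a,s)\big(\pi(a\,|\,s)-\pi_k(a\,|\,s)\big)
\le
\frac{\mathrm{KL}(\pi(\cdot\,|\,s),\pi_k(\cdot\,|\,s))-\mathrm{KL}(\pi(\cdot\,|\,s),\pi_{k+1}(\cdot\,|\,s))}{\eta_k}.
\]
Multiplying through by $\delta^\pi_\mu(s)\ge 0$ and summing over $s\in\cS$, the right side collapses to $\tfrac1{\eta_k}\mathrm{KL}_{\delta^\pi_\mu}(\pi,\pi_k)-\tfrac1{\eta_k}\mathrm{KL}_{\delta^\pi_\mu}(\pi,\pi_{k+1})$, and the second term on the left equals $V^\pi_\mu-V^{\pi_k}_\mu=U^\pi_k$ by Corollary~\ref{cor:pdl}.

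It remains to lower bound the first term, $\sum_{s\in\cS}\delta^\pi_\mu(s)\,A(s)$ with $A(s):=\sum_a Q^{\pi_k}(a,s)\big(\pi_k(a\,|\,s)-\pi_{k+1}(a\,|\,s)\big)$. By Lemma~\ref{lem:7'}, $A(s)\le 0$ for every $s$. Since $\mu$ has full support, $\delta^{\pi_{k+1}}_\mu(s)\ge\mu(s)>0$, so $\vartheta^\pi_{k+1}=\infn{\delta^\pi_\mu/\delta^{\pi_{k+1}}_\mu}$ is well-defined and $\delta^\pi_\mu(s)\le\vartheta^\pi_{k+1}\,\delta^{\pi_{k+1}}_\mu(s)$ for all $s$; because $A(s)\le 0$, multiplying this inequality through by $A(s)$ reverses it, giving $\delta^\pi_\mu(s)A(s)\ge\vartheta^\pi_{k+1}\,\delta^{\pi_{k+1}}_\mu(s)A(s)$. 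Summing over $s$ and applying Corollary~\ref{cor:pdl} once more --- with $\pi_k$ and $\pi_{k+1}$ in the roles of $\pi$ and $\pi'$ --- the resulting sum equals $\vartheta^\pi_{k+1}\big(V^{\pi_k}_\mu-V^{\pi_{k+1}}_\mu\big)=\vartheta^\pi_{k+1}\big(U^\pi_{k+1}-U^\pi_k\big)$. Plugging this lower bound and the identification of the second term back into the summed inequality gives exactly
\[
\vartheta^\pi_{k+1}\big(U^\pi_{k+1}-U^\pi_k\big)+U^\pi_k
\;\le\;
\frac{1}{\eta_k}\mathrm{KL}_{\delta^\pi_\mu}(\pi,\pi_k)-\frac{1}{\eta_k}\mathrm{KL}_{\delta^\pi_\mu}(\pi,\pi_{k+1}).
\]

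The only delicate point is the sign bookkeeping in the step $\delta^\pi_\mu(s)A(s)\ge\vartheta^\pi_{k+1}\,\delta^{\pi_{k+1}}_\mu(s)A(s)$: it genuinely requires the \emph{state-wise} monotonicity $A(s)\le 0$ supplied by Lemma~\ref{lem:7'} --- the aggregate inequality $V^{\pi_k}_\mu\le V^{\pi_{k+1}}_\mu$ alone would not justify the reweighting --- together with strict positivity of $\delta^{\pi_{k+1}}_\mu$, which is where the full-support hypothesis on $\mu$ is used. Everything else is the same telescoping-and-reweighting bookkeeping already carried out in the proof of Theorem~\ref{thm:ngd}.
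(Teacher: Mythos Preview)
Your proof is correct and follows essentially the same route as the paper: start from the per-state three-point inequality $(*)$, weight by $\delta^\pi_\mu$, identify the second term as $U^\pi_k$ via Corollary~\ref{cor:pdl}, and lower-bound the first term by reweighting to $\delta^{\pi_{k+1}}_\mu$ using the state-wise sign $A(s)\le 0$ from Lemma~\ref{lem:7'} together with Corollary~\ref{cor:pdl}. The only slip is cosmetic --- in the second application of Corollary~\ref{cor:pdl} the roles should be $\pi_{k+1}\leftrightarrow\pi$ and $\pi_k\leftrightarrow\pi'$ (so that $\delta^{\pi_{k+1}}_\mu$ and $Q^{\pi_k}$ appear), not the other way around --- but your stated conclusion $\sum_s\delta^{\pi_{k+1}}_\mu(s)A(s)=V^{\pi_k}_\mu-V^{\pi_{k+1}}_\mu$ is correct.
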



\begin{proof}[Proof of Lemma \ref{lem:ngd_linear}]
In the previous proof of Theorem \ref{thm:ngd}, we showed that 
\begin{align*}
    &\sum_{s \in \cS}\delta_\mu^{\pi}(s)\sum_{a\in \cA} Q^{\pi_k}(s,a)(\pi_{k}(a\,|\,s) - \pi_{k+1}(a\,|\,s))
+ \sum_{s \in \cS}\delta_\mu^{\pi}(s)\sum_{a\in \cA} Q^{\pi_k}(s,a)(\pi(a\,|\,s) - \pi_{k}(a\,|\,s))\\ & \le\; \frac{1}{\eta_k} \text{KL}_{\delta_\mu^{\pi}}(\pi, \pi_k) - \frac{1}{\eta_k} \text{KL}_{\delta_\mu^{\pi}}(\pi, \pi_{k+1}) .
\end{align*}
For the first term
\[
\begin{aligned}
&\sum_{s \in \cS}\delta_\mu^{\pi}(s)\sum_{a\in \cA} Q^{\pi_k}(s,a)(\pi_{k}(a\,|\,s) - \pi_{k+1}(a\,|\,s))\\
&= \sum_{s\in\cS}
   \frac{\delta_\mu^{\pi}}{\delta_\mu^{\pi_{k+1}}}\,
   \delta_\mu^{\pi_{k+1}}
   \sum_{a\in \cA} Q^{\pi_k}(s,a)(\pi_{k}(a\,|\,s) - \pi_{k+1}(a\,|\,s)) \\
&\ge
\left\lVert\frac{\delta_\mu^{\pi}}{\delta_\mu^{\pi_{k+1}}}\right\rVert_{\infty}
\sum_{s\in\cS} \delta_\mu^{\pi_{k+1}}
\sum_{a\in \cA} Q^{\pi_k}(s,a)(\pi_{k}(a\,|\,s) - \pi_{k+1}(a\,|\,s)) \\
&=
\left\lVert\frac{\delta_\mu^{\pi}}{\delta_\mu^{\pi_{k+1}}}\right\rVert_{\infty}
\Big(V_\mu^{\pi_{k}}-V_\mu^{\pi_{k+1}}\Big),
\end{aligned}
\]
where the first inequality is from Lemma \ref{lem:ngd_descent}, and the last equality comes from Corollary \ref{cor:pdl}. 

For the second term, by  Corollary \ref{cor:pdl},
\[
\sum_{s \in \cS}\delta_\mu^{\pi}(s)\sum_{a\in \cA} Q^{\pi_k}(s,a)(\pi(a\,|\,s) - \pi_{k}(a\,|\,s))
= V_\mu^{\pi}- V_\mu^{\pi_{k}}  .
\]
we obtain the desired result after substitution.

\end{proof}

We are now ready to prove Theorem \ref{thm:ngd_linear}
\begin{proof}
Since $\delta_{\mu}^{\pi_{k}}(s)\ge\mu(s)$ for all
$s\in \cS$, $\vartheta^\pi_k\le \vartheta^\pi_\mu$ for all $k\ge 0$. $U^\pi_{k+1}-U^\pi_{k}\le 0$ for all $k\ge 0$ by Lemma \ref{lem:7'}, and by Lemma \ref{lem:ngd_linear}, 
\[
\vartheta^\pi_\mu\,\big(U^\pi_{k+1}-U^\pi_k\big) + U^\pi_k
\;\le\;
\frac{1}{\eta_k} \text{KL}_{\delta_\mu^{\pi}}(\pi, \pi_k) - \frac{1}{\eta_k} \text{KL}_{\delta_\mu^{\pi}}(\pi, \pi_{k+1}). 
\]
Dividing both sides by $\vartheta^\pi_\mu$ and rearranging terms, we obtain
\[
U^\pi_{k+1}
+\frac{1}{\eta_k\vartheta^\pi_\mu} \text{KL}_{\delta_\mu^{\pi}}(\pi, \pi_{k+1}) 
\;\le\;
\left(1-\frac{1}{\vartheta^\pi_\mu}\right)
\left(
U^\pi_k + \frac{1}{\eta_k(\vartheta^\pi_\mu-1)} \text{KL}_{\delta_\mu^{\pi}}(\pi, \pi_{k}) 
\right).
\]
Since the step sizes satisfy condition,
$\eta_{k+1}(\vartheta^\pi_\mu-1)\ge \eta_k\vartheta^\pi_\mu >0$, we have
\[
U^\pi_{k+1}
+\frac{1}{\eta_{k+1}(\vartheta^\pi_\mu-1)} \text{KL}_{\delta_\mu^{\pi}}(\pi, \pi_{k+1})
\;\le\;
\left(1-\frac{1}{\vartheta^\pi_\mu}\right)
\left(
U^\pi_k + \frac{1}{\eta_k(\vartheta^\pi_\mu-1)} \text{KL}_{\delta_\mu^{\pi}}(\pi, \pi_{k})
\right).
\]
Therefore, by recursion,
\[
U^\pi_k
+\frac{1}{\eta_k(\vartheta^\pi_\mu-1)} \text{KL}_{\delta_\mu^{\pi}}(\pi, \pi_{k})
\;\le\;
\left(1-\frac{1}{\vartheta^\pi_\mu}\right)^{\!k}
\left(
U^\pi_0 + \frac{1}{\eta_0(\vartheta^\pi_\mu-1)} \text{KL}_{\delta_\mu^{\pi}}(\pi, \pi_{0})
\right).
\]
\end{proof}

\subsection{Proof of Corollary \ref{cor:ngd_linear}}
\begin{proof}
In the previous proof of Theorem \ref{thm:ngd_linear}, let $\pi =\pi^\star$ and use Corollary \ref{cor:pdl_reward} instead of Corollary \ref{cor:pdl}. Then, we obtain the desired result.
\end{proof}

\section{Omitted proofs in Section \ref{sec:SNPG}}\label{appen:D}
\subsection{Proof of Theorem \ref{thm:PE}}
First, we introduce Hoeffding inequality for sample complexity analysis. 
\begin{fact}[Hoeffding inequality]
      Let $X_1, \dots, X_n$ are independent random variables such that $ a_i \le X_i  \le b_i$ for all $i$. Then
\[ \mathbb{P}\left( \left|\frac{1}{n}\sum^n_{i=1} X_i- \expec \left[\frac{1}{n}\sum^n_{i=1} X_i\right]\right| \ge \epsilon\right) \le 2 \text{exp} \left(-\frac{2n^2 \epsilon^2}{\sum^n_{i=1}(b_i-a_i)^2}\right).\]
\end{fact}

\begin{proof}
Since
$\tilde{Q}_j(s_0,a_0)=\frac{1}{N}\sum^N_{j=1}\sum^{H}_{i=0}r_j(s_i,a_i),  
$
and  
\begin{align*}
   \expec_\pi \left[\frac{1}{N}\sum^N_{j=1} \sum^{H}_{i=0}r_j(s_i,a_i)\right] =\expec_\pi \left[\sum^{H}_{i=0}r_j(s_i,a_i)\right],
\end{align*}
we have
\begin{align*}
    \left|Q^\pi(s_0,a_0) -\expec_\pi \left[\sum^{H}_{i=0}r_j(s_i,a_i)\right]\right|&= \left| (P V^\pi+r)(s_0,a_0)-\left(P\left(\sum^{H-1}_{i=0}(T^\pi)^ir^\pi\right)+r\right)(s_0,a_0) \right| 
    \\&=\left|P\left((I-T^\pi)^{-1}-\sum^{H-1}_{i=0}(T^\pi)^i\right)r^\pi(s_0,a_0)\right|
   \\&=\left|P(I-T^\pi)^{-1}(T^\pi)^Hr^\pi(s_0,a_0)\right|
    \\&
    \le C_\pi R (1/2)^{\lfloor H/t_{\frac{1}{2},\pi}\rfloor }.
\end{align*}
 On the other hand, by Hoeffding inequality with $N= \frac{2(H+1)^2R^2}{ (\epsilon')^2}\log(\frac{2}{\delta})$, we get
\[\text{Prob} \left(\left|\tilde{Q}^\pi(s_0,a_0) - \expec_\pi \left[\frac{1}{N}\sum^N_{j=1} \sum^{H}_{i=0}r_j(s_i,a_i)\right]\right|  \le \epsilon' \right) \ge 1-\delta.\]
and by union bound over $(s_0,a_0) \in \cS \times \cA$, we have
$\infn{\tilde{Q}^\pi-\expec \left[ \tilde{Q}^\pi \right]}< \epsilon'$ with  $N= \frac{2(H+1)^2R^2|\cS||\cA|}{ (\epsilon')^2}\log(\frac{2|\cS||\cA|}{\delta})$.
Let $H =t_{\frac{1}{2},\pi} \log \left( 2C_\alpha R/\epsilon \right)$ and $\epsilon'=\epsilon/2$,. Then, by triangular inequality $\infn{Q^\pi-\tilde{Q}\pi} \le \infn{Q^\pi-\expec \left[\tilde{Q}^\pi\right]} +\infn{\expec \left[\tilde{Q}^\pi\right]-\tilde{Q}^\pi} \le \epsilon$ with sample complexity of  
\[NH|\cS||\cA|=\frac{8 t_{\frac{1}{2},\pi}^3 R^2 |\cS||\cA|}{\epsilon^2}  \log^3 \left( 2C_\pi R/\epsilon \right)\log\left(\frac{2|\cS||\cA|}{\delta}\right).\]
\end{proof}

\subsection{Proof of Theorem \ref{thm:SNPG} }
First, we consider \emph{inexact natural policy gradient}
\begin{align*}
\pi_{k+1}(\cdot \,|\, s)
=
\argmin_{p \in \cM(\mathcal{A})}
\left\{
-\eta_k \sum_{a\in \cS} \Tilde{Q}^{\pi_k}(s,a), p(a)
+ \text{KL}(p,\pi_k(\cdot \,|\, s))
\right\},
\qquad \forall s \in \mathcal{S},
\end{align*}
where $\Tilde{Q}^{\pi_k}$ is an inexact evaluation of $Q^{\pi_k}$.
We first study the convergence properties of inexact natural policy gradient under the following
assumption.

\begin{assumption}\label{assump:inexact}
The sequence of inexact evaluations $\left\{\Tilde{Q}^{\pi_k}\right\}^\infty_{k=1}$ satisfy 
$\left\|
\Tilde{Q}^{\pi_k} - Q^{\pi_k}
\right\|_\infty
\le \epsilon.
$
\end{assumption}
We first prove the key Lemma, counterpart of Lemma \ref{lem:ngd_descent}.
\begin{lemma}\label{inexact:mon}
Under Assumption~\ref{assump_total} and \ref{assump:inexact}, for given $\pi_0\in \Pi_+$ and $\mu$ with full support, the inexact natural policy gradient with step size $\eta_k>0$ generates a sequence of policies $\{\pi_{k}\}^\infty_{k=1}$ satisfying
\begin{align*}
\sum_{s\in\cS}\Tilde{Q}^{\pi_k}(s,a)(\pi_{k}(\cdot \,|\, s)-\pi_{k+1}(\cdot \,|\, s)) 
\le 0,
\qquad \forall s \in \mathcal{S},
\end{align*}
and
\begin{align*}
V_\mu^{\pi_k}-V_\mu^{\pi_{k+1}} 
\le
2  C_{\pi_k}\epsilon .
\end{align*}
\end{lemma}

\begin{proof}
The first inequality directly follows from the same arguments as in Lemma \ref{lem:7'}. By Corollary \ref{cor:pdl},
\begin{align*}
 V_\mu^{\pi_k}- V_\mu^{\pi_{k+1}} 
&=
\sum_{s \in \cS}\delta_\mu^{\pi_k}(s)\sum_{a\in \cA} Q^{\pi_k}(s,a)(\pi_{k}(a\,|\,s) - \pi_{k+1}(a\,|\,s))
\\
&=
\sum_{s \in \cS}\delta_\mu^{\pi_k}(s)\sum_{a\in \cA} \Tilde{Q}^{\pi_k}(s,a)(\pi_{k}(a\,|\,s) - \pi_{k+1}(a\,|\,s))
\\
&\quad
+
\sum_{s \in \cS}\delta_\mu^{\pi_k}(s)\sum_{a\in \cA} (Q^{\pi_k}(s,a)-\Tilde{Q}^{\pi_k}(s,a))(\pi_{k}(a\,|\,s) - \pi_{k+1}(a\,|\,s)).
\end{align*}

For the second term,  
\begin{align*}
\sum_{a\in \cA} (Q^{\pi_k}(s,a)-\Tilde{Q}^{\pi_k}(s,a))(\pi_{k}(a\,|\,s) - \pi_{k+1}(a\,|\,s))
&\le
\left\|
Q^{\pi_k} - \Tilde{Q}^{\pi_k}
\right\|_\infty
\left\|
\pi_{k}(a\,|\,s) - \pi_{k+1}(a\,|\,s)
\right\|_1
\nonumber\\
&\le
2 \left\|
\Tilde{Q}^{\pi_k} - Q^{\pi_k}
\right\|_\infty
\nonumber\\
&\le
2\epsilon ,
\end{align*}
where the first inequality is from Holder's inequality and the last inequality follows from Assumption \ref{assump:inexact}. Since first term is non-negatvie, we get the desired result. 
\end{proof}

Following will be used in the proof of Theorem \ref{thm:inexact}. 
\begin{fact}\label{fact:seq}[\cite{xiao2022convergence}]
Suppose $0 < \alpha < 1$, $b > 0$, and a nonnegative sequence
$\{a_k\}$ satisfies
\begin{align*}
a_{k+1} \le \alpha a_k + b,
\qquad \forall k \ge 0 .
\end{align*}
Then for all $k \ge 0$,
\begin{align*}
a_k \le \alpha^k a_0 + \frac{b}{1 - \alpha}.
\end{align*}
\end{fact}

Now, we present the convergence result of inexact natural policy gradient. 
\begin{theorem}\label{thm:inexact}
Under Assumption~\ref{assump_total} and \ref{assump:inexact}, for any $\pi_0, \pi \in \Pi_+$ and given $\mu$ with full support, the inexact natural policy gradient with constant step size $\eta>0$ generates a sequence of policies $\{\pi_{k}\}^\infty_{k=1}$ satisfying
\[V_\mu^\pi-V_\mu^{\pi_{k}}  
\;\le\; \frac{1}{k+1}\!\left( \frac{\text{KL}_{\delta_\mu^{\pi}}(\pi, \pi_0)}{\eta}
+ 2C_\pi\infn{V^\star_{+}} \right)+2 C_\pi(\vartheta^\pi_\mu+1) \epsilon+ \frac{\sum^{k-1}_{i=0}(i+1)C_{\pi_i}}{k+1}\epsilon.\]
and with adaptive step size $\eta_{k+1}(\vartheta^\pi_\mu - 1) \ge \eta_k \vartheta^\pi_\mu>0$  generates a sequence of policies $\{\pi_{k}\}^\infty_{k=1}$ satisfying 
\begin{align*}
V_\mu^\pi-V_\mu^{\pi_k}
\le
\left(1 - \frac{1}{\vartheta^\pi_\mu}\right)^k
\left(
V_\mu^\pi-V_\mu^{\pi_0}
+
\frac{\text{KL}_{\delta^\pi_\mu}(\pi, \pi_{0})}{\eta_0(\vartheta^\pi_\mu - 1)} 
\right)
+
4\vartheta^\pi_\mu C_k \epsilon .
\end{align*}
where $C_k= \max_{0\le i \le k}\{C_{\pi_{i}},C_\pi\}$.
\end{theorem}

\begin{proof}
Following the same arguments as in proof of Theorem \ref{thm:ngd},
\begin{align*}
    &\sum_{s \in \cS}\delta_\mu^{\pi}(s)\sum_{a\in \cA} \Tilde{Q}^{\pi_k}(s,a)(\pi_{k}(a\,|\,s) - \pi_{k+1}(a\,|\,s))
+ \sum_{s \in \cS}\delta_\mu^{\pi}(s)\sum_{a\in \cA} \Tilde{Q}^{\pi_k}(s,a)(\pi(a\,|\,s) - \pi_{k}(a\,|\,s))\\ & \le\; \frac{1}{\eta_k} \text{KL}_{\delta^\pi_\mu}(\pi, \pi_k) - \frac{1}{\eta_k} \text{KL}_{\delta^\pi_\mu}(\pi, \pi_{k+1}) .
\end{align*}

For the second term,
\begin{align*}
&\sum_{s \in \cS}\delta^\pi_\mu(s)\sum_{a\in \cA} \Tilde{Q}^{\pi_k}(s,a)(\pi(a\,|\,s) - \pi_{k}(a\,|\,s))
\\&\ge  \sum_{s \in \cS}\delta^{\pi}_\mu(s) \sum_{a\in \cA} Q^{\pi_k}(s,a)(\pi(a\,|\,s) - \pi_{k}(a\,|\,s)) \\&+  \sum_{s \in \cS}\delta^{\pi}_\mu(s) \sum_{a\in \cA} (\Tilde{Q}^{\pi_k}(s,a)-Q^{\pi_k}(s,a))(\pi(a\,|\,s) - \pi_{k}(a\,|\,s))
\\&\ge V_\mu^{\pi} - V_\mu^{\pi_k}   - 2  C_\pi\epsilon
\end{align*}

First, consider constant step size. For the first term, 
\[
\begin{aligned}
&\|\delta_\mu^{\pi}\|_1\sum_{s \in \cS}\frac{\delta_\mu^{\pi}(s)}{\|\delta_\mu^{\pi}\|_1}\sum_{a\in \cA} Q^{\pi_k}(s,a)(\pi_{k}(a\,|\,s) - \pi_{k+1}(a\,|\,s))\\&+  \sum_{s \in \cS}\delta^{\pi}_\mu(s) \sum_{a\in \cA} (\Tilde{Q}^{\pi_k}(s,a)-Q^{\pi_k}(s,a))(\pi(a\,|\,s) - \pi_{k}(a\,|\,s))
\\&\ge \|\delta_\mu^{\pi}\|_1 \sum_{s \in \cS}\delta_{(\delta_\mu^{\pi})'}^{\pi_{k+1}}(s)\sum_{a\in \cA} Q^{\pi_k}(s,a)(\pi_{k}(a\,|\,s) - \pi_{k+1}(a\,|\,s))- 2  C_\pi\epsilon 
\\
&= C_\pi(V_{(\delta_\mu^{\pi})'}^{\pi_{k}}-V_{(\delta_\mu^{\pi})'}^{\pi_{k+1}})- 2  C_\pi\epsilon,
\end{aligned}
\]

we have
\[
V_\mu^{\pi}-V_\mu^{\pi_{k}} 
\;\le\;\frac{1}{\eta} \text{KL}_{\delta_\mu^{\pi}}(\pi, \pi_k) - \frac{1}{\eta} \text{KL}_{\delta_\mu^{\pi}}(\pi, \pi_{k+1})
+ C_\pi (V_{(\delta_\mu^{\pi})'}^{\pi_{k}}-V_{(\delta_\mu^{\pi})'}^{\pi_{k}})+2C_\pi(\vartheta^\pi_\mu+1)\epsilon.
\]
and summing over \(k\) gives
\[
\sum_{i=0}^{k}\big(V_\mu^{\pi}- V_\mu^{\pi_{i}}  \big)
\;\le\;
\frac{1}{\eta} \text{KL}_{\delta_\mu^{\pi}}(\pi, \pi_0)  - \frac{1}{\eta} \text{KL}_{\delta_\mu^{\pi}}(\pi, \pi_{k}) 
+ C_\pi(V_{(\delta_\mu^{\pi})'}^{\pi_{k+1}}-V_{(\delta_\mu^{\pi})'}^{\pi_{0}})+2(k+1) C_\pi(\vartheta^\pi_\mu+1)\epsilon.
\]

By Lemma \ref{inexact:mon}, we conclude that
\[
V_\mu^\pi-V_\mu^{\pi_{k}}  
\;\le\; \frac{1}{k+1}\!\left( \frac{\text{KL}_{\delta_\mu^{\pi}}(\pi, \pi_0)}{\eta}
+ 2C_\pi\infn{V^\star_{+}} \right)+2 C_\pi(\vartheta^\pi_\mu+1) \epsilon+ \frac{\sum^{k-1}_{i=0}(i+1)C_{\pi_i}}{k+1}\epsilon.
\]

Second, with adaptive step size, for the first term,
\begin{align*}
&\sum_{s \in \cS}\delta^\pi_\mu(s)\sum_{a\in \cA} \Tilde{Q}^{\pi_k}(s,a)(\pi_{k}(a\,|\,s) - \pi_{k+1}(a\,|\,s))
 \\
&\ge \vartheta^\pi_\mu \sum_{s \in \cS}\delta^{\pi_{k+1}}_\mu(s) \sum_{a\in \cA} \Tilde{Q}^{\pi_k}(s,a)(\pi_{k}(a\,|\,s) - \pi_{k+1}(a\,|\,s)) \\
&\ge \vartheta^\pi_\mu \sum_{s \in \cS}\delta^{\pi_{k+1}}_\mu(s) \sum_{a\in \cA} Q^{\pi_k}(s,a)(\pi_{k}(a\,|\,s) - \pi_{k+1}(a\,|\,s)) \\&+  \vartheta^\pi_\mu \sum_{s \in \cS}\delta^{\pi_{k+1}}_\mu(s) \sum_{a\in \cA} (\Tilde{Q}^{\pi_k}(s,a)-Q^{\pi_k}(s,a))(\pi_{k}(a\,|\,s) - \pi_{k+1}(a\,|\,s))
\\&\ge \vartheta^\pi_\mu (V_\mu^{\pi_{k}} - V_\mu^{\pi_{k+1}})   - 2\vartheta^\pi_\mu C_{\pi_{k+1}} \epsilon
\end{align*}
where the last inequality follows from the Corollary \ref{cor:pdl} and Assumption \ref{assump:inexact}. Thus,
we have 
\begin{equation*}
\vartheta^\pi_\mu(U_{k+1} - U_k) + U_k
\le
\frac{1}{\eta_k} \text{KL}_{\delta^\pi_\mu}(\pi, \pi_k)
-
\frac{1}{\eta_k} \text{KL}_{\delta^\pi_\mu}(\pi, \pi_{k+1})
+
2 (C_{\pi_{k+1}}\vartheta^\pi_\mu+C_\pi)\epsilon.
\end{equation*}
where $U_k = V_\mu^\pi-V_\mu^{\pi_k} $. Dividing both sides by $\vartheta^\pi_\mu$ and rearranging terms yields
\begin{equation*}
U_{k+1}
+
\frac{1}{\eta_k \vartheta^\pi_\mu} \text{KL}_{\delta^\pi_\mu}(\pi, \pi_{k+1})
\le
\left(1 - \frac{1}{\vartheta^\pi_\mu}\right)
\left(
U_k
+
\frac{1}{\eta_k(\vartheta^\pi_\mu - 1)} \text{KL}_{\delta^\pi_\mu}(\pi, \pi_{k})
\right)
+
4 C_{k+1}\epsilon.
\end{equation*}
where $C_{k}= \max_{0\le i \le k}\{C_{\pi_{i}},C_\pi\}$. Since the step sizes satisfy
$\eta_{k+1}(\vartheta^\pi_\mu - 1) \ge \eta_k \vartheta^\pi_\mu$,
\begin{equation*}
U_{k+1}
+
\frac{1}{\eta_{k+1} (\vartheta^\pi_\mu-1)}  \text{KL}_{\delta^\pi_\mu}(\pi, \pi_{k+1})
\le
\left(1 - \frac{1}{\vartheta^\pi_\mu}\right)
\left(
U_k
+
\frac{1}{\eta_k(\vartheta^\pi_\mu - 1)}  \text{KL}_{\delta^\pi_\mu}(\pi, \pi_{k})
\right)
+
4C_{k+1}\epsilon.
\end{equation*}
Finally, Fact \ref{fact:seq} with
 $a_k = U_k + \frac{1}{\eta_k(\vartheta^\pi_\mu - 1)} \text{KL}_{\delta^\pi_\mu}(\pi, \pi_{k}),
\alpha = 1 - \frac{1}{\vartheta^\pi_\mu},
b = 4 C_k \epsilon,
$ leads to
\[
U_k
\le
\left(1 - \frac{1}{\vartheta^\pi_\mu}\right)^k
\left(
U_0
+
\frac{1}{\eta_0(\vartheta^\pi_\mu - 1)} \text{KL}_{\delta^\pi_\mu}(\pi, \pi_{0})
\right)
+
4\vartheta^\pi_\mu C_k\epsilon .
\]

\end{proof}
Now we are ready to prove Theorem \ref{thm:SNPG}.
\begin{proof}
 First, consider constant step size with $K= 2\left( \frac{\text{KL}_{\delta_\mu^{\pi}}(\pi, \pi_0)}{\eta}
+ 2C_\pi\infn{V^\star_{+}} \right)/{\epsilon'}$. By Theorem  \ref{thm:PE} and union bound over $0\le k \le K-1$, we have $\infn{Q^{\pi_k}-\hat{Q}^{\pi_k} } \le   \frac{\epsilon'}{2(4 \vartheta^\pi_\mu C_\pi  +KC_\pi)}$  with sample complexity 
 \begin{align*}
    KHN|\cS||\cA|=
& \mathcal{O}\left(\left(\frac{ t_{\frac{1}{2}}^3 (\vartheta^\pi_\mu)^2 R^2 C_\pi^5\left( \frac{\text{KL}_{\delta_\mu^{\pi}}(\pi, \pi_0)}{\eta}\right)^3
\infn{V^\star_{+}}^3 |\cS||\cA|}{(\epsilon')^5}  \right) \log^3 \left( 4C^2_K R(4 \vartheta^\pi_\mu   +K)/\epsilon' \right)\log\left(\frac{2K|\cS||\cA|}{\delta}\right)\right)\end{align*} 
for all $0 \le k \le K-1$.
Then, by Theorem \ref{thm:inexact},  we have $V_\mu^\pi-V_\mu^{\pi_k}\le \frac{\epsilon}{2}+\frac{\epsilon}{2} \le \epsilon.$ 

For adaptive step size $\eta_{k+1}(\vartheta^\pi_\mu  - 1) \ge \eta_k\vartheta^\pi_\mu $ with $K = \log \left(2\left(
V_\mu^\pi-V_\mu^{\pi_0}
+
\frac{\text{KL}_{\delta^\pi_\mu}(\pi, \pi_{0})}{\eta_0(\vartheta^\pi_\mu - 1)} \right)/\epsilon
\right) $, By Theorem  \ref{thm:PE}, $\infn{Q^\pi-\hat{Q}^\pi } \le   \frac{\epsilon'}{8 \vartheta^\pi_\mu C_k}$  with sample complexity
\[KNH|\cS||\cA|=\mathcal{O}\left(\frac{t_{\frac{1}{2}}^3 (\vartheta^\pi_\mu)^2C_K^2 R^2 |\cS||\cA|}{(\epsilon')^2}  \log^3 \left( 2C R\vartheta^\pi_\mu\infn{\delta^\pi_\mu}/\epsilon' \right) K\log\left(\frac{2K|\cS||\cA|}{\delta}\right)\right)\]
for all $0 \le k \le K-1$ .
Then, by Theorem \ref{thm:inexact},  we have $V_\mu^\pi-V_\mu^{\pi_k}\le \frac{\epsilon}{2}+\frac{\epsilon}{2} \le \epsilon.$ 
\end{proof}

\subsection{Proof of Corollary \ref{cor:sngd}}\label{appen:Cor}

\begin{proof}
In the previous proof of Theorem \ref{thm:inexact}, let $\pi =\pi^\star$, and use Corollary \ref{cor:pdl_reward} instead of Corollary \ref{cor:pdl} and the fact that $V^\pi_{\mu} \ge 0$. Then, we obtain the desired result.
\end{proof}

\section{Environments and additional experiment}\label{appen:experiments}
\subsection{Environments}
\paragraph{Frozenlake}
The Frozenlake environment is a \(4 \times 4\) grid world consisting of a goal state, three terminal states, and frozen states. The agent has four actions: \(\text{UP}(0)\), \(\text{RIGHT}(1)\), \(\text{DOWN}(2)\), and \(\text{LEFT}(3)\). If the agent is in a frozen state, the environment executes the left/forward/right variants of the intended action with probabilities \(1/3\) each. The agent receives a reward of \(1\) only if it reaches the goal state. If the agent attempts to move off the grid, it stays in place.

\paragraph{Cliffwalk} The Cliffwalk is a $3 \times 7$ grid world. The bottom right corner is the terminal goal state, and the states in the third row, except for the two end states, are terminal states. The agent has four actions: UP (0), RIGHT (1), DOWN (2), and LEFT (3). The MDP is deterministic, and the agent receives a reward of $1$ only when it reaches the goal state.  If the agent attempts to move off the grid, it stays in place.

\subsection{Experiment on Pathological MDP}

We run the projected policy gradient algorithm with $\alpha \in \{0.1, 0.05, 0.01\}$ and the natural policy gradient algorithm with both constant and adaptive step sizes. All algorithms are implemented using the transient policy gradient with the transient visitation measure. For Pathological MDP in Figure \ref{fig:example}, we use $\{0.1\cdot 1.01^k\}^\infty_{k=0}$  for the adaptive step size of natural policy gradient, where $k$ is the number of iterations, and $0.1$ for others.

The results are shown in Figure~\ref{fig:pathology}. As the figure shows, the policy error \(V^\star_{\mu}-V^{\pi_k}_\mu\) remains strictly positive due to a discontinuity at optimal policy. \(V^\star(s_1)=0>-1=V^\pi(s_1)\) for any nonoptimal policy \(\pi\) as discussed in Section~\ref{s:pathology}. Thus, the iterates produced by policy gradient methods do not converge to the optimal value \(V^\star_\mu\). 

The natural policy gradient with adaptive step size still exhibits the fastest convergence rate among the algorithms, as the guaranteed linear rate of Theorem~\ref{thm:ngd_linear} predicts. Note that both natural policy gradients converge to $V^\star_{+,\mu}$ while the projected policy gradient converges to $V_{\mu}^{\pi^\star_\alpha}$ for each $\alpha$, and smaller $\alpha$ makes projected policy gradient converge closer to $V_{+,\mu}^{\star}$ since $V_\mu^{\pi^\star_\alpha}$ increases monotonically to $V_{+,\mu}^\star$ as $\alpha \to 0$.

\section{Impact Statement}
Our work focuses on the theoretical aspects of reinforcement learning. There are no negative social
impacts that we anticipate from our theoretical results.

\begin{figure}
    \centering
\includegraphics[width=0.6\linewidth]{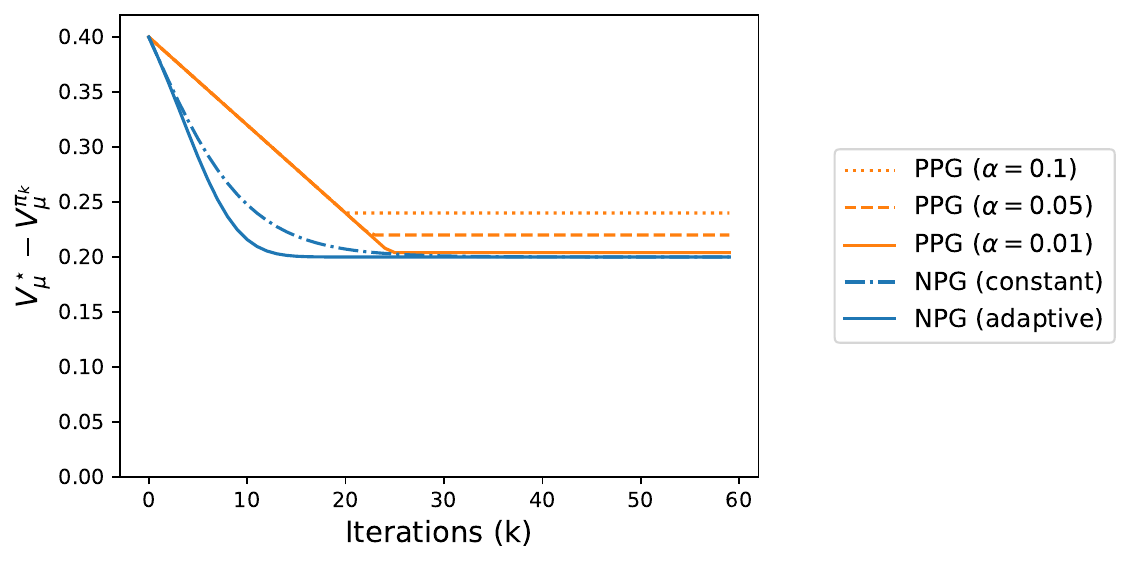}
    \caption{Comparison of projected policy gradient (PPG) and natural policy gradient (NPG) algorithms in Pathological MDP. Due to a discontinuity at optimal policy, $V^\star_\mu-V_\mu^{\pi_{k}} \ge V^\star_\mu - V^\star_{+,\mu} >0$. 
    }
    \label{fig:pathology}
\end{figure}


\end{document}